\documentclass{pkumelon}

\usepackage[utf8]{inputenc}
\usepackage{url}
\usepackage{amsfonts}
\usepackage{nicefrac}
\usepackage{xspace}
\usepackage{amsmath}
\usepackage{amsthm}
\usepackage{amssymb}
\usepackage{multicol}
\usepackage{makecell}
\usepackage{enumitem}
\usepackage{wrapfig}
\usepackage{dsfont}
\usepackage{epstopdf}
\usepackage{colortbl}
\usepackage{algorithm}
\usepackage{threeparttable}
\usepackage{mathtools}
\usepackage{float}
\usepackage{mathrsfs}
\usepackage{algpseudocode}
\usepackage{tabularray}
\usepackage{quiver}
\usepackage{diagbox}
\usepackage{needspace}

\renewcommand{\titlefont}{\fontsize{17}{20}\selectfont}
\renewcommand\affiliationformat[2][]{{\affiliationfont {\sffamily $^{#1}$#2}}}


\makeatletter
\renewcommand\authorformat[2][]{%
  \authorfont {\sffamily
    \mbox{\ifstrempty{#1}{#2}{#2$^{#1}$}}%
  }%
}
\makeatother

\newtcolorbox{limitationbox}{
    enhanced,
    breakable,
    colback=boxbg,
    frame hidden,
    boxrule=0pt,
    borderline west={1.2pt}{0pt}{reaslab},
    sharp corners,
    left=6pt,
    right=2pt,
    top=3pt,
    bottom=3pt,
    before skip=5pt,
    after skip=5pt
}
\newenvironment{limbox}{\begin{limitationbox}}{\end{limitationbox}}

\restylefloat{algorithm} 
\theoremstyle{plain}
\newtheorem{assumption}{Assumption}

\newtheorem{theorem}{Theorem}
\newtheorem{remark}{Remark}
\newtheorem{lemma}{Lemma}
\newtheorem{example}{Example}

\newtheorem{corollary}{Corollary}

\newcommand{\R}{\mathbb{R}}
\newcommand{\E}{\mathbb{E}}
\DeclareMathOperator{\tr}{tr}
\DeclareMathOperator{\argmin}{argmin}

\newcommand{\Phiobj}{\Phi}

\newcommand{\oursname}{\textnormal{\textbf{BROS}}}
\newcommand{\ours}{\oursname\ }
\newcommand{\ourslast}{\oursname}

\title{BROS: Bias-Corrected Randomized Subspaces for Memory-Efficient Single-Loop Bilevel Optimization}

\author[1,*]{Hengrui Zhang}
\author[2,*]{Boao Kong}
\author[2,*]{Engao Zhang}
\author[2,\P]{Kun Yuan}

\affiliation[1]{Sichuan University}
\affiliation[2]{Peking University}
\contribution[*]{Equal Contribution}
\contribution[\P]{Corresponding author}
\checkdata[Emails]{\email{2022141210007@stu.scu.edu.cn}, \email{kongboao@stu.pku.edu.cn}, \email{2300012403@stu.pku.edu.cn}, \email{kunyuan@pku.edu.cn}}

\abstract{
Stochastic bilevel optimization (SBO) has become a standard framework for hyperparameter learning, data reweighting, representation learning, and data-mixture optimization in deep learning. Existing exact single-loop SBO methods and memory-efficient surrogate SBO methods either create severe memory pressure for large lower-level neural networks or lack competitive convergence guarantees under standard assumptions. In this paper, we propose \ourslast, a memory-efficient single-loop SBO method with the same convergence rate order as exact single-loop SBO methods. \ours performs lower and auxiliary updates in randomized subspaces with a Rademacher bi-probe correction that recovers an unbiased Hessian-action estimator. We prove that \ours preserves the $\mathcal O(\varepsilon^{-2})$ sample complexity of MA-SOBA for finding an $\varepsilon$-stationary point under only standard assumptions. Experiments on hyper-data cleaning, data-mixture learning, hyper-representation learning, and ViT sample reweighting show that \ours reduces peak memory by up to $44.9\%$ while closely matching full-space baseline performance.
}

\begin{document}

\maketitle

\section{Introduction}
\label{sec:intro}

Stochastic bilevel optimization (SBO) has become an increasingly important framework in deep learning, with applications including hyper-cleaning and noisy-label reweighting~\citep{hao2024bilevel,yang2024tuning,tu2023learning,zheng2021meta}, hyperparameter optimization and meta-learning~\citep{franceschi2018bilevel,shaban2019truncated,chen2024optimal}, hyper-representation learning~\citep{huang2024optimal,yang2025first}, and data reweighting or data-selection for large-scale training~\citep{fan2023doge,xie2023doremi,shen2024seal,pan2025scalebio,wang2026draw}. Across these applications, the upper-level variable can take a variety of forms. The lower-level problem, however, consistently trains a neural network with multilayer matrix parameters. Based on this observation, this paper studies a more general multilayer matrix-parameterized SBO model, which covers the vector formulation used in previous works~\citep{ghadimi2018approximation,dagreou2022framework,chen2024optimal,chu2024spaba}:
\begin{subequations} \label{eq:intro-problem}
\begin{align}
  &\min_{x\in\mathbb{R}^{d_x}}\; \Phi(x) := f\bigl(x,\boldsymbol{Y}^*(x)\bigr), \quad \boldsymbol{Y}^*(x) \in \argmin_{\boldsymbol{Y}\in\mathcal{Y}} g(x,\boldsymbol{Y}), \label{eq:multi_layer_bo}\\
  &\text{where}\;\; \mathcal{Y}:=\Bigl\{\boldsymbol{Y}=(Y_1,\ldots,Y_L)\,:\, Y_\ell\in\mathbb R^{m_\ell\times n_\ell},\ \ell\in[L]\Bigr\}. \label{eq:domain_y}
\end{align}
\end{subequations}
Here $x$ denotes the upper-level variable, and $\boldsymbol{Y}$ collects the trainable parameters of the lower-level network. We work in the stochastic setting where the objectives are population risks:
\begin{equation*}
f(x,\boldsymbol Y)=\mathbb E_{\omega_f\sim\mathcal D_f}[\ell_f(x,\boldsymbol Y;\omega_f)], \quad g(x,\boldsymbol Y)=\mathbb E_{\omega_g\sim\mathcal D_g}[\ell_g(x,\boldsymbol Y;\omega_g)],
\end{equation*}
and the algorithm accesses these objectives through stochastic oracles.

\textbf{Full-space hypergradient methods. }A large body of algorithms has been developed for SBO. A common route is to estimate the hypergradient $\nabla \Phi(x)$ either by approximating the inverse Hessian or by maintaining an auxiliary linear-system variable updated through Hessian-vector products (HVPs) and Jacobian-vector products (JVPs)~\citep{ghadimi2018approximation,khanduri2021near}. This route includes early stochastic bilevel methods~\citep{ghadimi2018approximation,hong2023two,ji2021bilevel} and recent single-loop methods such as SOBA/SABA~\citep{dagreou2022framework} and MA-SOBA~\citep{chen2024optimal}, which serve as the main baselines for our theoretical convergence rate comparison. Other variance-reduced or momentum-based baselines such as SPABA~\citep{chu2024spaba}, SUSTAIN~\citep{khanduri2021near}, and MRBO~\citep{yang2021provably} improve sample complexity but require finite-sum structure or stronger oracle assumptions, so they are not the primary convergence-rate baselines for our comparison here. These methods provide a clean hypergradient baseline, but face a severe limitation:

\begin{limbox}
\textbf{Limitation 1. }\emph{Full-space hypergradient methods remain memory-heavy for large networks.}
\end{limbox}

Although these methods need not explicitly form the full Hessian or Jacobian, their recursions still rely on full-space second-order information and auxiliary linear-system variables. When the lower-level variable is a large neural network with hundreds of millions or billions of parameters, the corresponding Hessian and Jacobian matrices are impossible to form or store, and even HVP-based full-space auxiliary recursions can impose prohibitive memory and computational costs.

\textbf{Memory-efficient surrogate methods. } Another way to reduce the cost is to move away from the standard hypergradient update. Penalty methods~\citep{mehra2021penalty,kwon2023penalty} replace the original bilevel problem with a penalty reformulation. Value-function-based fully first-order methods~\citep{kwon2023fully} avoid HVP and JVP. FdeHBO~\citep{yang2023achieving} uses finite differences to avoid explicit HVP and JVP. Zeroth-order methods~\citep{shirkavand2025bilevel} further reduce the need for full derivative information. These approaches can substantially reduce or avoid explicit Hessian/Jacobian-vector computations, but this scalability typically comes with one of three costs: no exact recovery of the original bilevel solution, slower stochastic complexity, or additional structural/regularity assumptions:

\begin{limbox}
\textbf{Limitation 2. }\emph{Surrogate methods theoretically lose the exact hypergradient baseline.}
\end{limbox}

Penalty methods optimize a finite-penalty surrogate rather than the original problem, with exact recovery requiring a limiting penalty regime and additional regularity conditions~\citep{mehra2021penalty,kwon2023penalty}. Value-function-based fully first-order methods typically have worse stochastic complexity orders than standard single-loop hypergradient methods~\citep{kwon2023fully}. Finite-difference methods such as FdeHBO~\citep{yang2023achieving} rely on an auxiliary projection and higher-order smoothness assumptions to control finite-difference errors. Zeroth-order bilevel methods often require many more function evaluations and stronger structural assumptions, such as low effective rank~\citep{shirkavand2025bilevel}. Thus, these approaches do not answer the question we study here: \emph{can one reduce memory usage while preserving the standard single-loop hypergradient update and its $\mathcal O(\varepsilon^{-2})$ sample complexity for finding an $\varepsilon$-stationary point?}

\textbf{Randomized-subspace large-model training. } Recent progress in single-level large-model training suggests a complementary path. Several studies have observed and exploited low-rank or low-dimensional structure in large-model gradients and updates, leading to memory-efficient methods that store and optimize only subspace variables. For example, LoRA~\citep{hu2022lora} constrains trainable updates to low-rank adapters, GaLore~\citep{zhao2024galore} projects gradient matrices into low-rank subspaces to reduce optimizer-state memory, and RSO~\citep{chen2025memory} decomposes LLM training into a sequence of lower-dimensional subproblems. Importantly for our setting, RSO shows that randomized subspace training can reduce activation and gradient memory while retaining the same convergence order as full-space training in single-level optimization~\citep{chen2025memory}. Motivated by this, we ask whether the lower-level and auxiliary-level training problem in SBO can also be solved in randomized subspaces.

Unfortunately, applying randomized subspace methods to SBO is highly nontrivial: unlike first-order RSO in single-level optimization, exact bilevel methods must compress and recover second-order information for the hypergradient. One may think of using an RSN-like strategy~\citep{gower2019rsn}: compute Hessian information inside the sampled subspace and lift it back to the original full space. This is sufficient for randomized Newton methods, whose analysis treats the sketched Newton step itself as a descent direction. However, in exact bilevel optimization, the Hessian determines the auxiliary linear-system used in the implicit-gradient formula. A biased lifted Hessian makes the auxiliary variable track the wrong system and biases the hypergradient.

To overcome this, we propose \ours (\textbf{B}ilevel optimization via \textbf{R}and\textbf{O}mized \textbf{S}ubspace), a memory-efficient single-loop stochastic bilevel method for the multilayer matrix-parameterized problem \eqref{eq:intro-problem}. \ours introduces a Rademacher-probe-based correction that recovers an unbiased hypergradient estimator while retaining the memory advantage of randomized subspace training and the convergence rate order of exact single-loop methods. The core contributions are summarized as follows:

\textbf{C1. Memory-efficient randomized-subspace bilevel recursion.} We develop a single-loop stochastic bilevel method that decomposes the lower-level neural-network training recursion into randomized lower-dimensional matrix subproblems. Since each subproblem operates in a reduced subspace, \ours lowers the dimensionality and memory costs of the lower gradients, auxiliary directions, activations, and HVP/JVP computations required by the lower-level recursion.

\textbf{C2. Unbiased hypergradient recovery with MA-SOBA-order convergence.} We identify the projection-induced bias in the lifted auxiliary Hessian action and remove it using a layerwise Rademacher bi-probe correction. The resulting estimator is conditionally unbiased for the full multilayer Hessian action without imposing additional structural assumptions. Under standard assumptions in SBO, this correction allows \ours to match the $\mathcal O(\varepsilon^{-2})$ sample complexity of the MA-SOBA backbone for finding an $\varepsilon$-stationary point.

\textbf{C3. Empirical validation on neural and language-model bilevel tasks.} We evaluate \ours on MNIST hyper-data cleaning, data-mixture learning with $280$M GPT-style proxy/main models, CIFAR-10 hyper-representation learning, and CIFAR-100 ViT sample reweighting. \ours closely matches full-space MA-SOBA on hyper-data cleaning and data-mixture learning, while reducing proxy-training peak memory by about $45\%$ relative to MA-SOBA and about $27\%$ relative to Penalty.

Table~\ref{tab:intro-positioning} summarizes the positioning of \ours by separating the oracle route, assumption regime, convergence order, and whether the lower-level and auxiliary computations are full-space. A detailed discussion of related work is provided in Appendix~\ref{app:related_work}.

\begin{table*}[t!]
\centering
\small
\setlength{\tabcolsep}{3pt}
\renewcommand{\arraystretch}{1.14}
\caption{
Rate- and assumption-oriented comparison of representative stochastic bilevel algorithms.
$N_\varepsilon$ denotes the oracle/iteration complexity for obtaining
$\mathbb E\|\nabla\Phi(\bar x)\|^2\le \varepsilon$, following each paper's own oracle model and stationarity convention.
}
\label{tab:intro-positioning}
\begin{threeparttable}
\begin{tabular}{lcccccccc}
\toprule
Algorithm
& S.L.$^\dagger$
& Oracle$^\ddagger$
& Assmp.$^\triangleright$
& E.Assmp.$^\triangleleft$
& $N_\varepsilon$
& Std. $\varepsilon^{-2}$?$^\blacklozenge$
& U.S.$^\heartsuit$\\
\midrule

BSA~\citep{ghadimi2018approximation}
& $\times$
& H/JVP
& Std.
& --
& $\widetilde{\mathcal O}(\varepsilon^{-3})$
& $\times$
& Full\\

SOBA~\citep{dagreou2022framework}
& $\checkmark$
& H/JVP
& Str.
& HO
& $\mathcal O(\varepsilon^{-2})$
& HO only
& Full\\

SUSTAIN/MRBO~\citep{khanduri2021near,yang2021provably}
& $\checkmark$
& H/JVP
& Str.
& MS
& $\widetilde{\mathcal O}(\varepsilon^{-3/2})$
& MS only
& Full\\

F$^2$SA~\citep{kwon2023fully}
& $\checkmark$
& FO
& Diff.
& FO
& $\widetilde{\mathcal O}(\varepsilon^{-7/2})$
& $\times$
& Full\\

F$^3$SA~\citep{kwon2023fully}
& $\checkmark$
& FO
& Diff.
& FO
& $\widetilde{\mathcal O}(\varepsilon^{-5/2})$
& $\times$
& Full\\

FdeHBO~\citep{yang2023achieving}
& $\checkmark$
& FO
& Diff.
& FD+Proj.
& $\widetilde{\mathcal O}(\varepsilon^{-3/2})$
& Diff.
& Full\\

SPABA-E~\citep{chu2024spaba}
& $\checkmark$
& H/JVP
& Str.
& MS
& $\mathcal O(\varepsilon^{-3/2})$
& MS only
& Full \\

SPABA-FS~\citep{chu2024spaba}
& $\checkmark$
& H/JVP
& Std.
& --
& $\mathcal O((n+m)^{1/2}\varepsilon^{-1})$
& FS
& Full\\

\midrule
\rowcolor{boxbg}
\textbf{\ourslast}
& $\checkmark$
& H/JVP
& \textbf{Std.}
& RP
& $\mathbf{\mathcal O(\varepsilon^{-2})}$
& $\checkmark$
& \textbf{Proj.}\\

\midrule
MA-SOBA~\citep{chen2024optimal}
& $\checkmark$
& H/JVP
& Std.
& --
& $\mathcal O(\varepsilon^{-2})$
& $\checkmark$
& Full\\

\bottomrule
\end{tabular}
\footnotesize
\begin{tablenotes}
\footnotesize
\item[$^\dagger$]The algorithm is a single-loop method.
\item[$^\ddagger$]Derivative oracle. H/JVP = Hessian-/Jacobian-vector products, FO = first-order-only.
\item[$^\triangleright$]Coarse assumption regime for comparability with the standard SBO baseline. Std. = standard nonconvex--strongly-convex stochastic bilevel setting with smooth $f$, strongly convex and second-order smooth $g$, and unbiased bounded-variance stochastic gradient/HVP/JVP oracles; Str. = stronger assumptions than Std.; Diff. = different oracle model or reformulation.
\item[$^\triangleleft$]Specific extra condition, oracle restriction, or mechanism explaining the entry in 'Assmp.'. HO = high-order smoothness, MS = mean-squared smoothness of stochastic oracles, FD+Proj. = finite-difference approximation with auxiliary projection, RP = randomized-projection moment conditions with no block-diagonal lower-Hessian assumption.
\item[$^\blacklozenge$]The method attains the standard $\mathcal O(\varepsilon^{-2})$ complexity under Std. assumption. ``HO only'', ``MS only'', ``Diff.'', and ``FS'' mean that the displayed rate requires, respectively, 
additional high-order smoothness, mean-squared smoothness of stochastic oracles, a different oracle model/reformulation, or a finite-sum setting.
\item[$^\heartsuit$] Lower/Auxiliary update space. Full = full trainable space, Proj. = projected/randomized subspace. 
\end{tablenotes}
\end{threeparttable}
\end{table*}

\textbf{Notations. } For an integer $L$, let $[L]:=\{1,\ldots,L\}$. All vector norms are Euclidean norms, and all matrix norms are Frobenius norms unless otherwise specified. For random variables, $\E[\cdot]$ and $\E[\cdot\mid\mathcal F]$ denote expectation and conditional expectation, and $\mathcal F_k$ denotes the filtration before drawing the projector at iteration $k$. We use bold uppercase letters, such as $\boldsymbol U,\boldsymbol V,\boldsymbol Y,\boldsymbol Z$, for multilayer matrix variables with the same layerwise structure as $\boldsymbol Y=(Y_1,\ldots,Y_L)$.

The $\ell$-th block of $\boldsymbol U$ is denoted by $U_\ell$. For two block variables with compatible dimensions, $\langle\cdot,\cdot\rangle$ and $\|\cdot\|$ denote the product Frobenius inner product and norm. For the lower-level objective $g$, write $\mathcal H(x,\boldsymbol Y):=\nabla^2_{\boldsymbol Y\boldsymbol Y}g(x,\boldsymbol Y)$ and $\mathcal J(x,\boldsymbol Y):=\nabla^2_{x\boldsymbol Y}g(x,\boldsymbol Y)$. For a block linear operator $\mathcal A:\mathcal Y\to\mathcal Y$, $\mathcal A_{\ell t}$ denotes its $(\ell,t)$-block operator from block $t$ to block $\ell$. At iteration $k$, we write $\mathcal H^k:=\mathcal H(x^k,\boldsymbol Y^k)$ and $\mathcal H_{\ell t}^k$ for the $(\ell,t)$-block of $\mathcal H^k$. When multilayer projectors appear, $P_\ell\in\mathbb R^{m_\ell\times r_\ell}$ with $r_\ell\le m_\ell$, and products such as $\boldsymbol P\boldsymbol B$ and $\boldsymbol P^\top\boldsymbol U$ are understood blockwise.

\section{Stochastic bilevel optimization via randomized subspace (\ourslast)}
\label{sec:algorithm}

This section develops \ours through a simplified single-layer case with $L=1$, i.e., the case where the lower-level variable $Y$ is a single-layer matrix in $\mathbb R^{m\times n}$. We make this simplification only for clarity of exposition, the canonical multi-layer derivation is deferred to Appendix~~\ref{app:multilayer_correction}.

\subsection{Hypergradient computation and single-loop bilevel updates}
\label{subsec:target_single}

When solving the upper-level problem, the core is the hypergradient $\nabla\Phiobj(x)$~\citep{ghadimi2018approximation}. Under Assumption~\ref{as:A1}, the lower-level solution $Y^*(x)$ is unique, and the implicit-gradient formula gives:
\begin{equation}
\label{eq:hypergrad}
\nabla\Phiobj(x)=\nabla_x f(x,Y^*(x))-\mathcal J(x,Y^*(x))\!\left[\mathcal H(x,Y^*(x))^{-1}[\nabla_Y f(x,Y^*(x))]\right].
\end{equation}
Evaluating this expression directly is expensive because it involves the inverse of the lower Hessian operator. Following the standard exact-hypergradient route, we introduce an auxiliary variable
\begin{equation*}
Z^*(x):=\mathcal H(x,Y^*(x))^{-1}[\nabla_Y f(x,Y^*(x))].
\end{equation*}
Equivalently, $Z^*(x)$ is the solution to the quadratic auxiliary problem
\begin{equation}
\label{eq:auxiliary_quadratic_single}
Z^*(x)=\argmin_{Z\in\mathbb R^{m\times n}}\left\{\frac{1}{2}\langle Z,\mathcal H(x,Y^*(x))[Z]\rangle-\langle Z,\nabla_Y f(x,Y^*(x))\rangle\right\}.
\end{equation}
Once $Z^*(x)$ is obtained, the hypergradient reduces to $\nabla_x f(x,Y^*(x))-\mathcal J(x,Y^*(x))[Z^*(x)]$.

Following this idea, solving the bilevel problem essentially involves three coupled subproblems. Define $h(x,Y,Z):=\frac{1}{2}\langle Z,\mathcal H(x,Y)[Z]\rangle-\langle Z,\nabla_Y f(x,Y)\rangle$. Then
\begin{subequations}
\label{eq:three_single_problems}
\begin{align}
x^*&\in\argmin_{x\in\mathbb R^{d_x}} f(x,Y^*(x)), &&\text{(upper-level)}\label{eq:single_upper_problem}\\
Y^*(x)&=\argmin_{Y\in\mathbb R^{m\times n}} g(x,Y), &&\text{(lower-level)}\label{eq:single_lower_problem}\\
Z^*(x)&=\argmin_{Z\in\mathbb R^{m\times n}} h(x,Y^*(x),Z). &&\text{(auxiliary-level)}\label{eq:single_aux_problem}
\end{align}
\end{subequations}
Given $x$, the exact-hypergradient procedure first obtains $Y^*(x)$ by solving the lower-level problem, then solves the auxiliary-level problem with $Y^*(x)$ fixed to obtain $Z^*(x)$, and finally uses $Z^*(x)$ to form the hypergradient for the upper-level update. The auxiliary formulation in~\eqref{eq:three_single_problems} therefore leads to a three-level computational template. Rather than solving the lower-level and auxiliary-level problems to high accuracy in nested loops, recent exact single-loop methods perform one stochastic update for each level per iteration, yielding a coupled single-loop recursion over $(x,Y,Z)$~\citep{dagreou2022framework,chen2024optimal}.

\textbf{Memory bottleneck in large-scale lower-level problems. }In the full-space single-loop template, the lower update uses a stochastic lower gradient, the auxiliary update uses HVPs to solve the quadratic system for $Z$, and the upper update substitutes the auxiliary variable into the hypergradient formula. This creates a substantial memory bottleneck. For a lower-level block $Y\in\mathbb R^{m\times n}$, the full-space recursion must maintain the lower variable $Y^k$ and the auxiliary variable $Z^k$, and the derivative evaluations introduce additional $O(mn)$ memory for lower gradients, auxiliary residuals, HVP inputs and outputs, and JVP-related quantities. Moreover, HVP/JVP computations are implemented through higher-order automatic differentiation, which can retain multiple forward/backward tapes and temporary tensors during one iteration. Thus, the peak trainable-side memory can be several times larger than the memory needed to store the lower model itself, creating a large overhead for high-dimensional neural networks and potentially translating into substantially higher accelerator-memory requirements and hardware provisioning costs.

\subsection{Randomized subspace sketching for bilevel recursions}
\label{subsec:projection_single}

To reduce the memory overhead described above, a natural idea is to combine randomized subspace optimization (RSO)~\citep{he2024subspace,chen2025memory} for the lower-level update with randomized subspace Newton (RSN) sketching~\citep{gower2019rsn} for the auxiliary update.

Specifically, at iteration $k$, we sample a scaled Haar projector $P^k\in\mathbb R^{m\times r}$, i.e., $P^k=\sqrt{m/r}\,O_r^k$ where $O_r^k$ contains the first $r$ columns of a Haar-uniform orthogonal matrix. This normalization gives $(P^k)^\top P^k=(m/r)I_r$ and $\E[P^k(P^k)^\top\mid\mathcal F_k]=I_m$. We then search for the update to $Y^k$ inside the subspace induced by $P^k$:
\begin{equation}
\label{eq:rso_lower_subproblem}
B^k= \argmin_{B\in\mathbb R^{r\times n}}\tilde g_k(x^k,B), \quad \tilde g_k(x,B):=g(x,Y^k+P^kB).
\end{equation}
After solving this reduced problem, the full-space lower variable is updated by $Y^{k+1}=Y^k+P^kB^k$.

For the auxiliary variable, the analogous sketching step computes Hessian and Jacobian actions only with respect to the reduced variable $B$. For $W\in\mathbb R^{r\times n}$, define the projected HVP and JVP by
\begin{equation}
\label{eq:projected_hvp_jvp_main}
\tilde{\mathcal H}^{k+1}[W]:=\nabla_{BB}^2\tilde g_k(x^k,0)[W], \quad \tilde{\mathcal J}^{k+1}[W]:=\nabla_{xB}^2\tilde g_k(x^k,0)[W].
\end{equation}
The vanilla sketch-then-lift auxiliary update would then replace the full-space HVP and JVP by
\begin{equation}
\label{eq:naive_projected_hvp}
\widehat{\mathcal H}_{\rm lift}^{k+1}[Z^k]:=P^k\tilde{\mathcal H}^{k+1}[(P^k)^\top Z^k], \quad \widehat{\mathcal J}_{\rm lift}^{k+1}[Z^k]:=\tilde{\mathcal J}^{k+1}[(P^k)^\top Z^k].
\end{equation}
This construction replaces the $O(mn)$ full-space HVP/JVP directions and their associated temporary memory with projected $O(rn)$ derivative directions. Thus, before the final lifting step, the lower-gradient, HVP, and JVP computations are carried out in the randomized subspace, directly reducing the memory overhead identified in Section~\ref{subsec:target_single}.

\textbf{Key limitation for vanilla sketch-then-lift. }The key limitation is that the vanilla sketch-then-lift estimator does not, in general, recover the full-space HVP required by the exact hypergradient formula. First, the lifted Hessian action in \eqref{eq:naive_projected_hvp} is generally a biased estimate of the full-space Hessian action: even when $P^k$ is sampled from a Haar distribution and satisfies $\E[P^k(P^k)^\top\mid\mathcal F_k]=I_m$, there is \emph{no} guarantee for $\E[\widehat{\mathcal H}_{\rm lift}^{k+1}[Z^k]\mid\mathcal F_k]=\mathcal H^k[Z^k]$. The auxiliary recursion therefore tracks an inexact linear system, which distorts the hypergradient and can lead to a nonstationary limit. Appendix~\ref{app:counterexample} gives a deterministic quadratic counterexample for this issue. Second, any recovery mechanism for the above-mentioned distortion must preserve the subspace memory advantage. In particular, it \emph{cannot} rely on forming full Hessian/Jacobian matrices, computing full-space HVPs/JVPs, or incurring full-space derivative memory.

\subsection{Rademacher-probe recovery of auxiliary HVPs}
\label{subsec:correction_single}
To address this bias without sacrificing the subspace memory advantage, \ours introduces a Rademacher-probe correction that targets the full-space HVP using only randomized-subspace derivative queries. Define $\tilde Z^k:=(P^k)^\top Z^k$ and $A^{k+1}:=P^k\tilde{\mathcal H}^{k+1}[\tilde Z^k]$. The Rademacher probes are random sign vectors and are not used as additional optimization directions; instead, they estimate the trace-like and transpose-like nuisance components induced by the vanilla lifted HVP. Specifically, \ours samples independent probes $u^{k+1}\in\{-1,1\}^{n}$ and $\xi^{k+1}\in\{-1,1\}^{r}$, and queries one additional subspace HVP $M^{k+1}:=\tilde{\mathcal H}^{k+1}[\xi^{k+1}(u^{k+1})^\top]$, which is used to form the correction terms
\begin{equation}
\label{eq:single_biprobe_estimators}
\begin{aligned}
\widehat C^{k+1}&:=Z^k\!\Bigl((M^{k+1})^\top\xi^{k+1}(u^{k+1})^\top\Bigr), \quad \widehat B_{\sharp}^{k+1}&:=P^k\!\Bigl(\xi^{k+1}\bigl((M^{k+1})^\top(\tilde Z^ku^{k+1})\bigr)^\top\Bigr).
\end{aligned}
\end{equation}
For Haar projectors~\citep{chen2025memory} $P^k$ with $r\ge2$, the corrected HVP estimator used by \ours is
{\small
\begin{equation}
\label{eq:single_unbiased_estimator}
\widehat{\mathcal H[Z]}^{k+1}:=\frac{r(m-1)(m+2)}{m(mr+m-2)}A^{k+1}-\frac{m-r}{mr+m-2}\widehat C^{k+1}+\frac{r(m-1)(m-r)}{m(r-1)(mr+m-2)}\bigl(A^{k+1}-\widehat B_{\sharp}^{k+1}\bigr).
\end{equation}}

For later use, we denote the corrected estimator in~\eqref{eq:single_biprobe_estimators}--\eqref{eq:single_unbiased_estimator} by
\[
\mathrm{CorrHVP}(P^k,Z^k,\tilde{\mathcal H}^{k+1};u^{k+1},\xi^{k+1})
:=\widehat{\mathcal H[Z]}^{k+1}.
\]
This subroutine provides the debiased auxiliary Hessian-action direction used in place of the vanilla lifted HVP in the $Z$-update.

For the multilayer setting, \ours applies this correction layerwise to the diagonal Hessian blocks and directly lifts the off-diagonal projected HVPs. The detailed blockwise construction process is deferred to Appendix~\ref{app:multilayer_correction}. In Algorithm~\ref{alg:rso_masoba}, $E_t(W)$ inserts $W$ into the $t$-th projected block and zeros elsewhere, and $\mathrm{CorrHVP}_\ell$ denotes the layer-$\ell$ version of $\mathrm{CorrHVP}$ with layerwise dimensions, projector, and probes. Algorithm~\ref{alg:rso_masoba} presents the resulting multilayer single-loop recursion: after constructing the corrected auxiliary HVP estimator, it performs one coupled update for the upper-level variable, lower-level variable, auxiliary variable, and the moving-average hypergradient estimator $h^k$. The moving-average recursion for $h^k$ is a standard variance-reduction device in single-loop SBO algorithms, used to stabilize stochastic hypergradient evaluation~\cite{chen2024optimal,kong2025decentralized}.

\begin{algorithm}[H]
\caption{\ourslast}
\label{alg:rso_masoba}
\small
\begin{algorithmic}[1]
\Require stepsizes $\{\alpha_k,\beta_k,\gamma_k,\theta_k\}$, initial $(x^0,\boldsymbol Y^0,\boldsymbol Z^0,h^0=0)$
\For{$k=0,1,\ldots,K-1$}
    \State Sample projectors $\boldsymbol P^k$ and Rademacher probes $\{u_\ell^{k+1},\xi_\ell^{k+1}\}_{\ell=1}^L$
    \State Obtain projected stochastic oracles $(\tilde U_x^{k+1},\tilde{\boldsymbol U}_Y^{k+1},\tilde{\boldsymbol V}^{k+1},\tilde{\mathcal H}^{k+1},\tilde{\mathcal J}^{k+1})$
    \For{$\ell=1,\ldots,L$}
        \State $\widehat{\mathcal H_{\ell\ell}[Z_\ell]}^{k+1}\gets \mathrm{CorrHVP}_\ell\!\left(P_\ell^k,Z_\ell^k,\tilde{\mathcal H}^{k+1};u_\ell^{k+1},\xi_\ell^{k+1}\right)$ \Comment{diagonal correction by \eqref{eq:single_unbiased_estimator}}
        \State $(\widehat{\mathcal H[\boldsymbol Z]}^{k+1})_\ell\gets \widehat{\mathcal H_{\ell\ell}[Z_\ell]}^{k+1}\hspace{-1em}+\underset{t\ne\ell}{\sum}P_\ell^k\bigl(\tilde{\mathcal H}^{k+1}[E_t((P_t^k)^\top Z_t^k)]\bigr)_\ell$ \Comment{off-diagonal blocks lifting}
    \EndFor
    \State $x^{k+1}\gets x^k-\alpha_k h^k$ \Comment{upper-level update}
    \State $\boldsymbol Y^{k+1}\gets \boldsymbol Y^k-\beta_k\boldsymbol P^k\tilde{\boldsymbol V}^{k+1}$ \Comment{lower-level update}
    \State $\boldsymbol Z^{k+1}\gets \boldsymbol Z^k-\gamma_k\bigl(\widehat{\mathcal H[\boldsymbol Z]}^{k+1}-\boldsymbol P^k\tilde{\boldsymbol U}_Y^{k+1}\bigr)$ \Comment{auxiliary-level update}
    \State $h^{k+1}\gets (1-\theta_k)h^k+\theta_k\bigl(\tilde U_x^{k+1}-\tilde{\mathcal J}^{k+1}[\boldsymbol P^{k\top}\boldsymbol Z^k]\bigr)$ \Comment{moving average update}
\EndFor
\end{algorithmic}
\end{algorithm}

\textbf{Compatibility with momentum and adaptive optimizers.}
Algorithm~\ref{alg:rso_masoba} presents the SGD-style instantiation used in our convergence proof, where the lower- and auxiliary-level variables are updated by plain stochastic steps along the projected lower direction and the corrected auxiliary direction, respectively. The \ours correction is orthogonal to the choice of optimizer: it only constructs the stochastic directions used by these two updates. Therefore, the same projected lower direction and auxiliary direction can be plugged into momentum SGD~\cite{sutskever2013importance} or Adam-family optimizers~\cite{kingma2014adam,guo2021novel} by replacing the plain SGD updates for $\boldsymbol Y$ and $\boldsymbol Z$ with the corresponding optimizer steps. Appendix~\ref{app:optimizer_compatibility} gives a more detailed discussion of this compatibility. The convergence theorem below is stated for the SGD-style recursion; analyzing momentum or adaptive variants is left for future work.

\section{Convergence analysis for \ourslast}
\label{sec:convergence}

\subsection{Assumptions}

The first assumption is standard for hypergradient evaluation in bilevel analysis~\citep{chen2024optimal,chen2023decentralized,kong2025decentralized}.

\Needspace{7\baselineskip}
\begin{limbox}
\begin{assumption}[Smoothness, boundedness, and lower-level curvature]
\label{as:A1}
There exist positive constants $\mu_g,L_{f,0},L_{f,1},L_{g,1},L_{g,2}>0$ such that:
\begin{enumerate}[leftmargin=1.5em,label=\arabic*.,topsep=1pt,itemsep=1pt,parsep=0pt,partopsep=0pt]
    \item $\nabla f$ and $\nabla g$ are Lipschitz continuous in $(x,\boldsymbol Y)$ with constants $L_{f,1}$ and $L_{g,1}$, and $\nabla^2 g$ is $L_{g,2}$-Lipschitz continuous as a linear operator on the product space $\mathbb R^{d_x}\times\mathcal Y$;
    \item $\|\nabla_{\boldsymbol Y} f(x,\boldsymbol Y^*(x))\|\le L_{f,0}$ for all $x$;
    \item for every fixed $x$, $g(x,\cdot)$ is $\mu_g$-strongly convex on $\mathcal Y$.
\end{enumerate}
Let $L_{\max}:=\max\{L_{f,0},L_{f,1},L_{g,1},L_{g,2}\}$ and $\kappa:=L_{\max}/\mu_g$.
\end{assumption}
\end{limbox}

The next assumption is the standard SBO oracle condition, adapted to the projected stochastic oracles used by \ourslast. A complete filtration-level statement is given in Appendix~\ref{app:full_proofs}.
\Needspace{7\baselineskip}
\begin{limbox}
\begin{assumption}[Projected stochastic oracles]
\label{as:A2}
At iteration $k$, let $\mathcal F_k$ be the filtration before drawing $\boldsymbol P^k$ and let $\mathcal F_k^+$ additionally include $\boldsymbol P^k$. For the projected local objectives induced by $\boldsymbol Y^k+\boldsymbol P^k\boldsymbol B$, the stochastic oracles in Algorithm~\ref{alg:rso_masoba} satisfy:
\begin{enumerate}[leftmargin=1.5em,label=\arabic*.,topsep=1pt,itemsep=1pt,parsep=0pt,partopsep=0pt]
    \item the projected gradient oracles of $f$ and $g$ are conditionally unbiased given $\mathcal F_k^+$ and have uniformly bounded conditional second moments;
    \item the projected HVP and JVP oracles are conditionally unbiased given $\mathcal F_k^+$, with conditional second moments bounded by a constant times the squared norm of the queried direction;
    \item multiple queries to the same sampled HVP/JVP operator within one iteration are allowed; the Rademacher probes are conditionally independent of the stochastic oracle samples given $\mathcal F_k^+$.
\end{enumerate}
\end{assumption}
\end{limbox}

The final assumption specifies the random subspaces used by \ourslast. It matches the standard isotropic projector assumption used in RSO~\citep{chen2025memory}, and its moment identities are used in the diagonal-block bias correction.

\Needspace{7\baselineskip}
\begin{limbox}
\begin{assumption}[Random projectors and probes]
\label{as:A3}
Conditional on $\mathcal F_k$, the layerwise projectors $\{P_\ell^k\}_{\ell=1}^L$ are independent, with $P_\ell^k=\sqrt{m_\ell/r_\ell}\,O_{\ell,r_\ell}^k$, where $O_{\ell,r_\ell}^k$ contains the first $r_\ell$ columns of a Haar-uniform matrix $O_\ell^k$ in the orthogonal group $O(m_\ell)$. We assume $r_\ell\ge2$ for all $\ell$. The Rademacher probes in Algorithm~\ref{alg:rso_masoba} are independent across layers and independent of the stochastic oracle samples conditional on $\mathcal F_k^+$.
\end{assumption}
\end{limbox}

\paragraph{Projector sampling in implementation}
The Haar projector in Assumption~\ref{as:A3} can be sampled without forming a full $m_\ell\times m_\ell$ orthogonal matrix. In implementation, for each layer we draw a Gaussian matrix $G_\ell^k\in\mathbb R^{m_\ell\times r_\ell}$, compute its thin QR factorization $G_\ell^k=Q_\ell^kR_\ell^k$ with the standard positive-diagonal convention for $R_\ell^k$, and set $P_\ell^k=\sqrt{m_\ell/r_\ell}\,Q_\ell^k$. The resulting $Q_\ell^k$ is Haar-uniform on the Stiefel manifold, equivalently distributed as the first $r_\ell$ columns of a Haar-uniform matrix.

\subsection{Convergence results for \ours}
\label{subsec:main_convergence}
In this subsection, we state the main guarantee supporting the core claims of \ourslast. We first present the following lemma to state that \ours restores the same conditional expectations that a full-space single-loop method would use, while keeping the variance controlled. A detailed version with explicit constants is proved in Appendix~\ref{app:full_proofs}.
\Needspace{9\baselineskip}
\begin{limbox}
\begin{lemma}[Moment preservation of \ours directions]
\label{lem:bros_moment_preservation}
Suppose Assumptions~\ref{as:A1}--\ref{as:A3} hold. Define the full-space bilevel directions
{\small
\begin{equation*}
\begin{aligned}
D_Y^k\hspace{-0.3em}:=\hspace{-0.3em}\nabla_{\boldsymbol Y}g(x^k,\boldsymbol Y^k), \quad D_Z^k\hspace{-0.3em}:=\hspace{-0.3em}\mathcal H(x^k,\boldsymbol Y^k)[\boldsymbol Z^k]\hspace{-0.2em}-\hspace{-0.3em}\nabla_{\boldsymbol Y}f(x^k,\boldsymbol Y^k), \quad D_x^k\hspace{-0.3em}:=\hspace{-0.3em}\nabla_x f(x^k,\boldsymbol Y^k)\hspace{-0.2em}-\hspace{-0.3em}\mathcal J(x^k,\boldsymbol Y^k)[\boldsymbol Z^k].
\end{aligned}
\end{equation*}
}
For the analysis, denote the three stochastic directions generated by Algorithm~\ref{alg:rso_masoba} as $\widehat{\boldsymbol V}^{k+1}\hspace{-0.8em}:=\boldsymbol P^k\tilde{\boldsymbol V}^{k+1}$, $\widehat{\boldsymbol S}^{k+1}\hspace{-0.8em}:=\widehat{\mathcal H[\boldsymbol Z]}^{k+1}\hspace{-1.0em}-\boldsymbol P^k\tilde{\boldsymbol U}_Y^{k+1}$, and $\widehat W^{k+1}\hspace{-0.3em}:=\tilde U_x^{k+1}\hspace{-0.3em}-\tilde{\mathcal J}^{k+1}[\boldsymbol P^{k\top}\boldsymbol Z^k]$. Then
\begin{equation}
\label{eq:main_moment_preservation}
\E[\widehat{\boldsymbol V}^{k+1}\mid\mathcal F_k]=D_Y^k,\quad \E[\widehat{\boldsymbol S}^{k+1}\mid\mathcal F_k]=D_Z^k,\quad \E[\widehat W^{k+1}\mid\mathcal F_k]=D_x^k.
\end{equation}

Moreover, there exists a finite constant $C_m>0$, independent of $K$, such that
{\small
\begin{equation}
\label{eq:main_moment_bound}
\E\!\left[\|\widehat{\boldsymbol V}^{k+1}\hspace{-1.0em}-D_Y^k\|^2
\hspace{-0.4em}+\hspace{-0.2em}\|\widehat{\boldsymbol S}^{k+1}\hspace{-1.0em}-D_Z^k\|^2
\hspace{-0.4em}+\hspace{-0.2em}\|\widehat W^{k+1}\hspace{-0.6em}-D_x^k\|^2\hspace{-0.3em}\mid\hspace{-0.2em}\mathcal F_k\right] \hspace{-0.3em}\le\hspace{-0.2em} C_m\Bigl(\hspace{-0.1em}1+\|\boldsymbol Y^k\hspace{-0.3em}-\hspace{-0.1em}\boldsymbol Y^*(x^k)\|^2+\|\boldsymbol Z^k\hspace{-0.3em}-\hspace{-0.1em}\boldsymbol Z^*(x^k)\|^2\Bigr).
\end{equation}
}
\end{lemma}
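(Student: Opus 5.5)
The proof has two parts: conditional unbiasedness, then the second-moment bound. For unbiasedness we condition in two stages, first on $\mathcal F_k^+$ (which fixes $\boldsymbol P^k$) and then average over $\boldsymbol P^k$ given $\mathcal F_k$.

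\textbf{Stage one: the oracle average.} By Assumption~\ref{as:A2}, the projected oracles average to exact projected derivatives. The chain rule applied to $\tilde g_k(x,\boldsymbol B)=g(x,\boldsymbol Y^k+\boldsymbol P^k\boldsymbol B)$ gives $\E[\tilde{\boldsymbol V}^{k+1}\mid\mathcal F_k^+]=\boldsymbol P^{k\top}\nabla_{\boldsymbol Y}g$ and $\tilde{\mathcal H}=\boldsymbol P^{k\top}\mathcal H^k\boldsymbol P^k$ in mean. Likewise $\tilde{\mathcal J}[W]=\mathcal J^k[\boldsymbol P^kW]$ in mean, and $\E[\tilde U_x\mid\mathcal F_k^+]=\nabla_xf$.

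\textbf{Stage two: the projector average.} Using $\E[P_\ell P_\ell^\top]=I$ we immediately get $\E[\widehat{\boldsymbol V}^{k+1}\mid\mathcal F_k]=D_Y^k$. For the $x$-direction, $\E[\widehat W^{k+1}\mid\mathcal F_k]=\nabla_xf-\mathcal J^k[\E\boldsymbol P\boldsymbol P^\top\boldsymbol Z^k]=D_x^k$. The $\nabla_{\boldsymbol Y}f$ part of $\widehat{\boldsymbol S}$ is handled in the same way.

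\textbf{Off-diagonal blocks.} For $t\neq\ell$, the term $P_\ell P_\ell^\top\mathcal H_{\ell t}[P_tP_t^\top Z_t]$ has expectation $\mathcal H_{\ell t}[Z_t]$. This follows from the conditional independence of $P_\ell$ and $P_t$ in Assumption~\ref{as:A3}, since each factor has identity second moment.

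\textbf{Diagonal blocks (the main obstacle).} Here the lifted action $P P^\top\mathcal H_{\ell\ell}[PP^\top Z]$ is quartic in $P$, and we must prove that the combination \eqref{eq:single_unbiased_estimator} has mean $\mathcal H_{\ell\ell}[Z]$. The plan is as follows.
\begin{enumerate}
\item Write $\mathcal H_{\ell\ell}$ as $\sum_{ij}$ of elementary operators acting on $Z\in\mathbb R^{m\times n}$ through a tensor $H_{(a,b),(c,d)}$.
\item Use the Haar fourth-moment (Weingarten) formula for $O_r$ to compute $\E[A]$. It equals a linear combination of three pieces: $\mathcal H[Z]$, a trace-like term $Z\cdot T$ with $T_{bd}=\sum_a H_{(a,b),(a,d)}$, and a transpose-like term $\mathcal H^\sharp[Z]$ that swaps the row indices. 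The coefficients are rational in $m$ and $r$.
\item Average over the Rademacher probes using $\E[\xi\xi^\top]=I$ and $\E[uu^\top]=I$. This shows $\E[\widehat C\mid P]=Z\cdot\tilde T$, where $\tilde T$ is the projected trace contraction of $P^\top\mathcal H P$.
\item Apply a second Weingarten computation to get $\E[\widehat C]$ and $\E[\widehat B_\sharp]$ as further combinations of the same three pieces.
\end{enumerate}
The coefficients in \eqref{eq:single_unbiased_estimator} should then solve the resulting $3\times3$ linear system so that only $\mathcal H[Z]$ survives. Verifying this system is the delicate computation. The hypothesis $r\ge2$ enters because the $(r-1)$ denominator arises when inverting the system.

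\textbf{Second-moment bound.} Bound each centered term by its raw conditional second moment, again conditioning on $\mathcal F_k^+$ first.
\begin{itemize}
\item The gradient-oracle terms contribute a constant, using bounded variance together with $\|P\|_{op}^2=m/r$. The mean parts $\|\nabla g\|$ and $\|\nabla_{\boldsymbol Y}f\|$ are controlled via Lipschitz continuity by $L_{g,1}\|\boldsymbol Y^k-\boldsymbol Y^*\|$ and $L_{f,0}+L_{f,1}\|\boldsymbol Y^k-\boldsymbol Y^*\|$. To keep $\nabla_xf$ bounded, take $x$-differences relative to $D_x$, so only the oracle variance appears.
\item The HVP and JVP terms, including the probe terms $M$, $\widehat C$, and $\widehat B_\sharp$, are bounded by a constant times $\|\boldsymbol Z^k\|^2$. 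Here we use the operator-norm bound $L_{g,1}$, Assumption~\ref{as:A2}(2), $\|\xi\|^2=r$, $\|u\|^2=n$, and the Cauchy--Schwarz inequality over the finitely many products. Finally, $\|\boldsymbol Z^k\|^2\le2\|\boldsymbol Z^k-\boldsymbol Z^*\|^2+2(L_{f,0}/\mu_g)^2$.
\end{itemize}
Collecting the dimension-dependent constants into $C_m$ gives \eqref{eq:main_moment_bound}.
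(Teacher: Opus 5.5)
Your proposal is correct and follows the paper's proof essentially step for step: you condition first on $\mathcal F_k^+$ and use the chain rule with $\E[P_\ell P_\ell^\top]=I$, handle the off-diagonal blocks through layer independence, and apply the Weingarten identity $\E[QAQ]=(a-b)A+bA^\top+b\,\tr(A)I$ to the main lifted term and to $\widehat B_\sharp$; for $\widehat C$ only $\E[PP^\top]=I$ is needed, since $\tr(P^\top H_{ij}P)=\tr(H_{ij}PP^\top)$. The $3\times 3$ inversion is solvable exactly when $a-2b=\frac{m(r-1)}{r(m-1)}\neq 0$, which is where $r\ge 2$ enters, and the raw-second-moment bounds are closed via $\|\boldsymbol Z^k\|^2\le 2\|\boldsymbol Z^k-\boldsymbol Z^*(x^k)\|^2+2L_{f,0}^2/\mu_g^2$. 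The one piece you omit is that the paper's multilayer correction uses a single global probe query $\tilde{\mathcal H}^{k+1}[(\xi_1u_1^\top,\ldots,\xi_Lu_L^\top)]$, so each block $M_\ell$ also carries cross-layer terms driven by the other layers' probes; these have zero conditional mean because $u_\ell$ enters $\widehat C_\ell$ and $\xi_\ell$ enters $\widehat B_{\sharp,\ell}$ only linearly and independently of those terms (the paper's cross-layer cancellation lemma), and the oracle average passes through because each correction term is linear in a single query taken at an $\mathcal F_k^{++}$-measurable direction.
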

\end{limbox}
Consequently, Lemma~\ref{lem:bros_moment_preservation} allows us to view the projected recursions of \ours as full-space single-loop bilevel updates with unbiased directions and controlled projection-induced variance, which leads to the following convergence guarantee.
\Needspace{9\baselineskip}
\begin{limbox}
\begin{theorem}[Main guarantee of \ourslast]
\label{thm:bros_main}
Suppose Assumptions~\ref{as:A1}--\ref{as:A3} hold, and assume also that $\Phi^*:=\inf_x \Phi(x)>-\infty$. Consider Algorithm~\ref{alg:rso_masoba}. Let $\omega_P:=\max_{1\le \ell\le L}(m_\ell/r_\ell)$ be the worst layerwise full-rank to subspace-rank ratio. Then there exist positive coupling constants $c_1,c_2,c_3$ and finite constants $C_0,C_1,\bar\alpha>0$, such that the following holds: If $\alpha_k\equiv \alpha$, $\beta_k\equiv c_1\alpha$, $\gamma_k\equiv c_2\alpha$, $\theta_k\equiv c_3\alpha$, with $0<\alpha\le (\bar\alpha/(1+\omega_P^3))$, we have (see detailed proof in Appendix~\ref{app:full_proofs}):
\begin{equation}
\label{eq:bros_stationarity_rate}
\frac{1}{K}\sum_{k=0}^{K-1}\E\|\nabla\Phi(x^k)\|^2 \le 2\left( \frac{C_0(1+\omega_P^2)}{K\alpha} +C_1(1+\omega_P^4)\alpha \right).
\end{equation}
Choosing $\alpha=\Theta(K^{-1/2})$ subject to the above upper bound yields
\begin{equation*}
\begin{aligned}
\frac{1}{K}\sum_{k=0}^{K-1}\E\|\nabla\Phi(x^k)\|^2=\mathcal O\!\left((1+\omega_P^4)K^{-1/2}\right), \quad \text{and} \quad \E\|\nabla\Phi(x^R)\|^2=\mathcal O\!\left((1+\omega_P^4)K^{-1/2}\right),
\end{aligned}
\end{equation*}
where $R\sim\mathrm{Unif}\{0,\ldots,K-1\}$ is independent of the algorithmic randomness. Hence, for fixed projection ratio $\omega_P$, \ours attains the standard $\mathcal O(\varepsilon^{-2})$ sample complexity for finding an $\varepsilon$-stationary point. With the projection factor displayed, it suffices to take $K=\mathcal O((1+\omega_P^4)^2\varepsilon^{-2})$.
\end{theorem}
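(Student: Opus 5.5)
The argument follows the standard MA-SOBA Lyapunov analysis, with Lemma~\ref{lem:bros_moment_preservation} supplying the unbiasedness and the variance control. Write $y_k:=\|\boldsymbol Y^k-\boldsymbol Y^*(x^k)\|^2$, $z_k:=\|\boldsymbol Z^k-\boldsymbol Z^*(x^k)\|^2$ and $e_k:=\|h^k-\E[\widehat W^{k+1}\mid\mathcal F_k]\|^2=\|h^k-D_x^k\|^2$. I will work with the potential
\[
\mathcal L_k:=\Phi(x^k)-\Phi^*+a_1 y_k+a_2 z_k+a_3\|h^k-\nabla\Phi(x^k)\|^2 ,
\]
where the weights $a_i>0$ are chosen at the end.

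\textbf{Step 1 (regularity).} Assumption~\ref{as:A1} yields the standard facts:
\begin{itemize}
\item $\boldsymbol Y^*$ is $\kappa$-Lipschitz, $\boldsymbol Z^*$ is $O(\kappa^2)$-Lipschitz, and $\|\boldsymbol Z^*\|\le L_{f,0}/\mu_g$.
\item $\nabla\Phi$ is $L_\Phi=O(\kappa^3)$-Lipschitz.
\item $\|D_x^k-\nabla\Phi(x^k)\|^2\le C(y_k+z_k)$.
\end{itemize}
These follow exactly as in \citep{chen2024optimal}, since the blocks are just a product Euclidean space.

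\textbf{Step 2 (one-step descent estimates).}
\begin{itemize}
\item \emph{Upper level.} The descent lemma gives $\Phi(x^{k+1})\le\Phi(x^k)-\tfrac{\alpha}{2}\|\nabla\Phi(x^k)\|^2-\tfrac{\alpha}{2}\|h^k\|^2(1-\alpha L_\Phi)+\tfrac{\alpha}{2}\|h^k-\nabla\Phi(x^k)\|^2$. This is deterministic given $h^k$, which is why the moving average matters.
\item \emph{Lower level.} Expanding $\boldsymbol Y^{k+1}=\boldsymbol Y^k-\beta\widehat{\boldsymbol V}^{k+1}$ and using the unbiasedness \eqref{eq:main_moment_preservation} together with strong convexity gives $\E[\|\boldsymbol Y^{k+1}-\boldsymbol Y^*(x^k)\|^2\mid\mathcal F_k]\le(1-\beta\mu_g)y_k+\beta^2(\text{second moment})$.
\item \emph{Auxiliary level.} The $\boldsymbol Z$ recursion is handled the same way, using $\mathcal H\succeq\mu_g I$, with an extra cross term $C\gamma\, y_k$ coming from $D_Z^k-\mathcal H^k[\boldsymbol Z^k-\boldsymbol Z^*(x^k)]$.
\item \emph{Drift.} The shift from $\boldsymbol Y^*(x^k)$ to $\boldsymbol Y^*(x^{k+1})$ (and likewise for $\boldsymbol Z^*$) costs, via Young's inequality, $(1+\beta\mu_g/2)$ times the previous bound plus $O(\kappa^2\alpha^2/\beta)\|h^k\|^2$. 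This drift term is absorbed by the $-\tfrac{\alpha}{2}\|h^k\|^2$ term in the upper-level descent once $c_1,c_2$ are large enough relative to $\kappa$.
\item \emph{Moving average.} Write $h^{k+1}-\nabla\Phi(x^{k+1})=(1-\theta)(h^k-\nabla\Phi(x^k))+\theta(\widehat W^{k+1}-D_x^k)+\theta(D_x^k-\nabla\Phi(x^k))+(1-\theta)(\nabla\Phi(x^k)-\nabla\Phi(x^{k+1}))$. The noise term has zero conditional mean, so
\[
\E[\|h^{k+1}-\nabla\Phi(x^{k+1})\|^2\mid\mathcal F_k]\le(1-\theta)\|h^k-\nabla\Phi(x^k)\|^2+C\theta(y_k+z_k)+\tfrac{C\alpha^2}{\theta}\|h^k\|^2+\theta^2\,\mathrm{Var}.
\]
\end{itemize}

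\textbf{Step 3 (variance and projection factors).} All second-moment terms are bounded by \eqref{eq:main_moment_bound}:
\[
\E[\|\cdot\|^2\mid\mathcal F_k]\le C_m(1+y_k+z_k)\quad\text{plus } \|D\|^2\text{ terms}.
\]
The dependence of $C_m$ on $\omega_P$ is the crux. Two facts drive it:
\begin{itemize}
\item The Haar moments give $\E\|P P^\top X\|^2\le \omega_P\|X\|^2$.
\item The corrected HVP involves $P$ up to fourth order together with coefficients of order $m/r$, so its variance scales as $O(\omega_P^2)$ to $O(\omega_P^3)$ times $\|\boldsymbol Z^k\|^2$.
\end{itemize}
The multiplicative parts $C_m(y_k+z_k)\beta^2$ must be absorbed by the contractions $-\beta\mu_g y_k$ and $-\gamma\mu_g z_k$. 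This is exactly what forces $\alpha\le\bar\alpha/(1+\omega_P^3)$. The additive parts, of size $\alpha^2(1+\omega_P^4)$, survive and produce the $C_1(1+\omega_P^4)\alpha$ term. The weights $a_i$ also pick up $\omega_P$ factors, and $\mathcal L_0\le C_0(1+\omega_P^2)$ accounts for the first term of \eqref{eq:bros_stationarity_rate}.

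\textbf{Step 4 (telescoping).} Choose $a_1,a_2,a_3$ and $c_1,c_2,c_3$ in the standard order:
\begin{itemize}
\item $c_3$ is chosen to cancel the $\|h^k-\nabla\Phi\|^2$ term;
\item $c_1,c_2$ are chosen to dominate the $y,z$ cross terms generated by the $h$ and $\boldsymbol Z$ recursions.
\end{itemize}
With these choices, $\E\mathcal L_{k+1}\le\E\mathcal L_k-\tfrac{\alpha}{4}\E\|\nabla\Phi(x^k)\|^2+C_1(1+\omega_P^4)\alpha^2$. Summing over $k$ and dividing by $K\alpha/4$ gives \eqref{eq:bros_stationarity_rate} up to constants. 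Setting $\alpha=\min\{\bar\alpha/(1+\omega_P^3),K^{-1/2}\}$ then yields the stated rate. The bound for the random iterate $x^R$ holds because $\E\|\nabla\Phi(x^R)\|^2$ equals the average.

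The main obstacle is the bookkeeping in Step 3. The variance of the corrected estimator must be shown to be at most quadratic in $\|\boldsymbol Z^k\|$ with explicit $\omega_P$ powers, which needs fourth-moment Haar identities and the Rademacher bi-probe second moments. In addition, the state-dependent variance $\|\boldsymbol Z^k\|^2\le 2z_k+2\|\boldsymbol Z^*\|^2$ has to be split into a part absorbed by the contraction and an additive constant.
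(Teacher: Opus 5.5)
Your argument is sound and uses the same ingredients as the paper: Lemma~\ref{lem:bros_moment_preservation} for centering and variance, contraction-plus-drift recursions for $\boldsymbol Y$ and $\boldsymbol Z$ via Young's inequality, the moving-average error recursion, and the descent lemma. The difference is how you assemble them. The paper never builds a weighted potential. It proves each tracking bound in summed form, $\sum_k\alpha\E\|\boldsymbol Y^k-\boldsymbol Y^*(x^k)\|^2\le C_{Yx}S_x+C_{Y,0}+C_{Y,1}K\alpha^2$ together with its $\boldsymbol Z$ analogue. It substitutes these into the summed $h$-tracking inequality to get $S_h\le A_{h,0}+(\rho_0+c_3c_{wZ}C_{Zx}\alpha)S_x+A_{h,1}K\alpha^2+A_{h,2}\alpha$. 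It then closes with the summed descent bound $S_x\le 4(\Phi(x^0)-\Phi^*)+2S_h$ under the single scalar condition $\rho_0\le 1/8$, obtained by taking $c_1,c_2,c_3$ large. Stationarity follows from $\|\nabla\Phi(x^k)\|^2\le 2V_k$, which is where the factor $2$ in the statement comes from. Natural weights for you (e.g.\ $a_3=1/c_3$, with $a_2,a_1$ of order $1/c_2,1/c_1$ and large $c_i$) play exactly the role of the paper's chaining constants, so the two routes have the same strength. The paper's organization gives modular lemmas valid for arbitrary stepsize sequences and spares it from exhibiting weights. Yours gives a per-iteration descent inequality that keeps $-\tfrac{\alpha}{2}\|\nabla\Phi(x^k)\|^2$ directly, and it shows more clearly which terms are paid by the stepsize cap and which by the couplings. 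For instance, you can leave $\gamma^2\sigma_{S,2}^2\|\boldsymbol Y^k-\boldsymbol Y^*(x^k)\|^2$ as an $O(\alpha^2)$ term, absorbed once $\alpha\lesssim 1/\omega_P$. The paper instead turns it into an $O(\gamma)$ term via $\gamma\le 1/(4\mu_g)$, which puts $\sigma_{S,2}^2=4(\omega_P-1)L_{f,1}^2$ into $C_{Zx}$ and hence into $\rho_0$, so its $c_1$ must grow like $\sqrt{\omega_P}$. In neither route are the displayed exponents $1+\omega_P^2$, $1+\omega_P^4$, $1+\omega_P^3$ derived line by line; the paper also gets them by absorbing dimension- and model-dependent constants. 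Your Step~3 is therefore at the same level of rigor as the paper on that point.

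Three small corrections.
\begin{itemize}
\item In your decomposition of $h^{k+1}-\nabla\Phi(x^{k+1})$, the gradient-drift term must be $\nabla\Phi(x^k)-\nabla\Phi(x^{k+1})$ with coefficient $1$, not $1-\theta$. The bound you state afterwards is still the right one.
\item The coefficients in \eqref{eq:single_unbiased_estimator} are $O(1)$, not of order $m/r$. They are the $c_{A,t},c_{B,t},c_{C,t}$ of the appendix and are bounded because $r_\ell\ge 2$. The powers of $\omega_P$ come only from the lifts and the projected operator norm.
\item The variance bound does not require higher Haar moments. The paper gets $\E[\|\widehat{\mathcal H[\boldsymbol Z]}^{k+1}-\mathcal H^k[\boldsymbol Z^k]\|^2\mid\mathcal F_k]\le\Lambda_H\|\boldsymbol Z^k\|^2$, with $\Lambda_A=O(\omega_P^3)$, from crude facts: $\|\boldsymbol P\boldsymbol U\|^2\le\omega_P\|\boldsymbol U\|^2$, $\|\nabla^2_{\boldsymbol B\boldsymbol B}\tilde g_k(x^k,\mathbf 0)\|\le\omega_P L_{g,1}$, $\E[\|\boldsymbol P^{k\top}\boldsymbol Z\|^2\mid\mathcal F_k]=\|\boldsymbol Z\|^2$, and $\|\Delta\boldsymbol B^{k+1}\|^2=\sum_\ell r_\ell n_\ell$. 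The Weingarten identity for $\E[QAQ]$ is needed only for unbiasedness, which you already import from Lemma~\ref{lem:bros_moment_preservation}.
\end{itemize}
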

\end{limbox}

\textbf{Full-space-order rate and projection-ratio dependence.}
Theorem~\ref{thm:bros_main} separates the standard single-loop SBO order from the price of projection: the dependence on $K$ matches the $\mathcal O(K^{-1/2})$ stationarity rate of MA-SOBA with full-space hypergradient evaluation~\citep{chen2024optimal}, while the constants depend on $\omega_P=\max_\ell m_\ell/r_\ell$. Larger subspace ranks decrease $\omega_P$, improving both the transient term $C_0(1+\omega_P^2)/(K\alpha)$ and the stochastic-error term $C_1(1+\omega_P^4)\alpha$, and allowing a larger stepsize through $\alpha\le \bar\alpha/(1+\omega_P^3)$. Conversely, smaller ranks save memory and projected-oracle cost but inflate the variance of lifted stochastic directions, so the same stationarity level may require more iterations or a smaller stepsize. In the full-rank case $r_\ell=m_\ell$ for all layers, $\omega_P=1$, so the result recovers the usual full-space single-loop stochastic bilevel rate up to absolute constants.

\textbf{Role of the bias correction.}
The above MA-SOBA-order guarantee relies on the \ours correction because it centers the auxiliary linear-system recursion at the full-space Hessian action, which is required by Lemma~\ref{lem:bros_moment_preservation} to target the true hypergradient $\nabla\Phi(x)$. Appendix~\ref{app:counterexample} gives a deterministic quadratic counterexample showing that, without this correction, the naive projected auxiliary HVP can converge to a biased auxiliary fixed point and drive the upper update to a nonstationary point, even with no stochastic objective noise and exact lower-level solutions.

% \textbf{Projection-ratio dependence.}
% The factor $\omega_P=\max_\ell m_\ell/r_\ell$ makes the rank dependence explicit. Increasing the subspace ranks decreases $\omega_P$, which improves the bound by reducing both the transient term $C_0(1+\omega_P^2)/(K\alpha)$ and the stochastic-error term $C_1(1+\omega_P^4)\alpha$, and also allows a larger stepsize through $\alpha\le \bar\alpha/(1+\omega_P^3)$. Conversely, smaller ranks save memory and projected-oracle cost but inflate the variance of lifted stochastic directions, so the same stationarity level may require more iterations or a smaller stepsize. In the full-rank degenerate case $r_\ell=m_\ell$ for all layers, $\omega_P=1$ and $P_\ell P_\ell^\top=I_{m_\ell}$ almost surely, so there is no projection-induced variance inflation and the result reduces to the usual full-space single-loop stochastic bilevel rate up to absolute constants.

% \textbf{Necessity of the bias correction}
% The \ours correction is necessary because it centers the auxiliary linear-system recursion at the full-space Hessian action, which is required by Lemma~\ref{lem:bros_moment_preservation} to target the true hypergradient $\nabla\Phi(x)$. Appendix~\ref{app:counterexample} gives a deterministic quadratic counterexample showing that, without this correction, the naive projected auxiliary HVP can converge to a biased auxiliary fixed point and drive the upper update to a nonstationary point, even with no stochastic objective noise and exact lower-level solutions.

\section{Memory analysis}
\label{sec:memory_efficiency}
The memory savings of \ours stem directly from the randomized-subspace computations in Algorithm~\ref{alg:rso_masoba}. Under the peak-memory proxy considered here, the projected lower- and auxiliary-level computations reduce both the trainable-side activation footprint and the online lower/auxiliary directions from full-size tensors to subspace-size tensors. The auxiliary HVP/JVP queries are also evaluated in projected coordinates, reducing the backend-dependent memory pressure of second-order autodifferentiation; accordingly, we report this component by query scale rather than folding it into a backend-specific closed-form constant. Notably, RSO reports that activations can dominate trainable-side memory in large-batch regimes~\citep{chen2025memory}, making \ours's activation savings important when the lower-level task trains neural networks.

We analyze one trainable Transformer decoder block with a SwiGLU feed-forward network~\cite{shazeer2020glu} and standard multi-head attention~\cite{vaswani2017attention}, i.e., without grouped-query attention~\cite{ainslie2023gqa} (non-GQA). The component-wise derivation is provided in Appendix~\ref{subsec:bros_memory_non_gqa}. Let $n$ denote the hidden size, $s$ the sequence length, $b$ the micro-batch size, and $h$ the number of attention heads. We count scalar memory slots, omit upper-level variables and datatype constants, and use a peak-memory proxy that includes persistent lower/auxiliary states, saved activations, and online gradient/direction tensors. Under this accounting, the counted peak-memory proxy of \ours is
\begin{equation*}
M_{\rm peak}^{\text{\ourslast}}=24n^2\;(\text{state})+\frac{28}{3}bsn\;(\text{hidden})+2bhs^2\;(\text{attn})+4bsr\;(\text{proj-act})+\frac{62}{3}rn\;(\text{grad/dir}).
\end{equation*}
Table~\ref{tab:memory_constants_bros} gives the memory comparison between \ours and MA-SOBA~\citep{chen2024optimal}, FdeHBO~\citep{yang2023achieving}, and Penalty~\citep{mehra2021penalty}. The complete derivation is provided in Appendix~\ref{subsec:baseline_memory_non_gqa}.

\begin{table}[t!]
\centering
\caption{Non-GQA peak-memory proxy for one trainable decoder block. Counts use scalar slots, exclude upper variables, and report H/JVP query scale.}
\label{tab:memory_constants_bros}
\small
\setlength{\tabcolsep}{6.0pt}
\renewcommand{\arraystretch}{1.12}
\begin{tabular}{lccccc}
\toprule
Method & States & Activ. & Dirs. & H/JVP scale & Peak proxy \\
\midrule
MA-SOBA~\citep{chen2024optimal}
& $24n^2$
& $15bsn+2bhs^2$
& $24n^2$
& full H/JVP
& $48n^2+15bsn+2bhs^2$ \\
FdeHBO~\citep{yang2023achieving}
& $48n^2$
& $15bsn+2bhs^2$
& $24n^2$
& full FD dirs.
& $72n^2+15bsn+2bhs^2$ \\
Penalty~\citep{mehra2021penalty}
& $12n^2$
& $15bsn+2bhs^2$
& $12n^2$
& no aux. H/JVP
& $24n^2+15bsn+2bhs^2$ \\
\rowcolor{boxbg}
\ourslast
& $24n^2$
& $\frac{28}{3}bsn+2bhs^2+4bsr$
& $\frac{62}{3}rn$
& proj. H/JVP
& $24n^2+\frac{28}{3}bsn+2bhs^2+4bsr+\frac{62}{3}rn$ \\
\bottomrule
\end{tabular}
\end{table}

Let $\rho:=r/n$ and $\tau:=bs/n$. Compared with MA-SOBA in Table~\ref{tab:memory_constants_bros}, \ours reduces both the activation term and the online gradient/direction term for any nontrivial subspace ratio $\rho<1$; for example, at $(\tau,\rho)=(1,0.25)$, the displayed peak-memory proxy is reduced by about $37.3\%$. Penalty is the strongest memory-oriented surrogate baseline in the table, since it avoids the auxiliary linear system and its HVP/JVP queries. Even against Penalty, \ours can be more memory efficient in activation-dominated regimes; for example, at $(\tau,\rho)=(2,0.25)$, it saves about $7.7\%$ in the displayed proxy. Unlike Penalty, however, \ours preserves the exact single-loop hypergradient structure and the MA-SOBA-order convergence guarantee. These numbers do not include the additional peak-memory reduction from replacing full-space HVP/JVP autodiff queries by projected ones.

\section{Experiments}
We evaluate \ours on four bilevel learning tasks: MNIST hyper-data cleaning, data-mixture learning for language-model training, CIFAR-10 hyper-representation learning, and CIFAR-100 ViT sample reweighting. Across all experiments, \ours uses randomized subspace updates with a projection ratio $r/n=0.25$ unless otherwise specified.

\textbf{Hyper-data cleaning.} We follow the convex MNIST data-cleaning setting in \citep{shaban2019truncated}. The training labels are corrupted by $50\%$ random label noise, while clean validation and test sets are used for the upper-level objective and final evaluation. We compare Penalty~\citep{mehra2021penalty}, FdeHBO~\citep{yang2023achieving}, MA-SOBA~\citep{chen2024optimal}, ZOFO~\citep{shirkavand2025bilevel}, and \ourslast. Figure~\ref{fig:hyper_data_cleaning} shows that \ours reaches $84.30\%$ final test accuracy, closely matching MA-SOBA at $84.46\%$ and outperforming Penalty, FdeHBO, and ZOFO. The ablation further shows that projection ratios from $r/n = 0.10$ to $r/n = 0.50$ retain strong performance, while an extremely small ratio $r/n=0.01$ degrades accuracy slightly.

\textbf{Data-mixture learning.} We also evaluate data-mixture learning on $17$ copyright-free Pile domains. A $280$M GPT-style proxy model first learns the data-mixture weights, and a $280$M main model is then trained from scratch with the learned mixture. Both proxy and main models use $16$ layers, hidden size $1024$, and $16$ heads with the EleutherAI/gpt-neox-20b~\cite{black2022gpt} tokenizer. Figure~\ref{fig:data_mixture_main} reports the final per-domain evaluation. \ours achieves average loss $2.8109$, essentially matching MA-SOBA ($2.8098$) and ZOFO ($2.8114$), while avoiding the unstable domain concentration observed for DoReMi~\cite{xie2023doremi}.

\begin{figure}[t!]
\centering
\makebox[\textwidth][c]{%
\vtop{\hbox{\begin{minipage}{0.64\textwidth}
\centering
\vspace{-4.5mm}
\begin{minipage}[t]{0.485\linewidth}
\centering
\vspace{0pt}
\includegraphics[width=\linewidth]{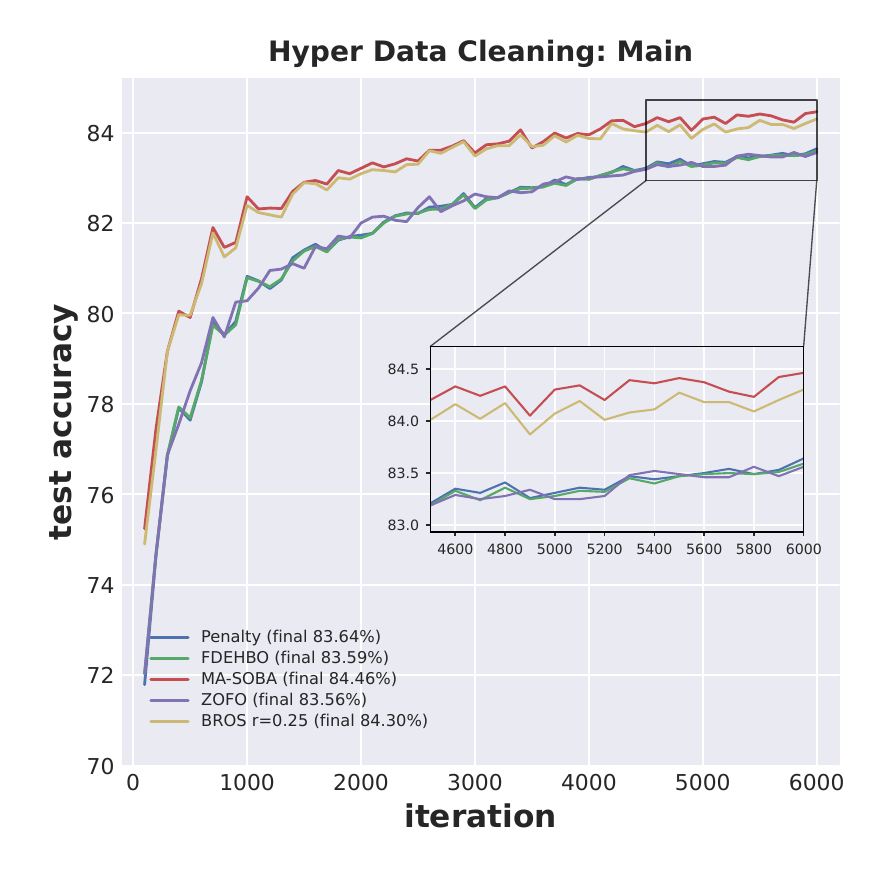}
\end{minipage}%
\hfill
\begin{minipage}[t]{0.485\linewidth}
\centering
\vspace{0pt}
\includegraphics[width=\linewidth]{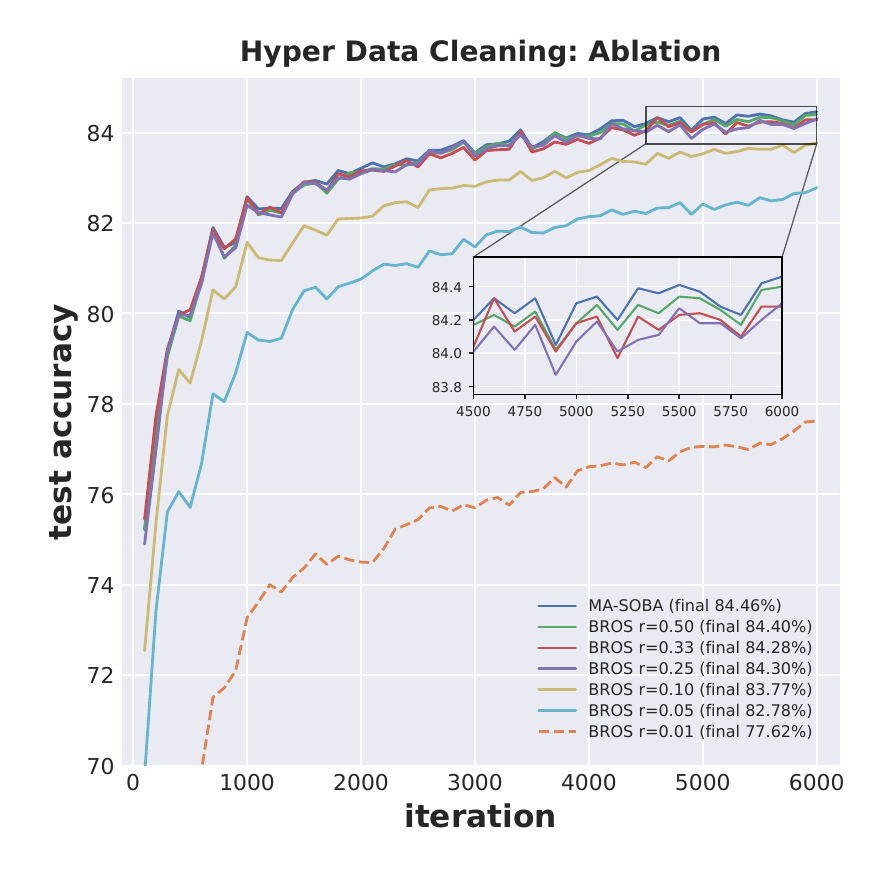}
\end{minipage}
\vspace{-4.5mm}
\caption{\textbf{Hyper-data cleaning on MNIST with $50\%$ label noise.} \textbf{Left:} Main comparison among Penalty, FdeHBO, MA-SOBA, ZOFO, and \ours with projection ratio $r/n=0.25$. \textbf{Right:} Projection-ratio ablation for \ourslast.}
\label{fig:hyper_data_cleaning}
\end{minipage}}}%
\hspace{0.02\textwidth}%
\vtop{\hbox{\begin{minipage}{0.32\textwidth}
\centering
\vspace{-4.5mm}
\includegraphics[width=0.97\linewidth]{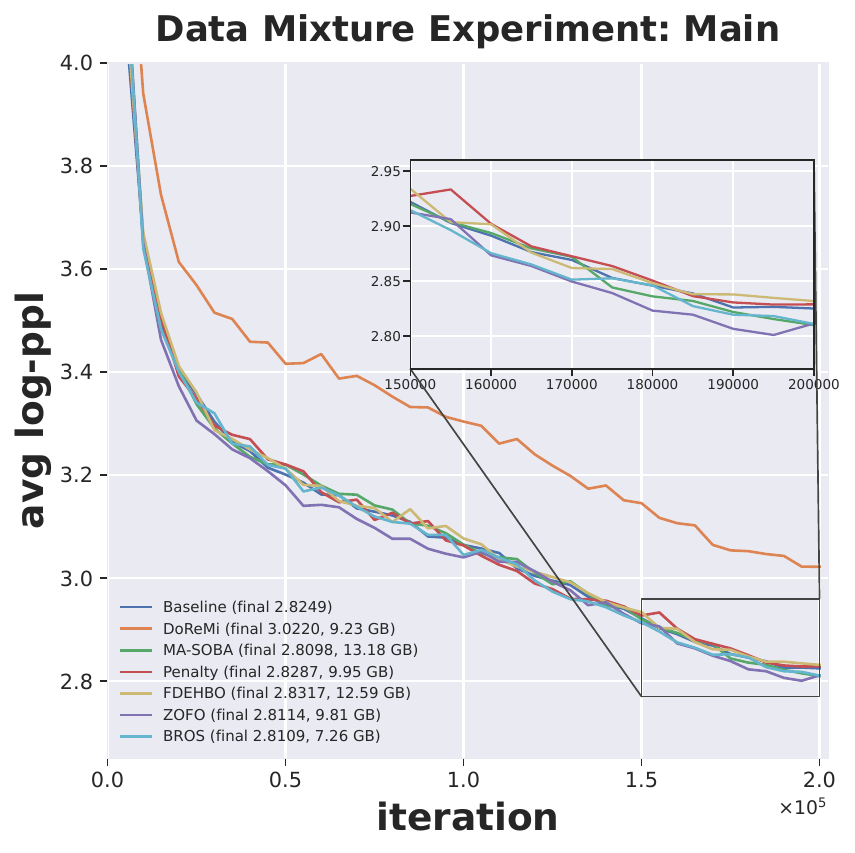}
\vspace{-4.5mm}
\caption{\textbf{Data-mixture learning.} Main-model evaluation after training with proxy-learned data mixtures.}
\label{fig:data_mixture_main}
\end{minipage}}}%
}
\end{figure}

The proxy-memory profiling results are included in the legend of Figure~\ref{fig:data_mixture_main}. \ours reduces peak memory by about $45\%$ relative to MA-SOBA and about $27\%$ relative to Penalty, while retaining a substantially higher step rate than ZOFO.

\textbf{Hyper-representation learning.} We further test \ours on CIFAR-10 hyper-representation learning, where the upper-level objective evaluates the learned representation and the lower-level problem trains predictors under that representation. We compare with the same bilevel baselines as above and report the full accuracy curves and projection-ratio ablation in Figure~\ref{fig:hyper_representation_appendix} of Appendix~\ref{app:additional_experiments}.

\textbf{ViT sample reweighting.} We also evaluate a larger CIFAR-100 sample-reweighting task with $20\%$ label noise, where the lower model is a small ViT trained from scratch and the upper level selects sample weights using a clean validation split. Detailed settings and the validation, test, and peak-memory comparison are given in Table~\ref{tab:vit_appendix} of Appendix~\ref{app:additional_experiments}.

\section{Conclusion and future work}
We propose \ourslast, a memory-efficient randomized-subspace method for stochastic bilevel optimization with multilayer matrix lower-level variables. By decomposing the lower and auxiliary recursions into randomized subspace updates and correcting the projection-induced auxiliary-HVP bias with Rademacher bi-probes, \ours preserves an unbiased hypergradient estimator and matches the $\mathcal O(\varepsilon^{-2})$ convergence order of MA-SOBA while reducing trainable-side memory. Future work includes further reducing the remaining activation memory in large architectures and extending the convergence analysis beyond the current SGD optimizer.

\newpage

{
\small

\bibliography{reference}
\bibliographystyle{icml2026}
}

%%%%%%%%%%%%%%%%%%%%%%%%%%%%%%%%%%%%%%%%%%%%%%%%%%%%%%%%%%%%
\newpage
\appendix
\begin{center}
{\sffamily\bfseries\fontsize{15}{18}\selectfont
    Appendix for ``BROS: Bias-Corrected Randomized Subspaces for Memory-Efficient Single-Loop Bilevel Optimization''\par}
\end{center}
\vspace{5mm}

\section{Related work}
\label{sec:related_work}
\label{app:related_work}

\textbf{Stochastic bilevel optimization. }
Bilevel optimization is widely used in hyperparameter optimization, meta-learning, data reweighting, and representation learning~\citep{franceschi2018bilevel,rajeswaran2019meta,bertinetto2018meta,ren2018learning,huang2024optimal}. The main technical difficulty is to estimate the hypergradient, since the upper-level gradient depends on how the lower-level solution changes with the upper-level variable. Early stochastic methods use double-loop updates, truncated backpropagation, approximate implicit differentiation, or two-timescale stochastic approximation~\citep{ghadimi2018approximation,hong2023two,ji2021bilevel,arbel2021amortized,chen2021tighter}. More recent single-loop methods update the lower variable, an auxiliary variable, and the upper variable in one loop. Representative examples include the single-timescale method~\citep{ghadimi2020single}, SUSTAIN~\citep{khanduri2021near}, SOBA/SABA~\citep{dagreou2022framework}, MA-SOBA~\citep{chen2024optimal}, and SPABA~\citep{chu2024spaba}. Beyond the single-machine setting, decentralized and federated bilevel methods extend the same nested optimization paradigm to networked or client-server systems, where hypergradient estimation interacts with communication topology, client participation, data heterogeneity, and consensus errors~\citep{yang2022decentralized,chen2023decentralized,tarzanagh2022fednest,li2023communication,yang2023simfbo,kong2025decentralized,zhu2024sparkle,yang2025first,he2024distributed}. These methods are the closest baseline for our work, but they still form both the lower update and the auxiliary update in the full parameter space.

\textbf{Bilevel methods with alternative hypergradient estimators or formulations. }
Another line of work studies bilevel optimization by changing how the hypergradient is computed, by adopting a different oracle model, or by reformulating the original bilevel problem. Penalty methods~\citep{mehra2021penalty,kwon2023penalty} replace the original bilevel problem with a penalty reformulation. Fully first-order methods~\citep{li2022fully,kwon2023fully} avoid Hessian- and Jacobian-vector products. FdeHBO~\citep{yang2023achieving} uses finite differences to avoid explicit Hessian- and Jacobian-vector products. Bilevel-ZOFO~\citep{shirkavand2025bilevel} combines zeroth-order outer updates with parameter-efficient inner updates. These methods reduce the burden of full hypergradient computation, but they no longer follow the same standard single-loop hypergradient update as methods such as MA-SOBA~\citep{chen2024optimal}: they either optimize a surrogate problem, use a different oracle, or rely on weaker hypergradient information.

\textbf{Memory-efficient subspace and low-rank training. }
Subspace and projection methods are a common way to reduce memory in single-level neural-network optimization. A broad class of methods forms updates in lower-dimensional spaces, either by projecting gradients or optimizer states~\citep{zhao2024galore,zhang2025breaking,he2024subspace,xiao2025coap,zhu2025apollo}, or by restricting learning to sampled low-dimensional subspaces, as in randomized subspace optimization (RSO)~\citep{chen2025memory}. Related parameter-efficient tuning methods also reduce the number of trainable degrees of freedom through low-dimensional modules, adapters, prompts, or low-rank updates~\citep{hu2022lora,lialin2023relora,liu2024dora,han2024sltrain,miles2024velora,houlsby2019parameter,liu2025cola,kong2025cr,mo2025parameter}. Subspace ideas also appear in bilevel optimization, but with a different goal: LancBiO~\citep{yang2024lancbio} uses Krylov/Lanczos subspaces to approximate Hessian-inverse-vector products by solving small projected linear systems. This reduces the computational cost of the auxiliary linear solve, but it is not designed to reduce trainable-side memory; maintaining Krylov bases and projected matrices can introduce additional storage. These works motivate our study, but they do not consider randomized subspace projection within single-loop bilevel hypergradient methods. In particular, they do not address how such projections affect the auxiliary Hessian-vector computation and the resulting hypergradient estimate.

\textbf{Neural bilevel applications. }
Our matrix-block setting is motivated by neural bilevel problems in which the lower-level variable is itself a multilayer model. Typical examples include data hyper-cleaning~\citep{shaban2019truncated}, sample reweighting and label correction~\citep{ren2018learning,zheng2021meta,tu2023learning,hao2024bilevel,yang2024tuning}, hyper-representation learning~\citep{franceschi2018bilevel,huang2024optimal,yang2025first}, and meta-learning~\citep{rajeswaran2019meta,bertinetto2018meta}. Recent data-selection and data-mixture methods for larger models were studied in~\citep{fan2023doge,xie2023doremi,shen2024seal,pan2025scalebio,wang2026draw}. These applications motivate our problem setting, but our contribution is algorithmic: we study how to reduce the memory used by stochastic bilevel optimization while keeping the hypergradient estimator unbiased.

\section{Additional algorithm discussion}
\label{app:additional_algorithm_discussion}

\subsection{Multilayer derivation of the \ours recursion}
\label{app:multilayer_algorithm_derivation}
\label{app:multilayer_correction}

Section~\ref{sec:algorithm} introduces the main idea through the single-layer case. We now spell out the corresponding multilayer notation used in Algorithm~\ref{alg:rso_masoba}. Recall that $\boldsymbol Y=(Y_1,\ldots,Y_L)$ with $Y_\ell\in\mathbb R^{m_\ell\times n_\ell}$. At iteration $k$, \ours samples independent scaled Haar projectors $P_\ell^k\in\mathbb R^{m_\ell\times r_\ell}$ and writes $\boldsymbol P^k=(P_1^k,\ldots,P_L^k)$. The reduced search variable is $\boldsymbol B=(B_1,\ldots,B_L)\in\mathcal B:=\prod_{\ell=1}^L\mathbb R^{r_\ell\times n_\ell}$, and blockwise multiplication is defined by $(\boldsymbol P^k\boldsymbol B)_\ell=P_\ell^kB_\ell$ and $(\boldsymbol P^{k\top}\boldsymbol U)_\ell=(P_\ell^k)^\top U_\ell$.

The multilayer RSO lower-level step is obtained from the local projected objective
\begin{equation*}
\tilde g_k(x,\boldsymbol B):=g(x,\boldsymbol Y^k+\boldsymbol P^k\boldsymbol B).
\end{equation*}
Thus the lower direction is formed in the reduced product space $\mathcal B$ and then lifted back to the matrix-block space as $\boldsymbol P^k\tilde{\boldsymbol V}^{k+1}$, where $\tilde{\boldsymbol V}^{k+1}$ estimates $\nabla_{\boldsymbol B}\tilde g_k(x^k,\mathbf 0)$. Similarly, $\tilde{\boldsymbol U}_Y^{k+1}$ estimates $\nabla_{\boldsymbol B}\tilde f_k(x^k,\mathbf 0)$, $\tilde U_x^{k+1}$ estimates $\nabla_x\tilde f_k(x^k,\mathbf 0)$, and the projected linear operators $\tilde{\mathcal H}^{k+1}:\mathcal B\to\mathcal B$ and $\tilde{\mathcal J}^{k+1}:\mathcal B\to\mathbb R^{d_x}$ estimate the projected HVP and JVP.

The auxiliary HVP is assembled blockwise. Let $E_t(W)\in\mathcal B$ denote the reduced block variable that inserts $W\in\mathbb R^{r_t\times n_t}$ into block $t$ and zeros elsewhere. For a target layer $\ell$, the full Hessian action decomposes as $(\mathcal H^k[\boldsymbol Z^k])_\ell=\sum_{t=1}^L\mathcal H_{\ell t}^k[Z_t^k]$. The off-diagonal contribution from block $t\ne\ell$ is lifted directly from the projected HVP:
\begin{equation*}
A_\ell^{(t),k+1}:=P_\ell^k\bigl(\tilde{\mathcal H}^{k+1}[E_t((P_t^k)^\top Z_t^k)]\bigr)_\ell.
\end{equation*}
For the diagonal contribution, the one-sided row projection creates the same self-block distortion as in the single-layer case. Therefore \ours applies the single-layer correction in~\eqref{eq:single_unbiased_estimator} layerwise:
\begin{equation*}
\widehat{\mathcal H_{\ell\ell}[Z_\ell]}^{k+1}:=\mathrm{CorrHVP}_\ell\!\left(P_\ell^k,Z_\ell^k,\tilde{\mathcal H}^{k+1};u_\ell^{k+1},\xi_\ell^{k+1}\right).
\end{equation*}
The full multilayer estimator used in Algorithm~\ref{alg:rso_masoba} is then
\begin{equation*}
(\widehat{\mathcal H[\boldsymbol Z]}^{k+1})_\ell:=\widehat{\mathcal H_{\ell\ell}[Z_\ell]}^{k+1}+\sum_{t\ne\ell}A_\ell^{(t),k+1}, \quad \ell=1,\ldots,L.
\end{equation*}
This estimator is plugged into the auxiliary residual $\widehat{\mathcal H[\boldsymbol Z]}^{k+1}-\boldsymbol P^k\tilde{\boldsymbol U}_Y^{k+1}$, while the projected mixed-Jacobian term $\tilde{\mathcal J}^{k+1}[\boldsymbol P^{k\top}\boldsymbol Z^k]$ enters the moving-average hypergradient estimator. The resulting coupled update is exactly the recursion stated in Algorithm~\ref{alg:rso_masoba}. The moment-preservation proof for this blockwise construction is given in Lemma~\ref{lem:unbiased_lifted_oracles}.

\subsection{Compatibility with momentum and adaptive optimizers}
\label{app:optimizer_compatibility}

\ours is an estimator-level construction rather than a complete prescription for the optimizer used to integrate the lower and auxiliary directions. At each iteration it produces two directions in the original matrix-block space:
\begin{equation*}
G_Y^{k+1}:=\boldsymbol P^k\tilde{\boldsymbol V}^{k+1}, \quad G_Z^{k+1}:=\widehat{\mathcal H[\boldsymbol Z]}^{k+1}-\boldsymbol P^k\tilde{\boldsymbol U}_Y^{k+1}.
\end{equation*}
Algorithm~\ref{alg:rso_masoba} uses the plain stochastic steps $\boldsymbol Y^{k+1}=\boldsymbol Y^k-\beta_kG_Y^{k+1}$ and $\boldsymbol Z^{k+1}=\boldsymbol Z^k-\gamma_kG_Z^{k+1}$ because this is the recursion analyzed in Section~\ref{sec:convergence}. In implementation, these two directions can instead be passed to a momentum or adaptive optimizer.

For a momentum SGD-like implementation, maintain velocity variables $\boldsymbol M_Y^k$ and $\boldsymbol M_Z^k$ with the same block structure as $\boldsymbol Y$ and $\boldsymbol Z$. Given momentum parameters $\rho_Y,\rho_Z\in[0,1)$, the lower and auxiliary updates can be written as
\begin{equation*}
\boldsymbol M_Y^{k+1}:=\rho_Y\boldsymbol M_Y^k+(1-\rho_Y)G_Y^{k+1}, \quad \boldsymbol Y^{k+1}:=\boldsymbol Y^k-\beta_k\boldsymbol M_Y^{k+1},
\end{equation*}
\begin{equation*}
\boldsymbol M_Z^{k+1}:=\rho_Z\boldsymbol M_Z^k+(1-\rho_Z)G_Z^{k+1}, \quad \boldsymbol Z^{k+1}:=\boldsymbol Z^k-\gamma_k\boldsymbol M_Z^{k+1}.
\end{equation*}
Thus the only difference from Algorithm~\ref{alg:rso_masoba} is that the projected lower direction and the corrected auxiliary direction are accumulated through momentum before updating the persistent variables.

For an Adam-like implementation, use $G_Y^{k+1}$ and $G_Z^{k+1}$ as gradient-like inputs to first- and second-moment recursions. For $\boldsymbol Y$, with moment parameters $\rho_1,\rho_2\in[0,1)$, numerical constant $\epsilon_{\rm A}>0$, and decoupled weight decay $\lambda_Y\ge0$, one may use
\begin{equation*}
\boldsymbol m_Y^{k+1}:=\rho_1\boldsymbol m_Y^k+(1-\rho_1)G_Y^{k+1}, \quad \boldsymbol q_Y^{k+1}:=\rho_2\boldsymbol q_Y^k+(1-\rho_2)(G_Y^{k+1}\odot G_Y^{k+1}),
\end{equation*}
\begin{equation*}
\widehat{\boldsymbol m}_Y^{k+1}:=\frac{\boldsymbol m_Y^{k+1}}{1-\rho_1^{k+1}}, \quad \widehat{\boldsymbol q}_Y^{k+1}:=\frac{\boldsymbol q_Y^{k+1}}{1-\rho_2^{k+1}}, \quad \boldsymbol Y^{k+1}:=(1-\beta_k\lambda_Y)\boldsymbol Y^k-\beta_k\frac{\widehat{\boldsymbol m}_Y^{k+1}}{\sqrt{\widehat{\boldsymbol q}_Y^{k+1}}+\epsilon_{\rm A}}.
\end{equation*}
The auxiliary variable is updated analogously:
\begin{equation*}
\boldsymbol m_Z^{k+1}:=\rho_1\boldsymbol m_Z^k+(1-\rho_1)G_Z^{k+1}, \quad \boldsymbol q_Z^{k+1}:=\rho_2\boldsymbol q_Z^k+(1-\rho_2)(G_Z^{k+1}\odot G_Z^{k+1}),
\end{equation*}
\begin{equation*}
\widehat{\boldsymbol m}_Z^{k+1}:=\frac{\boldsymbol m_Z^{k+1}}{1-\rho_1^{k+1}}, \quad \widehat{\boldsymbol q}_Z^{k+1}:=\frac{\boldsymbol q_Z^{k+1}}{1-\rho_2^{k+1}}, \quad \boldsymbol Z^{k+1}:=(1-\gamma_k\lambda_Z)\boldsymbol Z^k-\gamma_k\frac{\widehat{\boldsymbol m}_Z^{k+1}}{\sqrt{\widehat{\boldsymbol q}_Z^{k+1}}+\epsilon_{\rm A}}.
\end{equation*}
All products, square roots, and divisions in these Adam-like formulas are understood elementwise within each matrix block. These choices do not change the Rademacher bi-probe formula or the blockwise corrected HVP estimator; they only change how the resulting directions are accumulated across iterations.

There are two implementation conventions. The most direct convention maintains optimizer states in the original matrix-block space and feeds the lifted directions $G_Y^{k+1}$ and $G_Z^{k+1}$ to the optimizer. This is simple and compatible with standard optimizer implementations. A more memory-oriented convention applies momentum or adaptive updates to the reduced variables inside the current projected subproblem. In that case, the optimizer state is tied to the sampled projector and must be reset, refreshed, or transported when the projector changes. This projected-state convention is natural for subspace training, but its exact behavior depends on the projector refresh schedule and the optimizer-state transport rule.

The convergence theorem in the main text is stated for the SGD-style recursion. The key unbiasedness property of the \ours directions remains an estimator property, but a convergence proof for momentum or adaptive variants would need additional assumptions controlling the optimizer states or preconditioners. We therefore treat these variants as implementation-compatible extensions and leave their nonasymptotic analysis to future work.

% Appendix (technical proofs)
\section{Theoretical analysis}
\label{app:full_proofs}

\subsection{Assumptions}
\begin{assumption}\label{as:A1-1}
There exist constants $\mu_g, L_{f, 0}, L_{f, 1}, L_{g, 1}, L_{g, 2}$ such that:
\begin{enumerate}
\item $\nabla f$ and $\nabla g$ are $L_{f,1}$- and $L_{g,1}$-Lipschitz continuous in $(x,\boldsymbol{Y})$, using Euclidean norm for $x$ and the product Frobenius norm for $\boldsymbol{Y}$;
\item $\nabla^2 g$ is $L_{g,2}$-Lipschitz continuous as a block Hessian operator on $\R^{d_x}\times\mathcal Y$;
\item $\|\nabla_{\boldsymbol{Y}} f(x,\boldsymbol{Y}^*(x))\| \leq L_{f,0}$ for all $x\in\R^{d_x}$;
\item for every fixed $x$, $\boldsymbol{Y}\mapsto g(x,\boldsymbol{Y})$ is $\mu_g$-strongly convex in the product Frobenius norm.
\end{enumerate}
Moreover, define $L_{\max}:=\max \{L_{f,0}, L_{f,1}, L_{g,1}, L_{g,2}\}$ and $\kappa:=L_{\max} / \mu_g$.
\end{assumption}

\begin{assumption}[Stochastic projected oracles and filtration]\label{as:A2-1}
For any $k \geq 0$, let
\begin{equation*}
\mathcal F_k
\!=\sigma\{x^0,\ldots,x^k,\ \boldsymbol{Y}^0,\ldots,\boldsymbol{Y}^k,\ \boldsymbol{Z}^0,\ldots,\boldsymbol{Z}^k,\ h^0,\ldots,h^k,\ \boldsymbol{P}^0,\ldots,\boldsymbol{P}^{k-1}\}, \quad
\mathcal F_k^{+}
\!=\sigma(\mathcal F_k \cup\{\boldsymbol{P}^k\}).
\end{equation*}
Define the projected functions on $\mathcal B:=\prod_{\ell=1}^L \R^{r_\ell\times n_\ell}$ by
\begin{equation*}
\tilde f_k(x,\boldsymbol{B}):=f(x,\boldsymbol{Y}^k+\boldsymbol{P}^k\boldsymbol{B}),\quad \tilde g_k(x,\boldsymbol{B}):=g(x,\boldsymbol{Y}^k+\boldsymbol{P}^k\boldsymbol{B}),
\end{equation*}
where $(\boldsymbol{P}^k\boldsymbol{B})_\ell:=P_\ell^k B_\ell$ and $(\boldsymbol{P}^{k\top}\boldsymbol{U})_\ell:=(P_\ell^k)^\top U_\ell$. The stochastic oracles $\tilde U_x^{k+1}\in\R^{d_x}, \tilde{\boldsymbol U}_Y^{k+1}\in\mathcal B, \tilde{\boldsymbol V}^{k+1}\in\mathcal B,$ and the stochastic linear operators $\tilde{\mathcal H}^{k+1}:\mathcal B\to\mathcal B, \tilde{\mathcal J}^{k+1}:\mathcal B\to\R^{d_x}$ are conditionally unbiased given $\mathcal F_k^+$:
\begin{equation*}
\begin{aligned}
\E[\tilde U_x^{k+1}\mid \mathcal F_k^+]&=\nabla_x \tilde f_k(x^k,0),\\
\E[\tilde{\boldsymbol U}_Y^{k+1}\mid \mathcal F_k^+]&=\nabla_{\boldsymbol B} \tilde f_k(x^k,\mathbf 0),\\
\E[\tilde{\boldsymbol V}^{k+1}\mid \mathcal F_k^+]&=\nabla_{\boldsymbol B} \tilde g_k(x^k,\mathbf 0),\\
\E[\tilde{\mathcal H}^{k+1}[W]\mid \mathcal F_k^+]
\!&=\nabla_{\boldsymbol B\boldsymbol B}^2\tilde g_k(x^k,\mathbf 0)[W], \quad \forall W\in\mathcal B,\\
\E[\tilde{\mathcal J}^{k+1}[W]\mid \mathcal F_k^+]
\!&=\nabla_{x\boldsymbol B}^2\tilde g_k(x^k,\mathbf 0)[W], \quad \forall W\in\mathcal B.
\end{aligned}
\end{equation*}
Moreover, there exist constants $\tilde{\sigma}_{f,1}, \tilde{\sigma}_{f,2}, \tilde{\sigma}_{g,1}, \tilde{\sigma}_{g,2}\ge 0$ such that
\begin{equation*}
\begin{aligned}
\E [ \| \tilde U_x^{k+1} - \nabla_x \tilde f_k(x^k, 0) \|^2 \mid \mathcal F_k^+ ] &\leq \tilde{\sigma}_{f,1}^2,\\
\E [ \| \tilde{\boldsymbol U}_Y^{k+1} - \nabla_{\boldsymbol B} \tilde f_k(x^k, \mathbf 0) \|^2 \mid \mathcal F_k^+ ] &\leq \tilde{\sigma}_{f,2}^2,\\
\E [ \| \tilde{\boldsymbol V}^{k+1} - \nabla_{\boldsymbol B} \tilde g_k(x^k, \mathbf 0) \|^2 \mid \mathcal F_k^+ ] &\leq \tilde{\sigma}_{g,1}^2,\\
\E [ \| \tilde{\mathcal H}^{k+1}[W] - \nabla_{\boldsymbol B\boldsymbol B}^2 \tilde g_k(x^k, \mathbf 0)[W] \|^2 \mid \mathcal F_k^+ ] &\leq \tilde{\sigma}_{g,2}^2\|W\|^2, \quad \forall W\in\mathcal B,\\
\E [ \| \tilde{\mathcal J}^{k+1}[W] - \nabla_{x\boldsymbol B}^2 \tilde g_k(x^k, \mathbf 0)[W] \|^2 \mid \mathcal F_k^+ ] &\leq \tilde{\sigma}_{g,2}^2\|W\|^2, \quad \forall W\in\mathcal B.
\end{aligned}
\end{equation*}
In the algorithm, the layerwise probes $\{u_\ell^{k+1},\xi_\ell^{k+1}\}_{\ell=1}^L$ are sampled independently of $\mathcal F_k^+$ and independently of the random oracle objects above. Define $\mathcal F_k^{++}:=\sigma\!\left(\mathcal F_k^+,\{u_\ell^{k+1},\xi_\ell^{k+1}\}_{\ell=1}^L\right)$. Conditioned on $\mathcal F_k^+$, the random objects $\tilde U_x^{k+1},\tilde{\boldsymbol U}_Y^{k+1},\tilde{\boldsymbol V}^{k+1},\tilde{\mathcal J}^{k+1},\tilde{\mathcal H}^{k+1}$ are mutually independent. The operators $\tilde{\mathcal H}^{k+1}$ and $\tilde{\mathcal J}^{k+1}$ are sampled once at iteration $k$ and may be queried multiple times within the same iteration; no additional conditional independence is assumed between distinct queries to the same operator realization. Moreover, the displayed first- and second-moment relations for $\tilde{\mathcal H}^{k+1}$ and $\tilde{\mathcal J}^{k+1}$ extend to every $\mathcal F_k^{++}$-measurable random direction $W\in\mathcal B$ with finite conditional second moment.
\end{assumption}

\begin{assumption}[Random orthogonal projection]\label{as:A3-1}
Conditional on $\mathcal F_k$, the matrices $\{P_\ell^k\}_{\ell=1}^L$ are independent and satisfy
\begin{equation*}
P_\ell^k=\sqrt{\frac{m_\ell}{r_\ell}}\,O_{\ell,r_\ell}^k,
\end{equation*}
where $O_{\ell,r_\ell}^k$ is formed by the first $r_\ell$ columns of a Haar-uniform $O_\ell^k\in O(m_\ell)$. Consequently, ${(P_\ell^k)}^{\top} P_\ell^k=(m_\ell/r_\ell)I_{r_\ell}$ and $\E[P_\ell^k{(P_\ell^k)}^\top\mid\mathcal F_k]=I_{m_\ell}$. We assume $r_\ell\ge 2$ for all $\ell$.
\end{assumption}

\subsection{Proof roadmap}
This appendix proves the two claims stated in Section~\ref{sec:convergence}: the moment-preservation lemma, which justifies the bias-corrected projected directions, and the main convergence theorem, which gives the $\mathcal O(\varepsilon^{-2})$ sample complexity with projection-ratio dependent constants. For the convergence part, we use the Lyapunov quantity
\begin{equation*}
V_k:=\|x_+^k-x^k\|^2+\|h^k-\nabla\Phiobj(x^k)\|^2,
\end{equation*}
where $x_+^k:=x^k-h^k$. It is enough to prove $\frac{1}{K}\sum_{k=0}^{K-1}\E V_k\le C_0/(K\alpha)+C_1\alpha$, because $\|\nabla\Phiobj(x^k)\|^2\le 2V_k$. The proof is organized as follows.

\begin{itemize}[leftmargin=*]
\item \textbf{Step 1 (bias-corrected moment preservation). } Lemmas~\ref{lem:smoothness_consequences}--\ref{lem:unbiased_lifted_oracles} establish the technical bridge behind Lemma~\ref{lem:bros_moment_preservation}. The standard implicit-function identities are first extended to the product Frobenius space. The main new point is the auxiliary Hessian action: Lemma~\ref{lem:projection_moments_matrix} identifies the projection distortion in the self-layer blocks, Lemma~\ref{lem:probe_cancellation} shows that the Rademacher bi-probe does not introduce cross-layer mean errors, and Lemma~\ref{lem:unbiased_lifted_oracles} proves that the final \ours directions satisfy
\begin{equation*}
\E[\widehat{\boldsymbol V}^{k+1}\mid\mathcal F_k]=D_Y^k,\quad \E[\widehat{\boldsymbol S}^{k+1}\mid\mathcal F_k]=D_Z^k,\quad \E[\widehat W^{k+1}\mid\mathcal F_k]=D_x^k,
\end{equation*}
with conditional second moments controlled by the tracking errors. The leading projection-ratio dependence is tracked through $\omega_P=\max_\ell m_\ell/r_\ell$, while fixed dimension- and model-dependent constants are absorbed into the generic constants.

\item \textbf{Step 2 (tracking the lower and auxiliary solutions). } With the centered directions from Step 1, Lemma~\ref{lem:yz_tracking_rso} proves that the iterates $\boldsymbol Y^k$ and $\boldsymbol Z^k$ track the moving targets $\boldsymbol Y^*(x^k)$ and $\boldsymbol Z^*(x^k)$. In weighted-sum form, the tracking errors are bounded by the primal progress term $\sum_k\alpha\E\|x_+^k-x^k\|^2$, initialization constants, and stochastic terms of order $K\alpha^2$.

\item \textbf{Step 3 (hypergradient tracking). } Lemma~\ref{lem:hypergrad_bias} converts the lower/auxiliary tracking errors from Step 2 into a bound on the bias between the centered estimator $\E[\widehat W^{k+1}\mid\mathcal F_k]$ and the true hypergradient $\nabla\Phiobj(x^k)$. Lemma~\ref{lem:variance_rso} controls the variance of $\widehat W^{k+1}$, and Lemma~\ref{lem:h_tracking_rso} turns these estimates into a recursion for the moving-average error $S_h:=\sum_{k=0}^{K-1}\alpha\E\|h^k-\nabla\Phiobj(x^k)\|^2$:
\begin{equation*}
S_h\le A_0+\rho_0S_x+A_1K\alpha^2+A_2\alpha,
\end{equation*}
where $S_x:=\sum_{k=0}^{K-1}\alpha\E\|x_+^k-x^k\|^2$ and $\rho_0$ is defined in \eqref{eq:def_rho0_matrix_thm}.

\item \textbf{Step 4 (closing the descent recursion). } Lemma~\ref{lem:primal_sum_h} gives the descent relation $S_x\le 4(\Phiobj(x^0)-\Phiobj^*)+2S_h$. Substituting this into the Step 3 recursion and choosing the coupled stepsizes so that $\rho_0$ and the remaining $\alpha$-dependent coefficient are small allows the $S_h$ term to be absorbed. This yields
\begin{equation*}
S_h+S_x\le \mathcal O(1)+\mathcal O(K\alpha^2).
\end{equation*}
Dividing by $K\alpha$ gives the formal bound in Theorem~\ref{thm:rso_masoba_convergence}. After absorbing fixed dimension- and model-dependent quantities into generic constants, the displayed projection-ratio dependence in the main-text form of Theorem~\ref{thm:bros_main} is expressed through $\omega_P$: the transient and stochastic-error terms scale as $1+\omega_P^2$ and $1+\omega_P^4$, respectively, and the admissible stepsize is $\alpha\le \bar\alpha/(1+\omega_P^3)$. Choosing $\alpha=\Theta(K^{-1/2})$ then gives $\E\|\nabla\Phiobj(x^R)\|^2=\mathcal O((1+\omega_P^4)K^{-1/2})$, hence $K=\mathcal O((1+\omega_P^4)^2\varepsilon^{-2})$ for an $\varepsilon$-stationary point.
\end{itemize}

\subsection{Supporting lemmas}

\begin{lemma}[Product-space smoothness consequences]\label{lem:smoothness_consequences}
Suppose that \emph{(A1)} holds. Let $\boldsymbol Y^*(x)$ be the unique minimizer of $\boldsymbol Y\mapsto g(x,\boldsymbol Y)$ and define
\begin{equation}\label{eq:def_Zstar}
\boldsymbol Z^*(x):=\mathcal H(x,\boldsymbol Y^*(x))^{-1}[\nabla_{\boldsymbol Y}f(x,\boldsymbol Y^*(x))]\in\mathcal Y.
\end{equation}
Then the following statements hold.
\begin{enumerate}
\item \textbf{Implicit hypergradient formula.}
The bilevel objective $\Phiobj(x)=f(x,\boldsymbol Y^*(x))$ is differentiable and
\begin{equation}\label{eq:hypergrad_formula}
\nabla\Phiobj(x)=\nabla_x f(x,\boldsymbol Y^*(x))-\mathcal J(x,\boldsymbol Y^*(x))[\boldsymbol Z^*(x)].
\end{equation}

\item \textbf{Lipschitz continuity of $\boldsymbol Y^*(\cdot)$.}
The mapping $\boldsymbol Y^*(\cdot)$ is Lipschitz continuous with
\begin{equation}\label{eq:LY_star}
\|\boldsymbol Y^*(x)-\boldsymbol Y^*(\tilde x)\|\le L_{Y^*}\|x-\tilde x\|,
\quad
L_{Y^*}:=\frac{L_{g,1}}{\mu_g}\le \kappa.
\end{equation}

\item \textbf{Lipschitz continuity of $\boldsymbol Z^*(\cdot)$.}
The mapping $\boldsymbol Z^*(\cdot)$ is Lipschitz continuous with
\begin{equation}\label{eq:LZ_star}
\|\boldsymbol Z^*(x)-\boldsymbol Z^*(\tilde x)\|
\le L_{Z^*}\|x-\tilde x\|,
\quad
L_{Z^*}:=\sqrt{1+L_{Y^*}^2}\left(\frac{L_{f,1}}{\mu_g}+\frac{L_{f,0}L_{g,2}}{\mu_g^2}\right).
\end{equation}
In particular, if $\kappa\ge 1$ then $L_{Z^*}\le 2\sqrt{2}\,\kappa^3$.

\item \textbf{Lipschitz continuity of $\nabla\Phiobj$.}
The gradient $\nabla\Phiobj$ is Lipschitz continuous with
\begin{equation}\label{eq:Lnabla_Phi}
\|\nabla\Phiobj(x)-\nabla\Phiobj(\tilde x)\|
\le L_{\nabla\Phi}\|x-\tilde x\|,
\end{equation}
where one may take
\begin{equation}\label{eq:Lnabla_Phi_expr}
L_{\nabla\Phi}:=
\sqrt{1+L_{Y^*}^2}\left(
L_{f,1}+\frac{L_{f,0}L_{g,2}}{\mu_g}
+\frac{L_{g,1}L_{f,1}}{\mu_g}
+\frac{L_{g,1}L_{f,0}L_{g,2}}{\mu_g^2}
\right).
\end{equation}

\item \textbf{Uniform bound on $\|\boldsymbol Z^*(x)\|$.}
For all $x$,
\begin{equation}\label{eq:Zstar_bound}
\|\boldsymbol Z^*(x)\|\le \frac{L_{f,0}}{\mu_g}.
\end{equation}
\end{enumerate}
\end{lemma}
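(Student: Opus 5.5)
The plan is to follow the standard implicit-function route (as in Ghadimi--Wang and MA-SOBA), working throughout in the product Frobenius space $\mathcal Y$, which is just a Euclidean space of dimension $\sum_\ell m_\ell n_\ell$.

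\textbf{Implicit hypergradient formula.} Strong convexity (item 4 of Assumption~\ref{as:A1-1}) gives a unique minimizer $\boldsymbol Y^*(x)$, characterized by $\nabla_{\boldsymbol Y}g(x,\boldsymbol Y^*(x))=0$. It also gives $\mathcal H(x,\boldsymbol Y)\succeq\mu_g I$, so $\mathcal H$ is invertible with $\|\mathcal H^{-1}\|_{\rm op}\le 1/\mu_g$. Since $\nabla^2 g$ is continuous (indeed Lipschitz), the implicit function theorem applies and shows that $\boldsymbol Y^*$ is $C^1$ with
\[
D\boldsymbol Y^*(x)=-\mathcal H^{-1}\mathcal J^\top .
\]
Here $\mathcal J^\top:\mathbb R^{d_x}\to\mathcal Y$ is the adjoint of the mixed block. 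The chain rule then gives
\[
\nabla\Phi(x)=\nabla_xf+D\boldsymbol Y^*(x)^\top\nabla_{\boldsymbol Y}f=\nabla_xf-\mathcal J\,\mathcal H^{-1}\nabla_{\boldsymbol Y}f,
\]
evaluated at $(x,\boldsymbol Y^*(x))$. This is exactly \eqref{eq:hypergrad_formula} with $\boldsymbol Z^*(x)$ as in \eqref{eq:def_Zstar}.

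\textbf{Items 2 and 5.} For item 2, bound $\|D\boldsymbol Y^*\|_{\rm op}\le\|\mathcal H^{-1}\|\,\|\mathcal J\|\le L_{g,1}/\mu_g$, using that $\|\mathcal J\|_{\rm op}\le L_{g,1}$ because $\nabla g$ is $L_{g,1}$-Lipschitz. Integrating along segments gives $L_{Y^*}$. Alternatively, compare the first-order conditions at $x$ and $\tilde x$ using strong monotonicity of $\nabla_{\boldsymbol Y}g(x,\cdot)$. Item 5 is immediate:
\[
\|\boldsymbol Z^*\|\le\|\mathcal H^{-1}\|\,\|\nabla_{\boldsymbol Y}f(x,\boldsymbol Y^*(x))\|\le L_{f,0}/\mu_g .
\]

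\textbf{Item 3.} Write $w=(x,\boldsymbol Y^*(x))$ and $\tilde w=(\tilde x,\boldsymbol Y^*(\tilde x))$. By item 2,
\[
\|w-\tilde w\|\le\sqrt{1+L_{Y^*}^2}\,\|x-\tilde x\| .
\]
Use the splitting
\[
\mathcal H(w)^{-1}a-\mathcal H(\tilde w)^{-1}\tilde a=\mathcal H(w)^{-1}(a-\tilde a)+\mathcal H(w)^{-1}\bigl(\mathcal H(\tilde w)-\mathcal H(w)\bigr)\mathcal H(\tilde w)^{-1}\tilde a ,
\]
with $a=\nabla_{\boldsymbol Y}f(w)$ and $\tilde a=\nabla_{\boldsymbol Y}f(\tilde w)$. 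The first term is bounded by $(L_{f,1}/\mu_g)\|w-\tilde w\|$. For the second, use the $L_{g,2}$-Lipschitz Hessian and $\|\tilde a\|\le L_{f,0}$; the bound on $\|\tilde a\|$ holds only along $\boldsymbol Y^*(\cdot)$, which is exactly where it is evaluated. This gives $(L_{f,0}L_{g,2}/\mu_g^2)\|w-\tilde w\|$. Combining yields $L_{Z^*}$. The $\kappa^3$ bound follows from $L_{Y^*}\le\kappa$, $\kappa\ge1$, and $\sqrt{1+\kappa^2}\le\sqrt2\,\kappa$, provided the constants are read as $L_{f,1}/\mu_g\le\kappa$ and $L_{f,0}L_{g,2}/\mu_g^2\le\kappa^2$ up to the normalization implicit in $\kappa$. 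This constant bookkeeping is the only delicate point: the units of $L_{\max}/\mu_g$ have to be checked, and I expect the claimed bound to hold with $L_{\max}$ playing that role.

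\textbf{Item 4.} Decompose
\[
\nabla\Phi(x)-\nabla\Phi(\tilde x)=\bigl[\nabla_xf(w)-\nabla_xf(\tilde w)\bigr]-\bigl[\mathcal J(w)-\mathcal J(\tilde w)\bigr]\boldsymbol Z^*(x)-\mathcal J(\tilde w)\bigl[\boldsymbol Z^*(x)-\boldsymbol Z^*(\tilde x)\bigr].
\]
Bound the three pieces by:
\begin{itemize}
\item $L_{f,1}\|w-\tilde w\|$;
\item $L_{g,2}(L_{f,0}/\mu_g)\|w-\tilde w\|$, using item 5;
\item $L_{g,1}L_{Z^*}\|x-\tilde x\|$, using item 3.
\end{itemize}
Substituting $L_{Z^*}$ and factoring out $\sqrt{1+L_{Y^*}^2}$ reproduces \eqref{eq:Lnabla_Phi_expr} exactly.

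The main obstacle is not conceptual, since everything is a finite-dimensional Euclidean argument. The point needing care is justifying that the operator norms of the blocks $\mathcal J,\mathcal H$ and their Lipschitz moduli are controlled by the Lipschitz constants of the full gradient and Hessian on $\mathbb R^{d_x}\times\mathcal Y$. This holds because each block is a compression of the full operator by coordinate projections, so its norm and Lipschitz modulus are no larger than those of the full operator.
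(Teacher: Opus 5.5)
Your proposal is correct and follows the paper's proof essentially step for step. It uses implicit differentiation for item 1, the same resolvent splitting of $\mathcal H(w)^{-1}-\mathcal H(\tilde w)^{-1}$ for item 3, and the same three-term decomposition for item 4, which reproduces \eqref{eq:Lnabla_Phi_expr} exactly (for item 2 the paper uses the strong-monotonicity comparison you list as your alternative). Your hedge on the $\kappa^3$ bound is unnecessary: $L_{f,1}/\mu_g\le\kappa$ and $L_{f,0}L_{g,2}/\mu_g^2\le\kappa^2$ hold directly from the definition $L_{\max}=\max\{L_{f,0},L_{f,1},L_{g,1},L_{g,2}\}$, so $L_{Z^*}\le\sqrt2\,\kappa(\kappa+\kappa^2)\le 2\sqrt2\,\kappa^3$ whenever $\kappa\ge1$, a step the paper itself leaves implicit.
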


\begin{proof}
We endow $\R^{d_x}\times\mathcal Y$ with the product norm $\|(x,\boldsymbol Y)\|:=\sqrt{\|x\|^2+\|\boldsymbol Y\|^2}$.

\textbf{Step 1: implicit differentiation and \eqref{eq:hypergrad_formula}. }
Fix $x$ and write $\boldsymbol Y^*:=\boldsymbol Y^*(x)$. Define
\begin{equation*}
F(x,\boldsymbol Y):=\nabla_{\boldsymbol Y}g(x,\boldsymbol Y)\in\mathcal Y.
\end{equation*}
By Assumption~\ref{as:A1}, $g$ is twice continuously differentiable, so $F$ is continuously differentiable with
\begin{equation*}
\nabla_{\boldsymbol Y}F(x,\boldsymbol Y)=\mathcal H(x,\boldsymbol Y), \quad \nabla_x F(x,\boldsymbol Y)=\nabla_{\boldsymbol Y x}^2 g(x,\boldsymbol Y).
\end{equation*}
Since $\boldsymbol Y\mapsto g(x,\boldsymbol Y)$ is $\mu_g$-strongly convex in the product Frobenius norm, its Hessian is uniformly positive definite:
\begin{equation}\label{eq:H_sc_product}
\langle \boldsymbol W,\mathcal H(x,\boldsymbol Y)[\boldsymbol W]\rangle
\ge \mu_g \|\boldsymbol W\|^2,
\quad
\forall \boldsymbol W\in\mathcal Y,\ \forall (x,\boldsymbol Y).
\end{equation}
Hence $\mathcal H(x,\boldsymbol Y^*)$ is invertible. Applying the implicit function theorem to $F(x,\boldsymbol Y)=0$ at $(x,\boldsymbol Y^*)$ shows that $\boldsymbol Y^*(\cdot)$ is continuously differentiable locally, and differentiating $F(x,\boldsymbol Y^*(x))\equiv 0$ in direction $u\in\R^{d_x}$ gives
\begin{equation*}
0=\nabla_{\boldsymbol Y x}^2 g(x,\boldsymbol Y^*)[u] +\mathcal H(x,\boldsymbol Y^*)[D\boldsymbol Y^*(x)[u]],
\end{equation*}
so
\begin{equation}\label{eq:DYstar_formula_product}
D\boldsymbol Y^*(x)[u]
=-\mathcal H(x,\boldsymbol Y^*)^{-1}[\nabla_{\boldsymbol Y x}^2 g(x,\boldsymbol Y^*)[u]].
\end{equation}

Now compute the directional derivative of $\Phiobj(x)=f(x,\boldsymbol Y^*(x))$:
\begin{equation*}
D\Phiobj(x)[u] =\langle \nabla_x f(x,\boldsymbol Y^*),u\rangle +\langle \nabla_{\boldsymbol Y}f(x,\boldsymbol Y^*),D\boldsymbol Y^*(x)[u]\rangle.
\end{equation*}
Substituting \eqref{eq:DYstar_formula_product} and using that $\mathcal H(x,\boldsymbol Y^*)$ is self-adjoint on $\mathcal Y$ (hence so is its inverse) yields
\begin{equation*}
D\Phiobj(x)[u]
={}\langle \nabla_x f(x,\boldsymbol Y^*),u\rangle -\Bigl\langle
\mathcal H(x,\boldsymbol Y^*)^{-1}[\nabla_{\boldsymbol Y}f(x,\boldsymbol Y^*)],
\nabla_{\boldsymbol Y x}^2 g(x,\boldsymbol Y^*)[u]
\Bigr\rangle.
\end{equation*}
We define $\mathcal J(x,\boldsymbol Y):\mathcal Y\to\R^{d_x}$ as the adjoint of $u\mapsto \nabla_{\boldsymbol Y x}^2 g(x,\boldsymbol Y)[u]$, i.e., for all $u\in\R^{d_x}$ and $\boldsymbol W\in\mathcal Y$,
\begin{equation}\label{eq:adjoint_J_product}
\langle \nabla_{\boldsymbol Y x}^2 g(x,\boldsymbol Y)[u],\boldsymbol W\rangle
=\langle u,\mathcal J(x,\boldsymbol Y)[\boldsymbol W]\rangle.
\end{equation}
Recalling the definition \eqref{eq:def_Zstar}, we obtain
\begin{equation*}
D\Phiobj(x)[u] =\langle \nabla_x f(x,\boldsymbol Y^*)-\mathcal J(x,\boldsymbol Y^*)[\boldsymbol Z^*(x)],u\rangle,
\end{equation*}
which proves \eqref{eq:hypergrad_formula}.

\textbf{Step 2: Lipschitz continuity of $\boldsymbol Y^*(\cdot)$. }
Fix $x,\tilde x\in\R^{d_x}$. Using the optimality conditions
\begin{equation*}
\nabla_{\boldsymbol Y}g(x,\boldsymbol Y^*(x))=0, \quad \nabla_{\boldsymbol Y}g(\tilde x,\boldsymbol Y^*(\tilde x))=0,
\end{equation*}
and the strong monotonicity of $\boldsymbol Y\mapsto \nabla_{\boldsymbol Y}g(x,\boldsymbol Y)$, we get
\begin{equation*}
\begin{aligned}
\mu_g\|\boldsymbol Y^*(x)-\boldsymbol Y^*(\tilde x)\|
\le{}& \|\nabla_{\boldsymbol Y}g(x,\boldsymbol Y^*(\tilde x))-\nabla_{\boldsymbol Y}g(x,\boldsymbol Y^*(x))\| \\
={}& \|\nabla_{\boldsymbol Y}g(\tilde x,\boldsymbol Y^*(\tilde x))-\nabla_{\boldsymbol Y}g(x,\boldsymbol Y^*(\tilde x))\|.
\end{aligned}
\end{equation*}
By the $L_{g,1}$-Lipschitz continuity of $\nabla g$ in Assumption~\ref{as:A1}, the last term is at most $L_{g,1}\|x-\tilde x\|$, which proves \eqref{eq:LY_star}.

\textbf{Step 3: Lipschitz continuity and the bound of $\boldsymbol Z^*(\cdot)$. }
Define
\begin{equation*}
A(x):=\mathcal H(x,\boldsymbol Y^*(x)), \quad b(x):=\nabla_{\boldsymbol Y}f(x,\boldsymbol Y^*(x)), \quad \boldsymbol Z^*(x)=A(x)^{-1}[b(x)].
\end{equation*}
From \eqref{eq:H_sc_product}, $\|A(x)^{-1}\|\le 1/\mu_g$. Assumption~\ref{as:A1} gives $\|b(x)\|\le L_{f,0}$, so $\|\boldsymbol Z^*(x)\|\le \frac{L_{f,0}}{\mu_g}$.

Next, by Assumption~\ref{as:A1} and \eqref{eq:LY_star},
\begin{equation}\label{eq:b_Lipschitz_product}
\|b(x)-b(\tilde x)\|
\le
L_{f,1}\|(x,\boldsymbol Y^*(x))-(\tilde x,\boldsymbol Y^*(\tilde x))\|
\le
L_{f,1}\sqrt{1+L_{Y^*}^2}\|x-\tilde x\|.
\end{equation}
Similarly, since $\nabla^2 g$ is $L_{g,2}$-Lipschitz,
\begin{equation}\label{eq:A_Lipschitz_product}
\|A(x)-A(\tilde x)\|
\le
L_{g,2}\|(x,\boldsymbol Y^*(x))-(\tilde x,\boldsymbol Y^*(\tilde x))\|
\le
L_{g,2}\sqrt{1+L_{Y^*}^2}\|x-\tilde x\|.
\end{equation}
Using the resolvent identity $A(x)^{-1}-A(\tilde x)^{-1} = A(x)^{-1}(A(\tilde x)-A(x))A(\tilde x)^{-1}$, we have
\begin{equation}\label{eq:Ainv_Lipschitz_product}
\|A(x)^{-1}-A(\tilde x)^{-1}\|
\le
\frac{1}{\mu_g^2}\|A(x)-A(\tilde x)\|.
\end{equation}
Decomposing
\begin{equation*}
\boldsymbol Z^*(x)-\boldsymbol Z^*(\tilde x) = A(x)^{-1}[b(x)-b(\tilde x)] + (A(x)^{-1}-A(\tilde x)^{-1})[b(\tilde x)]
\end{equation*}
and combining \eqref{eq:b_Lipschitz_product}--\eqref{eq:Ainv_Lipschitz_product} with $\|b(\tilde x)\|\le L_{f,0}$ yields \eqref{eq:LZ_star}.

\textbf{Step 4: Lipschitz continuity of $\nabla\Phiobj$. }
From \eqref{eq:hypergrad_formula},
\begin{equation*}
\nabla\Phiobj(x) = \nabla_x f(x,\boldsymbol Y^*(x)) -\mathcal J(x,\boldsymbol Y^*(x))[\boldsymbol Z^*(x)].
\end{equation*}
Because $\nabla g$ is $L_{g,1}$-Lipschitz, the full Hessian of $g$ has operator norm at most $L_{g,1}$; in particular,
\begin{equation}\label{eq:J_bound}
\|\mathcal J(x,\boldsymbol Y)\|\le L_{g,1},
\quad
\forall (x,\boldsymbol Y)\in\R^{d_x}\times\mathcal Y.
\end{equation}
Likewise, the $L_{g,2}$-Lipschitz continuity of $\nabla^2 g$ implies
\begin{equation}\label{eq:J_Lipschitz}
\|\mathcal J(x,\boldsymbol Y)-\mathcal J(\tilde x,\tilde{\boldsymbol Y})\|
\le
L_{g,2}\|(x,\boldsymbol Y)-(\tilde x,\tilde{\boldsymbol Y})\|.
\end{equation}
Therefore,
\begin{equation*}
\begin{aligned}
\|\nabla\Phiobj(x)-\nabla\Phiobj(\tilde x)\|
\le{}&
\|\nabla_x f(x,\boldsymbol Y^*(x))-\nabla_x f(\tilde x,\boldsymbol Y^*(\tilde x))\| \\
&+
\|\mathcal J(x,\boldsymbol Y^*(x))[\boldsymbol Z^*(x)]
-\mathcal J(\tilde x,\boldsymbol Y^*(\tilde x))[\boldsymbol Z^*(\tilde x)]\|.
\end{aligned}
\end{equation*}
The first term is bounded by $L_{f,1}\sqrt{1+L_{Y^*}^2}\|x-\tilde x\|$. For the second term, add and subtract $\mathcal J(\tilde x,\boldsymbol Y^*(\tilde x))[\boldsymbol Z^*(x)]$:
\begin{equation*}
\begin{aligned}
\|\mathcal J(x,\boldsymbol Y^*(x))[\boldsymbol Z^*(x)]
-\mathcal J(\tilde x,\boldsymbol Y^*(\tilde x))[\boldsymbol Z^*(\tilde x)]\| 
\le{}&
\|\mathcal J(x,\boldsymbol Y^*(x))-\mathcal J(\tilde x,\boldsymbol Y^*(\tilde x))\|\,
\|\boldsymbol Z^*(x)\| \\
&+\|\mathcal J(\tilde x,\boldsymbol Y^*(\tilde x))\|\,
\|\boldsymbol Z^*(x)-\boldsymbol Z^*(\tilde x)\|.
\end{aligned}
\end{equation*}
Using \eqref{eq:J_Lipschitz}, \eqref{eq:J_bound}, and \eqref{eq:LZ_star} yields \eqref{eq:Lnabla_Phi} with the constant \eqref{eq:Lnabla_Phi_expr}.
\end{proof}

\begin{lemma}[Layerwise projection identities and induced derivatives]\label{lem:projection_moments_matrix}
For each layer $\ell\in\{1,\ldots,L\}$, let
\begin{equation*}
P_\ell:=\sqrt{\frac{m_\ell}{r_\ell}}\,O_{\ell,r_\ell}\in\R^{m_\ell\times r_\ell}, \quad Q_\ell:=P_\ell P_\ell^\top,
\end{equation*}
where $O_{\ell,r_\ell}$ contains the first $r_\ell$ columns of a Haar-uniform matrix in $O(m_\ell)$. Assume the projectors are independent across layers. Then:
\begin{enumerate}
\item[(i)] For every $\ell$, we have $P_\ell^\top P_\ell=\frac{m_\ell}{r_\ell}I_{r_\ell}$ and $\E[Q_\ell]=I_{m_\ell}$.
\item[(ii)] For every layer $\ell$, there exist constants
\begin{equation}\label{eq:ab_proj_matrix}
a_\ell=\frac{m_\ell(m_\ell r_\ell+m_\ell-2)}{r_\ell(m_\ell-1)(m_\ell+2)},
\quad
b_\ell=\frac{m_\ell(m_\ell-r_\ell)}{r_\ell(m_\ell-1)(m_\ell+2)}
\end{equation}
such that for any deterministic $A\in\R^{m_\ell\times m_\ell}$,
\begin{equation}\label{eq:Weingarten_matrix_form}
\E[Q_\ell A Q_\ell]
=(a_\ell-b_\ell)A+b_\ell A^\top+b_\ell\tr(A)I_{m_\ell},
\end{equation}
and
\begin{equation}\label{eq:a_minus_2b}
a_\ell-2b_\ell=\frac{m_\ell(r_\ell-1)}{r_\ell(m_\ell-1)}>0.
\end{equation}
\item[(iii)] For any pair $\ell\neq t$ and any deterministic $A\in\R^{m_\ell\times m_t}$,
\begin{equation}\label{eq:cross_layer_proj_identity}
\E[Q_\ell A Q_t]=A.
\end{equation}
\item[(iv)] Fix an iteration $k$ and define
\begin{equation*}
\tilde f_k(x,\boldsymbol B):=f(x,\boldsymbol Y^k+\boldsymbol P^k\boldsymbol B), \quad \tilde g_k(x,\boldsymbol B):=g(x,\boldsymbol Y^k+\boldsymbol P^k\boldsymbol B).
\end{equation*}
Then at $\boldsymbol B=\mathbf 0$,
\begin{equation*}
\nabla_{\boldsymbol B}\tilde f_k(x,\mathbf 0)=\boldsymbol P^{k\top}\nabla_{\boldsymbol Y}f(x,\boldsymbol Y^k), \quad \nabla_{\boldsymbol B}\tilde g_k(x,\mathbf 0)=\boldsymbol P^{k\top}\nabla_{\boldsymbol Y}g(x,\boldsymbol Y^k),
\end{equation*}
and for every $\boldsymbol W\in\mathcal B$,
\begin{equation*}
\begin{aligned}
\nabla_{\boldsymbol B\boldsymbol B}^2\tilde g_k(x,\mathbf 0)[\boldsymbol W] &=\boldsymbol P^{k\top}\mathcal H(x,\boldsymbol Y^k)[\boldsymbol P^k\boldsymbol W], \\
\nabla_{x\boldsymbol B}^2\tilde g_k(x,\mathbf 0)[\boldsymbol W] &=\mathcal J(x,\boldsymbol Y^k)[\boldsymbol P^k\boldsymbol W].
\end{aligned}
\end{equation*}
\end{enumerate}
\end{lemma}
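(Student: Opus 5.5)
Part (i) comes straight from the definition: $O_{\ell,r_\ell}^\top O_{\ell,r_\ell}=I_{r_\ell}$, so $P_\ell^\top P_\ell=(m_\ell/r_\ell)I_{r_\ell}$. For the mean, write $O_{\ell,r_\ell}O_{\ell,r_\ell}^\top=O_\ell D O_\ell^\top$ with $D=\mathrm{diag}(I_{r_\ell},0)$. Haar invariance gives $\E[O_\ell D O_\ell^\top]=V\,\E[O_\ell D O_\ell^\top]\,V^\top$ for every orthogonal $V$, so the expectation is a multiple $cI_{m_\ell}$. Comparing traces gives $c=r_\ell/m_\ell$, and multiplying by $m_\ell/r_\ell$ yields $\E[Q_\ell]=I$.

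Part (iii) follows from (i) together with independence across layers: $\E[Q_\ell AQ_t]=\E[Q_\ell]\,A\,\E[Q_t]=A$.

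For (ii), set $\Pi:=O_{\ell,r_\ell}O_{\ell,r_\ell}^\top$, which is a uniformly random rank-$r$ orthogonal projector, and drop the index $\ell$. The linear map $A\mapsto\E[\Pi A\Pi]$ commutes with conjugation by every orthogonal $V$. Its commutant on $\R^{m\times m}$ is spanned by $A\mapsto A$, $A\mapsto A^\top$ and $A\mapsto\tr(A)I$; these correspond to the three pairings in the Brauer algebra. I would either quote this invariant-theory fact or verify it directly on the elementary matrices $e_ie_j^\top$ using coordinate sign-flip and permutation symmetries. It follows that $\E[\Pi A\Pi]=c_1A+c_2A^\top+c_3\tr(A)I$.

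To pin down the coefficients, test against three inputs.
\begin{itemize}
\item $A=I$ gives $\E\Pi=(r/m)I$, hence $c_1+c_2+mc_3=r/m$.
\item $A=e_1e_2^\top$ gives $\E[\Pi_{11}\Pi_{22}]=\E[\Pi_{12}^2]+\E[\Pi_{11}\Pi_{22}]$ read off the $(1,2)$ and $(2,1)$ entries. Concretely, the $(1,2)$ entry is $\E[\Pi_{11}\Pi_{22}]=c_1$, and the $(2,1)$ entry is $\E[\Pi_{21}\Pi_{12}]=\E[\Pi_{12}^2]=c_2$.
\item $A=e_1e_1^\top$ gives the $(1,1)$ entry $\E[\Pi_{11}^2]=c_1+c_2+c_3$.
\end{itemize}
For the moments, note that $\Pi_{11}=\|O^\top e_1\|^2$ is the squared norm of the first $r$ coordinates of a uniform unit vector, so $\Pi_{11}\sim\mathrm{Beta}(r/2,(m-r)/2)$. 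This gives $\E\Pi_{11}^2=r(r+2)/(m(m+2))$. Next, $\sum_j\Pi_{1j}^2=\Pi_{11}$ combined with exchangeability gives $\E\Pi_{12}^2=\bigl(r/m-\E\Pi_{11}^2\bigr)/(m-1)$. The three linear equations in $(c_1,c_2,c_3)$ then close.

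Finally I rescale by $(m/r)^2$ and match to the form $(a-b)A+bA^\top+b\tr(A)I$. That form forces $c_2=c_3$, which I should confirm explicitly as a consistency check. The computation yields $b=\frac{m(m-r)}{r(m-1)(m+2)}$ and $a-b+2b=a+b$ from the identity trace equation, which recovers the stated $a_\ell$. From there, \eqref{eq:a_minus_2b} is algebra: $a-2b=\frac{m(mr+m-2-2m+2r)}{r(m-1)(m+2)}=\frac{m(r-1)(m+2)}{r(m-1)(m+2)}$, which is positive because $r\ge2$.

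Part (iv) is the chain rule applied to $\boldsymbol B\mapsto\boldsymbol Y^k+\boldsymbol P^k\boldsymbol B$, which is linear with adjoint $\boldsymbol U\mapsto\boldsymbol P^{k\top}\boldsymbol U$. For the gradient, $\langle\nabla_{\boldsymbol Y}g,\boldsymbol P\boldsymbol W\rangle=\langle\boldsymbol P^\top\nabla_{\boldsymbol Y}g,\boldsymbol W\rangle$. The second derivatives follow the same way from the definition of $\mathcal J$ as the adjoint in \eqref{eq:adjoint_J_product}.

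The main obstacle is the Weingarten step in (ii). Specifically, it requires justifying the three-term invariant decomposition and getting the fourth-moment constants exactly right. As an alternative, I would cross-check by computing $\E[\Pi A\Pi]$ via Gaussian QR or polar representation for small $m$.
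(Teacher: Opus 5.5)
Your proposal is correct and follows the paper's proof essentially step for step: Haar invariance plus a trace count for (i), and independence for (iii). For (ii) you use the three-term $O(m)$-invariant (Weingarten) form, with coefficients fixed by $\E\Pi_{11}^2=r(r+2)/(m(m+2))$ and by the row-norm identity for $\E\Pi_{12}^2$; (iv) is the chain rule.

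One caution applies only if you verify the three-term form rather than quote it, as the paper does. Coordinate sign flips and permutations alone are not enough, because they also commute with $A\mapsto\operatorname{diag}(A_{11},\ldots,A_{mm})$ and only cut the map down to a four-parameter family. To show that extra term vanishes you need a genuine rotation (e.g.\ by $\pi/4$ in the $(e_1,e_2)$-plane) or the extra identity $\tr\Pi=r$. A smaller slip: after rescaling, the $A=I$ test gives $a+mb=m/r$, while $a+b$ comes from the $(1,1)$ entry of the $e_1e_1^\top$ test.
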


\begin{proof}
\textbf{Proof of (i). }
Fix a layer $\ell$. Since the columns of $O_{\ell,r_\ell}$ are orthonormal,
\begin{equation*}
P_\ell^\top P_\ell = \frac{m_\ell}{r_\ell}O_{\ell,r_\ell}^\top O_{\ell,r_\ell} = \frac{m_\ell}{r_\ell}I_{r_\ell}.
\end{equation*}
To prove $\E[Q_\ell]=I_{m_\ell}$, let $V\in O(m_\ell)$ be deterministic. By left-invariance of Haar measure, $VO_{\ell,r_\ell}$ has the same distribution as $O_{\ell,r_\ell}$, hence
\begin{equation*}
V\,\E[Q_\ell]\,V^\top=\E[VQ_\ell V^\top]=\E[Q_\ell].
\end{equation*}
Therefore $\E[Q_\ell]$ commutes with every orthogonal matrix and must equal $cI_{m_\ell}$ for some scalar $c$. Taking traces and using $P_\ell^\top P_\ell=(m_\ell/r_\ell)I_{r_\ell}$ gives
\begin{equation*}
cm_\ell = \tr(\E[Q_\ell]) = \E[\tr(P_\ell^\top P_\ell)] = \tr\!\left(\frac{m_\ell}{r_\ell}I_{r_\ell}\right) =m_\ell,
\end{equation*}
so $c=1$.

\textbf{Proof of (ii). }
Again fix $\ell$ and abbreviate
\begin{equation*}
m:=m_\ell,\quad r:=r_\ell,\quad P:=P_\ell,\quad Q:=Q_\ell.
\end{equation*}
Let $Q_0:=O_rO_r^\top$, so $Q=(m/r)Q_0$. By conjugation invariance of $Q_0$, the fourth-order tensor
\begin{equation*}
T_{iabj}:=\E[(Q_0)_{ia}(Q_0)_{bj}]
\end{equation*}
is $O(m)$-invariant and hence has the form
\begin{equation*}
\E[(Q_0)_{ia}(Q_0)_{bj}] = c_1\delta_{ia}\delta_{bj} +c_2\delta_{ib}\delta_{aj} +c_3\delta_{ij}\delta_{ab}.
\end{equation*}
To identify the coefficients, note first that for a Haar row vector $u\in\mathbb S^{m-1}$,
\begin{equation*}
(Q_0)_{11}=\sum_{j=1}^r u_j^2,
\end{equation*}
so the standard sphere moments give $\E[(Q_0)_{11}^2]=\frac{r(r+2)}{m(m+2)}$. Using $\tr(Q_0^2)=\tr(Q_0)=r$ together with rotational invariance yields $\E[(Q_0)_{12}^2]=\frac{r(m-r)}{m(m-1)(m+2)}$. Evaluating the invariant-tensor form at $(i,a,b,j)=(1,2,1,2)$, $(1,2,2,1)$, and $(1,1,1,1)$ shows that
\begin{equation*}
c_2=c_3=\frac{r(m-r)}{m(m-1)(m+2)}, \quad c_1=\frac{r(r(m+1)-2)}{m(m-1)(m+2)}.
\end{equation*}
Hence for any $A\in\R^{m\times m}$,
\begin{equation*}
\E[Q_0AQ_0] = c_1A+c_2A^\top+c_2\tr(A)I_m.
\end{equation*}
Multiplying by $(m/r)^2$ and simplifying gives
\begin{equation*}
\E[QAQ] = (a_\ell-b_\ell)A+b_\ell A^\top+b_\ell\tr(A)I_{m_\ell},
\end{equation*}
with
\begin{equation*}
a_\ell=\frac{m_\ell(m_\ell r_\ell+m_\ell-2)}{r_\ell(m_\ell-1)(m_\ell+2)}, \quad b_\ell=\frac{m_\ell(m_\ell-r_\ell)}{r_\ell(m_\ell-1)(m_\ell+2)}.
\end{equation*}
The formula $a_\ell-2b_\ell = \frac{m_\ell(r_\ell-1)}{r_\ell(m_\ell-1)}>0$ follows by direct algebra.

\textbf{Proof of (iii). }
For $\ell\neq t$ and deterministic $A\in\R^{m_\ell\times m_t}$, independence across layers gives
\begin{equation*}
\E[Q_\ell A Q_t] = \E[Q_\ell]\,A\,\E[Q_t] = A,
\end{equation*}
where we used (i).

\textbf{Proof of (iv). }
Fix $k$ and define $\Psi(\boldsymbol B):=\boldsymbol Y^k+\boldsymbol P^k\boldsymbol B$. For any $\boldsymbol H\in\mathcal B$,
\begin{equation*}
\begin{aligned}
\langle \nabla_{\boldsymbol B}\tilde g_k(x,\boldsymbol B),\boldsymbol H\rangle
&=
\left.\frac{\mathrm d}{\mathrm dt}g\bigl(x,\Psi(\boldsymbol B+t\boldsymbol H)\bigr)\right|_{t=0} \\
&=
\langle \nabla_{\boldsymbol Y}g(x,\Psi(\boldsymbol B)),\boldsymbol P^k\boldsymbol H\rangle =
\langle \boldsymbol P^{k\top}\nabla_{\boldsymbol Y}g(x,\Psi(\boldsymbol B)),\boldsymbol H\rangle,
\end{aligned}
\end{equation*}
so
\begin{equation*}
\nabla_{\boldsymbol B}\tilde g_k(x,\boldsymbol B)=\boldsymbol P^{k\top}\nabla_{\boldsymbol Y}g(x,\Psi(\boldsymbol B)).
\end{equation*}
Evaluating at $\boldsymbol B=\mathbf 0$ proves the formula for $\tilde g_k$; the proof for $\tilde f_k$ is identical.

For the Hessian, the directional derivative satisfies
\begin{equation*}
\nabla_{\boldsymbol B\boldsymbol B}^2\tilde g_k(x,\boldsymbol B)[\boldsymbol W] = \left.\frac{\mathrm d}{\mathrm dt}\nabla_{\boldsymbol B}\tilde g_k(x,\boldsymbol B+t\boldsymbol W)\right|_{t=0} = \boldsymbol P^{k\top}\mathcal H(x,\Psi(\boldsymbol B))[\boldsymbol P^k\boldsymbol W],
\end{equation*}
which gives the first identity at $\boldsymbol B=\mathbf 0$. Similarly,
\begin{equation*}
\nabla_{x\boldsymbol B}^2\tilde g_k(x,\boldsymbol B)[\boldsymbol W] = \left.\frac{\mathrm d}{\mathrm dt}\nabla_x g\bigl(x,\Psi(\boldsymbol B+t\boldsymbol W)\bigr)\right|_{t=0} = \mathcal J(x,\Psi(\boldsymbol B))[\boldsymbol P^k\boldsymbol W],
\end{equation*}
and evaluating at $\boldsymbol B=\mathbf 0$ yields the claim.
\end{proof}

\begin{lemma}[Cross-layer cancellation in the global probe]\label{lem:probe_cancellation}
Fix an iteration $k$ and a layer $t\in\{1,\ldots,L\}$. Let $E_j:\R^{r_j\times n_j}\to\mathcal B$ denote the block-insertion operator into block $j$, define
\begin{equation*}
\Delta B_j^{k+1}:=\xi_j^{k+1}(u_j^{k+1})^\top, \quad \Delta\boldsymbol B^{k+1}:=(\Delta B_1^{k+1},\ldots,\Delta B_L^{k+1}),
\end{equation*}
and let $\boldsymbol M^{k+1}:=\tilde{\mathcal H}^{k+1}[\Delta\boldsymbol B^{k+1}]$. Write the $t$-th output block as
\begin{equation*}
M_t^{k+1}=M_t^{\mathrm{self},k+1}+M_t^{\mathrm{cross},k+1},
\end{equation*}
where $M_t^{\mathrm{self},k+1}:=\bigl(\tilde{\mathcal H}^{k+1}[E_t(\Delta B_t^{k+1})]\bigr)_t$ and $M_t^{\mathrm{cross},k+1}:=\sum_{j\neq t}\bigl(\tilde{\mathcal H}^{k+1}[E_j(\Delta B_j^{k+1})]\bigr)_t$. Let
\begin{equation*}
\tilde Z_t^k:=(P_t^k)^\top Z_t^k, \quad v_t^{\mathrm{self},k+1}:=(M_t^{\mathrm{self},k+1})^\top\xi_t^{k+1}, \quad v_t^{\mathrm{cross},k+1}:=(M_t^{\mathrm{cross},k+1})^\top\xi_t^{k+1},
\end{equation*}
\begin{equation*}
w_t^{\mathrm{self},k+1}:=(M_t^{\mathrm{self},k+1})^\top(\tilde Z_t^k u_t^{k+1}), \quad w_t^{\mathrm{cross},k+1}:=(M_t^{\mathrm{cross},k+1})^\top(\tilde Z_t^k u_t^{k+1}),
\end{equation*}
and define
\begin{equation*}
\mathcal G_{k,t}:=\sigma\!\left(\mathcal F_k^+,\tilde{\mathcal H}^{k+1},\boldsymbol Z^k,\{u_j^{k+1},\xi_j^{k+1}\}_{j\neq t}\right).
\end{equation*}
Then
\begin{subequations}\label{eq:cross_probe_cancel}
\begin{align}
\E\!\left[v_t^{\mathrm{cross},k+1}(u_t^{k+1})^\top\mid \mathcal G_{k,t}\right]&=0, \label{eq:cross_probe_cancel_trace}\\
\E\!\left[\xi_t^{k+1}(w_t^{\mathrm{cross},k+1})^\top\mid \mathcal G_{k,t}\right]&=0. \label{eq:cross_probe_cancel_sharp}
\end{align}
\end{subequations}
Consequently, the cross-layer contamination carried by the global probe contributes zero conditional mean to both
\begin{equation*}
\widehat C_t^{k+1}:=Z_t^k\bigl(v_t^{k+1}(u_t^{k+1})^\top\bigr), \quad \widehat B_{\sharp,t}^{k+1}:=P_t^k\bigl(\xi_t^{k+1}(w_t^{k+1})^\top\bigr).
\end{equation*}
\end{lemma}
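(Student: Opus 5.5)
The plan is to condition on $\mathcal G_{k,t}$ and use the fact that the only randomness left is the layer-$t$ probe pair $(u_t^{k+1},\xi_t^{k+1})$. By Assumption~\ref{as:A3-1} and the probe independence in Assumption~\ref{as:A2-1}, this pair is independent of $\mathcal G_{k,t}$, has independent Rademacher entries, and its two components are independent of each other. The first step is to observe that $M_t^{\mathrm{cross},k+1}$ is $\mathcal G_{k,t}$-measurable. By linearity of $\tilde{\mathcal H}^{k+1}$, it equals $\sum_{j\ne t}(\tilde{\mathcal H}^{k+1}[E_j(\xi_j^{k+1}(u_j^{k+1})^\top)])_t$, and this involves only $\tilde{\mathcal H}^{k+1}$ and the probes of layers $j\ne t$, all of which lie in $\mathcal G_{k,t}$. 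Likewise, $\tilde Z_t^k=(P_t^k)^\top Z_t^k$ is $\mathcal G_{k,t}$-measurable, since $P_t^k\in\mathcal F_k^+$.

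For \eqref{eq:cross_probe_cancel_trace}, write $v_t^{\mathrm{cross},k+1}(u_t^{k+1})^\top=(M_t^{\mathrm{cross},k+1})^\top\xi_t^{k+1}(u_t^{k+1})^\top$. Pulling the measurable factor out of the conditional expectation leaves $\E[\xi_t^{k+1}(u_t^{k+1})^\top\mid\mathcal G_{k,t}]$. Because $\xi_t$ and $u_t$ are independent with zero mean, this equals $\E[\xi_t]\E[u_t]^\top=0$.

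For \eqref{eq:cross_probe_cancel_sharp}, expand
\[
\xi_t^{k+1}(w_t^{\mathrm{cross},k+1})^\top=\xi_t^{k+1}(u_t^{k+1})^\top(\tilde Z_t^k)^\top M_t^{\mathrm{cross},k+1}.
\]
Again the right factor $(\tilde Z_t^k)^\top M_t^{\mathrm{cross},k+1}$ is measurable, and the remaining random matrix $\xi_t(u_t)^\top$ has zero conditional mean. In both cases the argument needs only first moments, and integrability is immediate because the probes are bounded.

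For the consequence, decompose $v_t=v_t^{\mathrm{self}}+v_t^{\mathrm{cross}}$ and $w_t=w_t^{\mathrm{self}}+w_t^{\mathrm{cross}}$. The cross parts of $\widehat C_t^{k+1}=Z_t^k\,v_t(u_t)^\top$ and $\widehat B_{\sharp,t}^{k+1}=P_t^k\,\xi_t(w_t)^\top$ are obtained by multiplying the displayed quantities on the left by the $\mathcal G_{k,t}$-measurable matrices $Z_t^k$ and $P_t^k$ respectively. Hence their conditional means vanish by linearity.

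The only delicate point, the main obstacle such as it is, is justifying the measurability of $M_t^{\mathrm{cross}}$ when $\tilde{\mathcal H}^{k+1}$ is a random operator. I would handle it by treating $\tilde{\mathcal H}^{k+1}$ as a random element of the finite-dimensional space of linear maps on $\mathcal B$, included in $\mathcal G_{k,t}$ by definition. The independence of $(u_t,\xi_t)$ from this operator comes from the clause in Assumption~\ref{as:A2-1} that probes are independent of the oracle objects. Finally, the tower property passes the result to any coarser $\sigma$-algebra such as $\mathcal F_k^+$.
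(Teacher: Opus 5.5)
Your proposal is correct and follows essentially the same argument as the paper: both note that $M_t^{\mathrm{cross},k+1}$ and $\tilde Z_t^k$ are $\mathcal G_{k,t}$-measurable, since they involve only $\tilde{\mathcal H}^{k+1}$, $P_t^k$, $Z_t^k$ and the off-layer probes. Both then remove the cross terms using the zero mean of the layer-$t$ probes, and use linearity for the consequence about $\widehat C_t^{k+1}$ and $\widehat B_{\sharp,t}^{k+1}$. The only cosmetic difference is that the paper additionally conditions on one probe ($\xi_t^{k+1}$, resp.\ $u_t^{k+1}$) and uses the zero mean of the other, whereas you factor out the measurable part and use $\E[\xi_t^{k+1}(u_t^{k+1})^\top]=0$ directly.
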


\begin{proof}
By construction, $M_t^{\mathrm{cross},k+1}$ depends only on the shared operator realization $\tilde{\mathcal H}^{k+1}$ and on the probes $\{u_j^{k+1},\xi_j^{k+1}\}_{j\neq t}$, so it does not depend on $u_t^{k+1}$. Hence, conditioning first on $(\mathcal G_{k,t},\xi_t^{k+1})$ and using the zero mean of the Rademacher vector $u_t^{k+1}$ gives
\begin{equation*}
\E\!\left[v_t^{\mathrm{cross},k+1}(u_t^{k+1})^\top\mid \mathcal G_{k,t},\xi_t^{k+1}\right] = v_t^{\mathrm{cross},k+1}\,\E\!\left[(u_t^{k+1})^\top\mid \mathcal G_{k,t},\xi_t^{k+1}\right] =0.
\end{equation*}
Taking the tower expectation yields \eqref{eq:cross_probe_cancel_trace}.

Similarly, $M_t^{\mathrm{cross},k+1}$ and therefore $w_t^{\mathrm{cross},k+1}$ depend on $u_t^{k+1}$ and on the off-layer probes, but not on $\xi_t^{k+1}$. Conditioning first on $(\mathcal G_{k,t},u_t^{k+1})$ and using the zero mean of the Rademacher vector $\xi_t^{k+1}$ gives
\begin{equation*}
\E\!\left[\xi_t^{k+1}(w_t^{\mathrm{cross},k+1})^\top\mid \mathcal G_{k,t},u_t^{k+1}\right] = \E\!\left[\xi_t^{k+1}\mid \mathcal G_{k,t},u_t^{k+1}\right](w_t^{\mathrm{cross},k+1})^\top =0.
\end{equation*}
Taking the tower expectation yields \eqref{eq:cross_probe_cancel_sharp}. The final claim follows by linearity from the decompositions
\begin{equation*}
v_t^{k+1}=v_t^{\mathrm{self},k+1}+v_t^{\mathrm{cross},k+1}, \quad w_t^{k+1}=w_t^{\mathrm{self},k+1}+w_t^{\mathrm{cross},k+1}.
\end{equation*}
\end{proof}

\begin{lemma}[Unbiasedness and second moments of the projected stochastic directions]\label{lem:unbiased_lifted_oracles}
Assume \emph{(A1)}--\emph{(A3)} and the sampling conditions stated in the main text. Define $\widehat{\boldsymbol V}^{k+1}:=\boldsymbol P^k\tilde{\boldsymbol V}^{k+1}\in\mathcal Y$, $\widehat W^{k+1}:=\tilde U_x^{k+1}-\tilde{\mathcal J}^{k+1}[\boldsymbol P^{k\top}\boldsymbol Z^k]\in\R^{d_x}$ and $\widehat{\boldsymbol S}^{k+1}:=\widehat{\mathcal H[\boldsymbol Z]}^{k+1}-\boldsymbol P^k\tilde{\boldsymbol U}_Y^{k+1}\in\mathcal Y$, where $\widehat{\mathcal H[\boldsymbol Z]}^{k+1}$ is the assembled multilayer HVP estimator described in Appendix~\ref{app:multilayer_algorithm_derivation}. Let $\omega_P:=\max_{1\le \ell\le L}\frac{m_\ell}{r_\ell}$ and $\Lambda_Y:=(\omega_P-1)L_{g,1}^2$. Then:
\begin{enumerate}
\item[(i)] \textbf{Conditional unbiasedness.}
\begin{equation*}
\begin{aligned}
\E[\widehat{\boldsymbol V}^{k+1}\mid\mathcal F_k]&=\nabla_{\boldsymbol Y}g(x^k,\boldsymbol Y^k),\\
\E[\widehat W^{k+1}\mid\mathcal F_k] &=\nabla_x f(x^k,\boldsymbol Y^k)-\mathcal J(x^k,\boldsymbol Y^k)[\boldsymbol Z^k],\\
\E[\widehat{\boldsymbol S}^{k+1}\mid\mathcal F_k] &=\mathcal H(x^k,\boldsymbol Y^k)[\boldsymbol Z^k]-\nabla_{\boldsymbol Y}f(x^k,\boldsymbol Y^k).
\end{aligned}
\end{equation*}
\item[(ii)] \textbf{Conditional second moments.}
Define
\begin{equation*}
\Delta_S^{k+1} := \widehat{\boldsymbol S}^{k+1} -\bigl(\mathcal H(x^k,\boldsymbol Y^k)[\boldsymbol Z^k] -\nabla_{\boldsymbol Y}f(x^k,\boldsymbol Y^k)\bigr).
\end{equation*}
There exist constants $\sigma_{S,0},\sigma_{S,2},\Lambda_S<\infty$ such that
\begin{subequations}\label{eq:lifted_oracle_second_moments}
\begin{align}
\E[\|\widehat{\boldsymbol V}^{k+1}-\nabla_{\boldsymbol Y}g(x^k,\boldsymbol Y^k)\|^2\mid\mathcal F_k]
&\le \omega_P\tilde\sigma_{g,1}^2+\Lambda_Y\|\boldsymbol Y^k-\boldsymbol Y^*(x^k)\|^2, \label{eq:V_hat_var_matrix}\\
\E[\|\widehat W^{k+1}-\E[\widehat W^{k+1}\mid\mathcal F_k]\|^2\mid\mathcal F_k]
&\le \tilde\sigma_{f,1}^2+(\tilde\sigma_{g,2}^2+\Lambda_Y)\|\boldsymbol Z^k\|^2, \label{eq:W_hat_var_matrix}\\
\E\!\bigl[\|\Delta_S^{k+1}\|^2 \mid \mathcal F_k \bigr]
&\le \sigma_{S,0}^2 +\sigma_{S,2}^2\|\boldsymbol Y^k-\boldsymbol Y^*(x^k)\|^2+\Lambda_S\|\boldsymbol Z^k\|^2. \label{eq:S_hat_var_matrix}
\end{align}
\end{subequations}
\end{enumerate}
\end{lemma}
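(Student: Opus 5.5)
The plan is to condition in two stages: first on $\mathcal F_k^{++}$ (projectors and probes fixed, oracle noise averaged), then on $\mathcal F_k^+$ (probes averaged), and finally on $\mathcal F_k$ (projectors averaged). At each stage the relevant identities come from Assumption~\ref{as:A2-1} and Lemma~\ref{lem:projection_moments_matrix}.

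\textbf{Unbiasedness of $\widehat{\boldsymbol V}^{k+1}$ and $\widehat W^{k+1}$.}
\begin{itemize}
\item By Assumption~\ref{as:A2-1} and Lemma~\ref{lem:projection_moments_matrix}(iv), $\E[\boldsymbol P^k\tilde{\boldsymbol V}^{k+1}\mid\mathcal F_k^+]=\boldsymbol Q^k\nabla_{\boldsymbol Y}g(x^k,\boldsymbol Y^k)$ blockwise, where $Q_\ell^k:=P_\ell^k(P_\ell^k)^\top$.
\item Averaging over the projector with $\E[Q_\ell^k]=I$ (part (i)) gives the first identity.
\item For $\widehat W^{k+1}$, the direction $\boldsymbol P^{k\top}\boldsymbol Z^k$ is $\mathcal F_k^+$-measurable. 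Hence $\E[\tilde{\mathcal J}^{k+1}[\boldsymbol P^{k\top}\boldsymbol Z^k]\mid\mathcal F_k^+]=\mathcal J(x^k,\boldsymbol Y^k)[\boldsymbol Q^k\boldsymbol Z^k]$.
\item The $x$-gradient oracle is unbiased for $\nabla_x f(x^k,\boldsymbol Y^k)$ regardless of $\boldsymbol P^k$. Linearity of $\mathcal J$ together with $\E\boldsymbol Q^k=I$ then gives the third identity.
\end{itemize}

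\textbf{Unbiasedness of $\widehat{\boldsymbol S}^{k+1}$: the main computation.}
The term $\boldsymbol P^k\tilde{\boldsymbol U}_Y^{k+1}$ is handled exactly as $\widehat{\boldsymbol V}^{k+1}$. For the HVP, the off-diagonal lifted pieces have oracle-mean $P_\ell^k(P_\ell^k)^\top\mathcal H_{\ell t}^k[P_t^k(P_t^k)^\top Z_t^k]$. Writing $\mathcal H_{\ell t}^k$ as a sum of rank-one (Kronecker) terms and using independence across layers (part (iii)) gives mean $\mathcal H_{\ell t}^k[Z_t^k]$.

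For the diagonal blocks the steps are as follows.
\begin{enumerate}
\item Use Lemma~\ref{lem:probe_cancellation} to discard the cross-layer contributions to $\widehat C_\ell$ and $\widehat B_{\sharp,\ell}$. This leaves only the self-block part $M_\ell^{\rm self}=\tilde{\mathcal H}_{\ell\ell}[\xi_\ell u_\ell^\top]$.
\item Take the oracle expectation given the probes (allowed by the $\mathcal F_k^{++}$-measurable-direction clause). Then average over Rademacher probes using $\E[\xi\xi^\top]=I$ and $\E[uu^\top]=I$.
\item Expand the self-block Hessian $\mathcal H_{\ell\ell}^k$ as a linear operator on $\mathbb R^{m\times n}$, i.e., $\mathcal H_{\ell\ell}[Z]=\sum_{\alpha} A_\alpha Z B_\alpha$ with $A_\alpha\in\mathbb R^{m\times m}$ and $B_\alpha$ symmetric in aggregate. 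Then:
\begin{itemize}
\item $A^{k+1}$ has mean $\sum_\alpha Q A_\alpha Q Z B_\alpha$;
\item $\widehat C$ produces a trace-type term $Z\cdot\sum_\alpha \tr(QA_\alpha Q\cdots)$;
\item $\widehat B_\sharp$ produces a transpose-type term.
\end{itemize}
\item Apply the Weingarten identity \eqref{eq:Weingarten_matrix_form} to each. The $A^\top$ and $\tr(A)I$ nuisance terms appear with coefficients $b_\ell$.
\item Check that the three coefficients in \eqref{eq:single_unbiased_estimator} solve the resulting $3\times3$ linear system. The outcome should be exactly $\mathcal H_{\ell\ell}^k[Z_\ell^k]$; the factor $a_\ell-2b_\ell\neq 0$ (from $r_\ell\ge2$) makes the system solvable.
\end{enumerate}
I expect this bookkeeping — matching the trace and transpose nuisance terms against the probe estimators, in which the self-adjointness of $\mathcal H_{\ell\ell}$ is used to identify the transpose term — to be the main obstacle. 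I would first verify it on rank-one $\mathcal H_{\ell\ell}[Z]=AZB$ with symmetric $B$, then extend by linearity.

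\textbf{Second moments.}
\begin{itemize}
\item Split each direction into oracle noise plus projection fluctuation, and use $\|a+b\|^2\le2\|a\|^2+2\|b\|^2$.
\item Oracle noise is bounded by Assumption~\ref{as:A2-1}, lifted with $\|P_\ell\|_{\rm op}^2=m_\ell/r_\ell\le\omega_P$.
\item Projection fluctuation is bounded by $\E\|Q v\|^2-\|v\|^2\le(\omega_P-1)\|v\|^2$, since $Q^2=(m/r)Q$.
\item For $\widehat{\boldsymbol V}^{k+1}$, take $v=\nabla_{\boldsymbol Y} g(x^k,\boldsymbol Y^k)$ and use $\|\nabla_{\boldsymbol Y} g\|\le L_{g,1}\|\boldsymbol Y^k-\boldsymbol Y^*\|$. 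For $\widehat W^{k+1}$, take $v=\boldsymbol Z^k$ together with $\|\mathcal J\|\le L_{g,1}$.
\item For $\Delta_S^{k+1}$, the probe terms are bounded by constants times $\|\boldsymbol Z^k\|^2$, using $\|\xi\|^2=r$, $\|u\|^2=n$, and the operator bound $\|\mathcal H\|\le L_{g,1}$ combined with powers of $\omega_P$.
\item The gradient term in $\Delta_S^{k+1}$ uses $\|\nabla_{\boldsymbol Y} f(x^k,\boldsymbol Y^k)\|\le L_{f,0}+L_{f,1}\|\boldsymbol Y^k-\boldsymbol Y^*\|$, which yields $\sigma_{S,0}$ and $\sigma_{S,2}$.
\end{itemize}
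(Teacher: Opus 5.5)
Your overall route is the paper's: tower conditioning through $\mathcal F_k\subset\mathcal F_k^+\subset\mathcal F_k^{++}$, Lemma~\ref{lem:probe_cancellation} for cross-layer probe contamination, the Weingarten identity on the self block, inversion of the $3\times 3$ system using $a_\ell-2b_\ell>0$, and a noise/fluctuation split for the second moments. Two steps, however, do not prove the statement as written.

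\textbf{Explicit constants in \eqref{eq:V_hat_var_matrix}--\eqref{eq:W_hat_var_matrix}.} Splitting with $\|a+b\|^2\le 2\|a\|^2+2\|b\|^2$ only gives $2\omega_P\tilde\sigma_{g,1}^2+2\Lambda_Y\|\boldsymbol Y^k-\boldsymbol Y^*(x^k)\|^2$, which is twice the claimed bound. It also puts a factor $2$ or $3$ on every term of the $\widehat W^{k+1}$ bound. The stated constants need an exact orthogonal split. Conditionally on $\mathcal F_k^+$, the oracle noise $\boldsymbol P^k(\tilde{\boldsymbol V}^{k+1}-\boldsymbol P^{k\top}\nabla_{\boldsymbol Y}g(x^k,\boldsymbol Y^k))$ has zero mean, while the fluctuation $(\boldsymbol Q^k-I)\nabla_{\boldsymbol Y}g(x^k,\boldsymbol Y^k)$ is $\mathcal F_k^+$-measurable, so the cross term vanishes. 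For $\widehat W^{k+1}$ you also need two facts: $\tilde U_x^{k+1}$ and $\tilde{\mathcal J}^{k+1}$ are conditionally independent given $\mathcal F_k^+$, and $\E[\tilde U_x^{k+1}\mid\mathcal F_k^+]$ does not depend on $\boldsymbol P^k$. Together these make the three pieces mutually orthogonal, which is the paper's total-variance step. In addition, bound the JVP noise by $\tilde\sigma_{g,2}^2\,\E[\|\boldsymbol P^{k\top}\boldsymbol Z^k\|^2\mid\mathcal F_k]=\tilde\sigma_{g,2}^2\|\boldsymbol Z^k\|^2$, using $\E[Q_\ell^k\mid\mathcal F_k]=I$. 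The operator-norm bound $\|P_\ell^k\|^2\le\omega_P$ would instead give $\omega_P\tilde\sigma_{g,2}^2$. For \eqref{eq:S_hat_var_matrix} the constants are free, so the crude split is harmless there.

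\textbf{The rank-one reduction for the self block.} Checking unbiasedness on $Z\mapsto AZB$ with $B$ symmetric and then ``extending by linearity'' does not cover all self-adjoint blocks. Self-adjoint operators on $\R^{m\times n}$ are sums of terms $AZB$ with $A,B$ both symmetric \emph{or both skew}. For example, $Z\mapsto JZJ$ with $J$ skew is self-adjoint but is not a sum of terms with symmetric right factors. Instead, compute with a general $\mathcal H_{\ell\ell}[Z]=\sum_\alpha A_\alpha ZB_\alpha$ (layer index dropped).
After averaging oracles and probes, $\widehat B_\sharp^{k+1}$ has mean $\sum_\alpha\E[QA_\alpha^\top Q]ZB_\alpha$ for \emph{any} operator, so the transpose nuisance matches with no symmetry assumption. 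By contrast, $\widehat C^{k+1}$ has mean $Z\sum_\alpha\tr(A_\alpha)B_\alpha^\top$, while the trace nuisance in the means of $A^{k+1}$ and $\widehat B_\sharp^{k+1}$ is $b\,Z\sum_\alpha\tr(A_\alpha)B_\alpha$. (The probe trace is $\tr(P^\top A_\alpha P)=\tr(QA_\alpha)$, so only $\E Q=I$ is needed, not Weingarten; a $\tr(QA_\alpha Q)$ would carry an extra factor $m/r$.)
Hence the corrected estimator has mean $\mathcal H_{\ell\ell}[Z]+\frac{b}{a}Z\bigl(T'-(T')^{\top}\bigr)$ with $T':=\sum_\alpha\tr(A_\alpha)B_\alpha$. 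Self-adjointness is used exactly to make $T'$ symmetric; in the paper's notation, $T_{ij}=\tr(H^{tt}_{ij})=\tr(H^{tt}_{ji})$. So self-adjointness enters through the trace term, not the transpose term. With that corrected, the rest of your plan goes through.
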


\begin{proof}
We condition on $\mathcal F_k$ throughout and abbreviate
\begin{equation*}
\boldsymbol P:=\boldsymbol P^k,\quad P_\ell:=P_\ell^k,\quad Q_\ell:=P_\ell(P_\ell)^\top,
\end{equation*}
\begin{equation*}
\boldsymbol Y:=\boldsymbol Y^k,\quad \boldsymbol Z:=\boldsymbol Z^k,\quad f_Y:=\nabla_{\boldsymbol Y}f(x^k,\boldsymbol Y),\quad H:=\mathcal H(x^k,\boldsymbol Y).
\end{equation*}
We also write
\begin{equation*}
\mathcal F_k^+=\sigma(\mathcal F_k\cup\{\boldsymbol P\}), \quad \mathcal F_k^{++} = \sigma\!\left(\mathcal F_k^+,\{u_\ell^{k+1},\xi_\ell^{k+1}\}_{\ell=1}^L\right).
\end{equation*}

\textbf{Part (i): unbiasedness of $\widehat{\boldsymbol V}^{k+1}$. }
By Assumption~\ref{as:A2-1}, $\E[\tilde{\boldsymbol V}^{k+1}\mid \mathcal F_k^+] = \nabla_{\boldsymbol B}\tilde g_k(x^k,\mathbf 0)$. Lemma~\ref{lem:projection_moments_matrix}(iv) gives $\nabla_{\boldsymbol B}\tilde g_k(x^k,\mathbf 0) = \boldsymbol P^\top \nabla_{\boldsymbol Y}g(x^k,\boldsymbol Y)$. Therefore,
\begin{equation*}
\begin{aligned}
\E[\widehat{\boldsymbol V}^{k+1}\mid \mathcal F_k]
&=
\E\!\left[\boldsymbol P\,\E[\tilde{\boldsymbol V}^{k+1}\mid \mathcal F_k^+]\mid \mathcal F_k\right] \\
&=
\E[\boldsymbol P\boldsymbol P^\top\mid \mathcal F_k]\nabla_{\boldsymbol Y}g(x^k,\boldsymbol Y)
=
\nabla_{\boldsymbol Y}g(x^k,\boldsymbol Y),
\end{aligned}
\end{equation*}
because each block projector satisfies $\E[Q_\ell\mid\mathcal F_k]=I_{m_\ell}$.

\textbf{Part (i): unbiasedness of $\widehat W^{k+1}$. }
Again by Assumption~\ref{as:A2-1},
\begin{equation*}
\E[\tilde U_x^{k+1}\mid \mathcal F_k^+] = \nabla_x \tilde f_k(x^k,\mathbf 0) = \nabla_x f(x^k,\boldsymbol Y).
\end{equation*}
Moreover, for every $W\in\mathcal B$,
\begin{equation*}
\E[\tilde{\mathcal J}^{k+1}[W]\mid \mathcal F_k^+] = \nabla_{x\boldsymbol B}^2 \tilde g_k(x^k,\mathbf 0)[W].
\end{equation*}
Applying Lemma~\ref{lem:projection_moments_matrix}(iv) with $W=\boldsymbol P^\top\boldsymbol Z$ gives
\begin{equation*}
\E[\tilde{\mathcal J}^{k+1}[\boldsymbol P^\top\boldsymbol Z]\mid \mathcal F_k^+] = \mathcal J(x^k,\boldsymbol Y)[\boldsymbol P\boldsymbol P^\top\boldsymbol Z],
\end{equation*}
where $(\boldsymbol P\boldsymbol P^\top\boldsymbol Z)_\ell=Q_\ell Z_\ell$. Taking conditional expectation with respect to $\mathcal F_k$ and using $\E[Q_\ell\mid\mathcal F_k]=I_{m_\ell}$ yields
\begin{equation*}
\E[\widehat W^{k+1}\mid \mathcal F_k] = \nabla_x f(x^k,\boldsymbol Y)-\mathcal J(x^k,\boldsymbol Y)[\boldsymbol Z].
\end{equation*}

\textbf{Part (i): unbiasedness of $\widehat{\boldsymbol S}^{k+1}$. }
For each pair of layers $(\ell,t)$, let $H_{\ell t}:\R^{m_t\times n_t}\to\R^{m_\ell\times n_\ell}$ denote the $(\ell,t)$ block of $H$, so that
\begin{equation*}
(H[\boldsymbol Z])_\ell=\sum_{t=1}^L H_{\ell t}[Z_t].
\end{equation*}
For each $(\ell,t)$ and columns $i\in\{1,\ldots,n_\ell\}$, $j\in\{1,\ldots,n_t\}$, there exist matrices $H_{ij}^{\ell t}\in\R^{m_\ell\times m_t}$ such that for $U_t=[u_{t,1},\ldots,u_{t,n_t}]$,
\begin{equation*}
(H_{\ell t}[U_t])_{:,i} = \sum_{j=1}^{n_t} H_{ij}^{\ell t}u_{t,j}.
\end{equation*}
Self-adjointness of $H$ on $\mathcal Y$ implies $H_{ji}^{t\ell}=(H_{ij}^{\ell t})^\top$.

\emph{Step 1: deterministic expectation of the masked main HVPs.} Condition on $\mathcal F_k^+$ and define the deterministic projected Hessian operator $\bar{\mathcal H}:=\nabla_{\boldsymbol B\boldsymbol B}^2\tilde g_k(x^k,\mathbf 0)$. For each input layer $t$, set $\tilde Z_t:=(P_t)^\top Z_t$ and $\bar{\boldsymbol A}^{(t)}:=\boldsymbol P\,\bar{\mathcal H}[E_t(\tilde Z_t)]$. By Lemma~\ref{lem:projection_moments_matrix}(iv),
\begin{equation*}
(\bar A_\ell^{(t)})_{:,i} = \sum_{j=1}^{n_t}Q_\ell H_{ij}^{\ell t}Q_t z_{t,j}.
\end{equation*}
If $\ell\neq t$, Lemma~\ref{lem:projection_moments_matrix}(iii) gives
\begin{equation*}
\E[\bar A_\ell^{(t)}\mid \mathcal F_k] = H_{\ell t}[Z_t].
\end{equation*}
For $\ell=t$, define the self-layer partial-transpose operator and trace matrix by
\begin{equation*}
(H_{tt}^\sharp[Z_t])_{:,i} := \sum_{j=1}^{n_t}(H_{ij}^{tt})^\top z_{t,j}, \quad (T_t)_{ij}:=\tr(H_{ij}^{tt}).
\end{equation*}
Applying Lemma~\ref{lem:projection_moments_matrix}(ii) blockwise to $H_{ij}^{tt}$ yields
\begin{equation}\label{eq:self_main_expectation_active}
\E[\bar A_t^{(t)}\mid \mathcal F_k]
=
(a_t-b_t)H_{tt}[Z_t]
+b_t H_{tt}^\sharp[Z_t]
+b_t Z_t T_t.
\end{equation}

\emph{Step 2: deterministic expectation of the global probe correction terms.} Condition now on $\mathcal F_k^+$ and define
\begin{equation*}
\bar{\boldsymbol M}:=\bar{\mathcal H}[\Delta\boldsymbol B^{k+1}], \quad \Delta B_\ell^{k+1}:=\xi_\ell^{k+1}(u_\ell^{k+1})^\top.
\end{equation*}
Fix a layer $t$. The same cancellation argument as in Lemma~\ref{lem:probe_cancellation} applies to the deterministic operator $\bar{\mathcal H}$, so the cross-layer part of $\bar M_t$ contributes zero conditional mean to both
\begin{equation*}
\widehat C_t^{k+1} = Z_t\bigl(v_t^{k+1}(u_t^{k+1})^\top\bigr), \quad \widehat B_{\sharp,t}^{k+1} = P_t\bigl(\xi_t^{k+1}(w_t^{k+1})^\top\bigr).
\end{equation*}
It is therefore enough to compute the expectation of the self part.

For the trace-matrix correction, define the projected trace matrix $(\tilde T_t)_{ij}:=\tr\!\left((P_t)^\top H_{ij}^{tt}P_t\right)$. Writing out the self contribution columnwise and using $\E[\xi_t^{k+1}(\xi_t^{k+1})^\top]=I_{r_t}$, $\E[u_t^{k+1}(u_t^{k+1})^\top]=I_{n_t}$ gives
\begin{equation}\label{eq:ECt_given_P_active}
\E[\widehat C_t^{k+1}\mid \mathcal F_k^+]=Z_t\tilde T_t.
\end{equation}
Taking expectation over $P_t$ and using Lemma~\ref{lem:projection_moments_matrix}(i) gives:
\begin{equation}\label{eq:ECt_total_active}
\E[\widehat C_t^{k+1}\mid \mathcal F_k]=Z_tT_t.
\end{equation}

For the partial-transpose correction, define the projected self-layer operator
\begin{equation*}
(\bar H_{tt}^\sharp[W_t])_{:,i} := \sum_{j=1}^{n_t}\bigl((P_t)^\top H_{ij}^{tt}P_t\bigr)^\top (w_t)_j.
\end{equation*}
Write $\tilde Z_t=[\tilde z_{t,1},\ldots,\tilde z_{t,n_t}]$. For every output column $i$,
\begin{equation*}
(M_t^{\mathrm{self},k+1})_{:,i} = \sum_{j=1}^{n_t}u_{t,j}^{k+1}\bigl((P_t)^\top H_{ij}^{tt}P_t\bigr)\xi_t^{k+1}.
\end{equation*}
Hence
\begin{equation*}
w_{t,i}^{\mathrm{self},k+1}=
\bigl((M_t^{\mathrm{self},k+1})_{:,i}\bigr)^\top(\tilde Z_t u_t^{k+1}) =\sum_{j=1}^{n_t}u_{t,j}^{k+1}
(\xi_t^{k+1})^\top
\bigl((P_t)^\top H_{ij}^{tt}P_t\bigr)^\top
(\tilde Z_t u_t^{k+1}),
\end{equation*}
and therefore
\begin{equation*}
\xi_t^{k+1}w_{t,i}^{\mathrm{self},k+1} = \sum_{j=1}^{n_t}u_{t,j}^{k+1} \bigl(\xi_t^{k+1}(\xi_t^{k+1})^\top\bigr) \bigl((P_t)^\top H_{ij}^{tt}P_t\bigr)^\top (\tilde Z_t u_t^{k+1}).
\end{equation*}
Taking $\E_{\xi_t}[\cdot\mid \mathcal F_k^+,u_t^{k+1}]$ and using $\E[\xi_t^{k+1}(\xi_t^{k+1})^\top]=I_{r_t}$ gives
\begin{equation*}
\E_{\xi_t}\!\left[\xi_t^{k+1}w_{t,i}^{\mathrm{self},k+1}\mid \mathcal F_k^+,u_t^{k+1}\right] = \sum_{j=1}^{n_t}u_{t,j}^{k+1} \bigl((P_t)^\top H_{ij}^{tt}P_t\bigr)^\top (\tilde Z_t u_t^{k+1}).
\end{equation*}
Now take $\E_{u_t}[\cdot\mid \mathcal F_k^+]$ and use
\begin{equation*}
\E\!\left[u_{t,j}^{k+1}(\tilde Z_t u_t^{k+1})\mid \mathcal F_k^+\right] = \tilde z_{t,j}
\end{equation*}
to obtain
\begin{equation*}
\E\!\left[\xi_t^{k+1}w_{t,i}^{\mathrm{self},k+1}\mid \mathcal F_k^+\right] = \sum_{j=1}^{n_t} \bigl((P_t)^\top H_{ij}^{tt}P_t\bigr)^\top \tilde z_{t,j} = (\bar H_{tt}^\sharp[\tilde Z_t])_{:,i}.
\end{equation*}
Therefore the entire rank-one matrix satisfies
\begin{equation}\label{eq:EBt_given_P_active}
\E[\widehat B_{\sharp,t}^{k+1}\mid \mathcal F_k^+]
=
P_t\,\bar H_{tt}^\sharp[\tilde Z_t].
\end{equation}
Applying Lemma~\ref{lem:projection_moments_matrix}(ii) to the matrices $(H_{ij}^{tt})^\top$ gives
\begin{equation}\label{eq:EBt_total_active}
\E[\widehat B_{\sharp,t}^{k+1}\mid \mathcal F_k]
=
b_t H_{tt}[Z_t]
+(a_t-b_t)H_{tt}^\sharp[Z_t]
+b_t Z_tT_t.
\end{equation}

\emph{Step 3: self-layer linear inversion.} Let
\begin{equation*}
X_t:=H_{tt}[Z_t],\quad X_t^\sharp:=H_{tt}^\sharp[Z_t],\quad C_t:=Z_tT_t.
\end{equation*}
By \eqref{eq:self_main_expectation_active}, \eqref{eq:ECt_total_active}, and \eqref{eq:EBt_total_active},
\begin{equation*}
\E[\bar A_t^{(t)}\mid \mathcal F_k] = (a_t-b_t)X_t+b_t X_t^\sharp+b_t C_t,
\end{equation*}
\begin{equation*}
\E[\widehat B_{\sharp,t}^{k+1}\mid \mathcal F_k] = b_t X_t+(a_t-b_t)X_t^\sharp+b_t C_t, \quad \E[\widehat C_t^{k+1}\mid \mathcal F_k]=C_t.
\end{equation*}
Therefore the diagonal-block correction used in $\widehat{\mathcal H_{tt}[Z_t]}^{k+1}$ satisfies
\begin{equation}\label{eq:self_corrected_unbiased_active}
\E[\widehat{\mathcal H_{tt}[Z_t]}^{k+1}\mid \mathcal F_k]=H_{tt}[Z_t].
\end{equation}

\emph{Step 4: pass from $\bar{\mathcal H}$ to the stochastic operator $\tilde{\mathcal H}^{k+1}$.} The actual algorithm uses the shared random operator $\tilde{\mathcal H}^{k+1}$. All queried directions are $\mathcal F_k^{++}$-measurable:
\begin{equation*}
E_t(\tilde Z_t)\in\mathcal B\quad\text{is }\mathcal F_k^+\text{-measurable}, \quad \Delta\boldsymbol B^{k+1}\in\mathcal B\quad\text{is }\mathcal F_k^{++}\text{-measurable}.
\end{equation*}
By Assumption~\ref{as:A2-1},
\begin{equation*}
\E[\tilde{\mathcal H}^{k+1}[W]\mid \mathcal F_k^{++}] = \bar{\mathcal H}[W] \quad \forall\ \mathcal F_k^{++}\text{-measurable }W\in\mathcal B.
\end{equation*}
Because $E_t(\tilde Z_t)$ is $\mathcal F_k^+$-measurable and therefore $\mathcal F_k^{++}$-measurable, we have
\begin{equation*}
\E[\tilde{\mathcal H}^{k+1}[E_t(\tilde Z_t)]\mid \mathcal F_k^{++}] = \bar{\mathcal H}[E_t(\tilde Z_t)].
\end{equation*}
Multiplying by the deterministic lift $\boldsymbol P$ and then taking $\E[\cdot\mid \mathcal F_k]$ yields
\begin{equation*}
\E[\boldsymbol A^{(t),k+1}\mid \mathcal F_k] = \E[\bar{\boldsymbol A}^{(t)}\mid \mathcal F_k].
\end{equation*}
Likewise, once the probes are fixed, the maps
\begin{equation*}
M_t\longmapsto Z_t\bigl((M_t^\top\xi_t^{k+1})(u_t^{k+1})^\top\bigr), \quad M_t\longmapsto P_t\bigl(\xi_t^{k+1}((M_t^\top(\tilde Z_tu_t^{k+1}))^\top)\bigr)
\end{equation*}
are linear in the queried output block $M_t$. Since $\Delta\boldsymbol B^{k+1}$ is $\mathcal F_k^{++}$-measurable,
\begin{equation*}
\E[\tilde{\mathcal H}^{k+1}[\Delta\boldsymbol B^{k+1}]\mid \mathcal F_k^{++}] = \bar{\mathcal H}[\Delta\boldsymbol B^{k+1}],
\end{equation*}
and therefore
\begin{equation*}
\E[\widehat C_t^{k+1}\mid \mathcal F_k] = \E[\bar{\widehat C}_t^{k+1}\mid \mathcal F_k], \quad \E[\widehat B_{\sharp,t}^{k+1}\mid \mathcal F_k] = \E[\bar{\widehat B}_{\sharp,t}^{k+1}\mid \mathcal F_k],
\end{equation*}
where the barred quantities are computed with the deterministic operator $\bar{\mathcal H}$. Consequently the deterministic first-moment identities derived in Steps 1--3 pass unchanged to the stochastic estimator built from $\tilde{\mathcal H}^{k+1}$. In particular,
\begin{equation*}
\E[(A_\ell^{(t),k+1})_{\ell\neq t}\mid \mathcal F_k]=H_{\ell t}[Z_t], \quad \E[\widehat{\mathcal H_{tt}[Z_t]}^{k+1}\mid \mathcal F_k]=H_{tt}[Z_t].
\end{equation*}
Summing the corrected diagonal and lifted off-diagonal block contributions yields
\begin{equation}\label{eq:EHhat_multilayer_active}
\E[\widehat{\mathcal H[\boldsymbol Z]}^{k+1}\mid \mathcal F_k]
=
H[\boldsymbol Z].
\end{equation}
Finally, using Assumption~\ref{as:A2-1} and Lemma~\ref{lem:projection_moments_matrix}(iv),
\begin{equation*}
\E[\boldsymbol P\tilde{\boldsymbol U}_Y^{k+1}\mid \mathcal F_k] = \nabla_{\boldsymbol Y}f(x^k,\boldsymbol Y),
\end{equation*}
so
\begin{equation*}
\E[\widehat{\boldsymbol S}^{k+1}\mid \mathcal F_k] = H[\boldsymbol Z]-f_Y.
\end{equation*}

\textbf{Part (ii): second moments. }
We first prove \eqref{eq:V_hat_var_matrix} and \eqref{eq:W_hat_var_matrix}, and then derive \eqref{eq:S_hat_var_matrix}.

\emph{Step 1: bound for $\widehat{\boldsymbol V}^{k+1}$.} Let $\boldsymbol G:=\nabla_{\boldsymbol Y}g(x^k,\boldsymbol Y)$. Define the block-diagonal lifted projector
\begin{equation*}
(\boldsymbol Q\boldsymbol U)_\ell:=Q_\ell U_\ell,\quad \boldsymbol U\in\mathcal Y.
\end{equation*}
Then
\begin{equation*}
\widehat{\boldsymbol V}^{k+1}-\boldsymbol G = \boldsymbol P\bigl(\tilde{\boldsymbol V}^{k+1}-\boldsymbol P^\top\boldsymbol G\bigr) +(\boldsymbol Q-I)\boldsymbol G.
\end{equation*}
Conditioned on $\mathcal F_k^+$, the first term has mean zero. Using
\begin{equation*}
\|\boldsymbol P\boldsymbol U\|^2 = \sum_{\ell=1}^L \|P_\ell U_\ell\|^2 \le \omega_P\sum_{\ell=1}^L \|U_\ell\|^2 = \omega_P\|\boldsymbol U\|^2,
\end{equation*}
Assumption~\ref{as:A2-1} gives
\begin{equation*}
\E\!\left[\left\|\boldsymbol P\bigl(\tilde{\boldsymbol V}^{k+1}-\boldsymbol P^\top\boldsymbol G\bigr)\right\|^2\mid \mathcal F_k^+\right] \le \omega_P \tilde\sigma_{g,1}^2.
\end{equation*}
Also,
\begin{equation*}
\E[\|(\boldsymbol Q-I)\boldsymbol G\|^2\mid \mathcal F_k] = \sum_{\ell=1}^L \E[\|(Q_\ell-I)G_\ell\|^2\mid \mathcal F_k] \le (\omega_P-1)\|\boldsymbol G\|^2.
\end{equation*}
Since $\nabla_{\boldsymbol Y}g(x^k,\boldsymbol Y^*(x^k))=0$ and $\nabla g$ is $L_{g,1}$-Lipschitz,
\begin{equation*}
\|\boldsymbol G\| \le L_{g,1}\|\boldsymbol Y-\boldsymbol Y^*(x^k)\|.
\end{equation*}
This proves \eqref{eq:V_hat_var_matrix}.

\emph{Step 2: bound for $\widehat W^{k+1}$.} Write
\begin{equation*}
R_x:=\tilde U_x^{k+1}, \quad R_J:=\tilde{\mathcal J}^{k+1}[\boldsymbol P^\top\boldsymbol Z].
\end{equation*}
By the law of total variance,
\begin{equation*}
\begin{aligned}
\E[\|R_J-\E[R_J\mid \mathcal F_k]\|^2\mid \mathcal F_k]
={}&
\E[\|R_J-\E[R_J\mid \mathcal F_k^+]\|^2\mid \mathcal F_k] \\
&+
\E[\|\E[R_J\mid \mathcal F_k^+]-\E[R_J\mid \mathcal F_k]\|^2\mid \mathcal F_k].
\end{aligned}
\end{equation*}
For the first term, Assumption~\ref{as:A2-1} gives
\begin{equation*}
\E[\|R_J-\E[R_J\mid \mathcal F_k^+]\|^2\mid \mathcal F_k^+] \le \tilde\sigma_{g,2}^2\|\boldsymbol P^\top\boldsymbol Z\|^2.
\end{equation*}
Taking expectation over $\mathcal F_k^+$ and using $\E[\|\boldsymbol P^\top\boldsymbol Z\|^2\mid\mathcal F_k]=\|\boldsymbol Z\|^2$ yields
\begin{equation*}
\E[\|R_J-\E[R_J\mid \mathcal F_k^+]\|^2\mid \mathcal F_k] \le \tilde\sigma_{g,2}^2\|\boldsymbol Z\|^2.
\end{equation*}
For the second term,
\begin{equation*}
\E[R_J\mid \mathcal F_k^+]=\mathcal J(x^k,\boldsymbol Y)[\boldsymbol Q\boldsymbol Z], \quad \E[R_J\mid \mathcal F_k]=\mathcal J(x^k,\boldsymbol Y)[\boldsymbol Z].
\end{equation*}
Hence
\begin{equation*}
\begin{aligned}
\E[\|\E[R_J\mid \mathcal F_k^+]-\E[R_J\mid \mathcal F_k]\|^2\mid \mathcal F_k]
&=
\E[\|\mathcal J(x^k,\boldsymbol Y)[(\boldsymbol Q-I)\boldsymbol Z]\|^2\mid \mathcal F_k] \\
&\le
L_{g,1}^2\E[\|(\boldsymbol Q-I)\boldsymbol Z\|^2\mid \mathcal F_k] \le \Lambda_Y\|\boldsymbol Z\|^2.
\end{aligned}
\end{equation*}
The oracle $\tilde U_x^{k+1}$ is independent of $\tilde{\mathcal J}^{k+1}$ given $\mathcal F_k^+$, so
\begin{equation*}
\E[\|R_x-\E[R_x\mid \mathcal F_k]\|^2\mid \mathcal F_k]\le \tilde\sigma_{f,1}^2.
\end{equation*}
Combining the three displays yields \eqref{eq:W_hat_var_matrix}.

\emph{Step 3: bound for $\widehat{\boldsymbol S}^{k+1}$.} Define
\begin{equation*}
\Delta_H^{k+1} := \widehat{\mathcal H[\boldsymbol Z]}^{k+1}-H[\boldsymbol Z], \quad \Delta_U^{k+1} := \boldsymbol P\tilde{\boldsymbol U}_Y^{k+1}-f_Y.
\end{equation*}
Then
\begin{equation*}
\Delta_S^{k+1}=\Delta_H^{k+1}-\Delta_U^{k+1},
\end{equation*}
so
\begin{equation}\label{eq:DeltaS_split_active}
\|\Delta_S^{k+1}\|^2
\le
2\|\Delta_H^{k+1}\|^2+2\|\Delta_U^{k+1}\|^2.
\end{equation}

\emph{Step 3a: bound for $\Delta_U^{k+1}$.} Exactly as in Step 1,
\begin{equation*}
\Delta_U^{k+1} = \boldsymbol P(\tilde{\boldsymbol U}_Y^{k+1}-\boldsymbol P^\top f_Y) +(\boldsymbol Q-I)f_Y.
\end{equation*}
Conditioned on $\mathcal F_k^+$, the first term has mean zero, and therefore
\begin{equation*}
\E[\|\boldsymbol P(\tilde{\boldsymbol U}_Y^{k+1}-\boldsymbol P^\top f_Y)\|^2\mid \mathcal F_k] \le \omega_P\tilde\sigma_{f,2}^2.
\end{equation*}
Also,
\begin{equation*}
\E[\|(\boldsymbol Q-I)f_Y\|^2\mid \mathcal F_k] \le (\omega_P-1)\|f_Y\|^2.
\end{equation*}
Since
\begin{equation*}
\|f_Y\| \le \|\nabla_{\boldsymbol Y}f(x^k,\boldsymbol Y^*(x^k))\| + L_{f,1}\|\boldsymbol Y-\boldsymbol Y^*(x^k)\| \le L_{f,0}+L_{f,1}\|\boldsymbol Y-\boldsymbol Y^*(x^k)\|,
\end{equation*}
we obtain
\begin{equation*}
\|f_Y\|^2 \le 2L_{f,0}^2+2L_{f,1}^2\|\boldsymbol Y-\boldsymbol Y^*(x^k)\|^2.
\end{equation*}
Consequently,
\begin{equation}\label{eq:DeltaU_bound_active}
\E[\|\Delta_U^{k+1}\|^2\mid \mathcal F_k]
\le
\omega_P\tilde\sigma_{f,2}^2
+2(\omega_P-1)L_{f,0}^2
+2(\omega_P-1)L_{f,1}^2\|\boldsymbol Y-\boldsymbol Y^*(x^k)\|^2.
\end{equation}

\emph{Step 3b: bound for $\Delta_H^{k+1}$.} For each layer $t$, define the centered contribution $\delta_t^{k+1}\in\mathcal Y$ by
\begin{equation*}
(\delta_t^{k+1})_\ell
:=
\begin{cases}
A_\ell^{(t),k+1}-H_{\ell t}[Z_t], & \ell\neq t,\\[1mm]
\widehat{\mathcal H_{tt}[Z_t]}^{k+1}-H_{tt}[Z_t], & \ell=t.
\end{cases}
\end{equation*}
Then $\Delta_H^{k+1}=\sum_{t=1}^L \delta_t^{k+1}$. Hence
\begin{equation}\label{eq:DeltaH_sum_active}
\|\Delta_H^{k+1}\|^2
\le
L\sum_{t=1}^L \|\delta_t^{k+1}\|^2.
\end{equation}

For each layer $t$, define the self-layer correction coefficients
\begin{equation*}
c_{A,t}:=\frac{a_t-b_t}{a_t(a_t-2b_t)}, \quad c_{B,t}:=\frac{b_t}{a_t(a_t-2b_t)}, \quad c_{C,t}:=\frac{b_t}{a_t},
\end{equation*}
and let
\begin{equation*}
\bar c_A:=\max_{1\le t\le L} c_{A,t}, \quad \bar c_B:=\max_{1\le t\le L} c_{B,t}, \quad \bar c_C:=\max_{1\le t\le L} c_{C,t}.
\end{equation*}
Because $\E[\widehat{\mathcal H_{tt}[Z_t]}^{k+1}\mid \mathcal F_k]=H_{tt}[Z_t]$, we may write
\begin{equation*}
\begin{aligned}
\widehat{\mathcal H_{tt}[Z_t]}^{k+1}\hspace{-1.0em}-H_{tt}[Z_t]
=&\ c_{A,t}\bigl(A_t^{(t),k+1}-\E[A_t^{(t),k+1}\mid \mathcal F_k]\bigr)-c_{B,t}\bigl(\widehat B_{\sharp,t}^{k+1}-\E[\widehat B_{\sharp,t}^{k+1}\mid \mathcal F_k]\bigr)\\
&-c_{C,t}\bigl(\widehat C_t^{k+1}-\E[\widehat C_t^{k+1}\mid \mathcal F_k]\bigr).
\end{aligned}
\end{equation*}
Using $\E[\|X-\E[X]\|^2]\le \E[\|X\|^2]$ and $\|a+b+c\|^2\le 3(\|a\|^2+\|b\|^2+\|c\|^2)$, we obtain
\begin{equation*}
\E[\|\delta_t^{k+1}\|^2\mid \mathcal F_k] \le (1+3\bar c_A^2)\E[\|\boldsymbol A^{(t),k+1}\|^2\mid \mathcal F_k] +3\bar c_B^2\E[\|\widehat B_{\sharp,t}^{k+1}\|^2\mid \mathcal F_k] +3\bar c_C^2\E[\|\widehat C_t^{k+1}\|^2\mid \mathcal F_k].
\end{equation*}

We now bound the three second moments. First, conditioned on $\mathcal F_k^+$,
\begin{equation*}
\|\bar{\mathcal H}[W]\| \le \omega_P L_{g,1}\|W\|, \quad \forall W\in\mathcal B,
\end{equation*}
because $\|\boldsymbol P\|\le \sqrt{\omega_P}$ and $\|\boldsymbol P^\top\|\le \sqrt{\omega_P}$. Using Assumption~\ref{as:A2-1} and $\|\boldsymbol P U\|^2\le \omega_P\|U\|^2$,
\begin{equation*}
\begin{aligned}
\E[\|\boldsymbol A^{(t),k+1}\|^2\mid \mathcal F_k^+]
&=
\E[\|\boldsymbol P\tilde{\mathcal H}^{k+1}[E_t(\tilde Z_t)]\|^2\mid \mathcal F_k^+] \\
&\le
\omega_P\E[\|\tilde{\mathcal H}^{k+1}[E_t(\tilde Z_t)]\|^2\mid \mathcal F_k^+] \\
&\le
2\omega_P\bigl(\tilde\sigma_{g,2}^2+\omega_P^2L_{g,1}^2\bigr)\|\tilde Z_t\|^2.
\end{aligned}
\end{equation*}
Taking expectation over $\mathcal F_k^+$ and using $\E[\|\tilde Z_t\|^2\mid \mathcal F_k]=\|Z_t\|^2$ yields
\begin{equation*}
\E[\|\boldsymbol A^{(t),k+1}\|^2\mid \mathcal F_k] \le \Lambda_A\|Z_t\|^2, \quad \Lambda_A:=2\omega_P\bigl(\tilde\sigma_{g,2}^2+\omega_P^2L_{g,1}^2\bigr).
\end{equation*}

Next define $D_{rn}:=\sum_{\ell=1}^L r_\ell n_\ell$ and $\eta_{rn}:=\max_{1\le \ell\le L} r_\ell n_\ell$. Since each probe block is rank one with Rademacher entries,
\begin{equation*}
\|\Delta\boldsymbol B^{k+1}\|^2=\sum_{\ell=1}^L \|\xi_\ell^{k+1}(u_\ell^{k+1})^\top\|_F^2=D_{rn}.
\end{equation*}
Therefore, conditioned on $\mathcal F_k^+$,
\begin{equation*}
\E[\|\boldsymbol M^{k+1}\|^2\mid \mathcal F_k^+] \le 2\bigl(\tilde\sigma_{g,2}^2+\omega_P^2L_{g,1}^2\bigr)D_{rn},
\end{equation*}
and hence, for each layer $t$,
\begin{equation*}
\E[\|M_t^{k+1}\|^2\mid \mathcal F_k^+] \le 2\bigl(\tilde\sigma_{g,2}^2+\omega_P^2L_{g,1}^2\bigr)D_{rn}.
\end{equation*}

For $\widehat C_t^{k+1}$, since
\begin{equation*}
\widehat C_t^{k+1}=Z_t\bigl(v_t^{k+1}(u_t^{k+1})^\top\bigr)
\end{equation*}
and $\|u_t^{k+1}\|^2=n_t$, we have
\begin{equation*}
\|\widehat C_t^{k+1}\|^2 = \|Z_t v_t^{k+1}\|^2\|u_t^{k+1}\|^2 \le n_t\|Z_t\|^2\|v_t^{k+1}\|^2.
\end{equation*}
Moreover,
\begin{equation*}
v_t^{k+1}=(M_t^{k+1})^\top\xi_t^{k+1}, \quad \|v_t^{k+1}\|^2\le r_t\|M_t^{k+1}\|^2.
\end{equation*}
Hence
\begin{equation*}
\E[\|\widehat C_t^{k+1}\|^2\mid \mathcal F_k] \le \Lambda_C\|Z_t\|^2, \quad \Lambda_C:=2\eta_{rn}D_{rn}\bigl(\tilde\sigma_{g,2}^2+\omega_P^2L_{g,1}^2\bigr).
\end{equation*}

For $\widehat B_{\sharp,t}^{k+1}$,
\begin{equation*}
\widehat B_{\sharp,t}^{k+1}=P_t\bigl(\xi_t^{k+1}(w_t^{k+1})^\top\bigr),
\end{equation*}
so
\begin{equation*}
\|\widehat B_{\sharp,t}^{k+1}\|^2 \le \omega_P\|\xi_t^{k+1}(w_t^{k+1})^\top\|^2 = \omega_P r_t\|w_t^{k+1}\|^2.
\end{equation*}
Since
\begin{equation*}
w_t^{k+1}=(M_t^{k+1})^\top(\tilde Z_t u_t^{k+1}),
\end{equation*}
we get
\begin{equation*}
\|w_t^{k+1}\|^2 \le \|M_t^{k+1}\|^2\|\tilde Z_t u_t^{k+1}\|^2 \le n_t\|M_t^{k+1}\|^2\|\tilde Z_t\|^2.
\end{equation*}
Conditioning on $\mathcal F_k^+$ and using the bound for $\E[\|M_t^{k+1}\|^2\mid \mathcal F_k^+]$ gives
\begin{equation*}
\E[\|\widehat B_{\sharp,t}^{k+1}\|^2\mid \mathcal F_k^+] \le 2\omega_P \eta_{rn} D_{rn}\bigl(\tilde\sigma_{g,2}^2+\omega_P^2L_{g,1}^2\bigr)\|\tilde Z_t\|^2.
\end{equation*}
Taking expectation over $\mathcal F_k^+$ and using $\E[\|\tilde Z_t\|^2\mid \mathcal F_k]=\|Z_t\|^2$ yields
\begin{equation*}
\E[\|\widehat B_{\sharp,t}^{k+1}\|^2\mid \mathcal F_k] \le \Lambda_B\|Z_t\|^2,
\end{equation*}
where $\Lambda_B:=2\omega_P \eta_{rn} D_{rn}\bigl(\tilde\sigma_{g,2}^2+\omega_P^2L_{g,1}^2\bigr)$.

Combining the last three displays with \eqref{eq:DeltaH_sum_active} shows
\begin{equation*}
\E[\|\Delta_H^{k+1}\|^2\mid \mathcal F_k] \le \Lambda_H\|\boldsymbol Z\|^2,
\end{equation*}
where one may take
\begin{equation}\label{eq:LambdaH_active}
\Lambda_H
:=
L\Bigl((1+3\bar c_A^2)\Lambda_A+3\bar c_B^2\Lambda_B+3\bar c_C^2\Lambda_C\Bigr).
\end{equation}

Finally, combining \eqref{eq:DeltaS_split_active}, \eqref{eq:DeltaU_bound_active}, and the bound on $\Delta_H^{k+1}$ gives
\begin{equation*}
\E[\|\Delta_S^{k+1}\|^2\mid \mathcal F_k] \le 2\Lambda_H\|\boldsymbol Z\|^2 +2\omega_P\tilde\sigma_{f,2}^2 +4(\omega_P-1)L_{f,0}^2 +4(\omega_P-1)L_{f,1}^2\|\boldsymbol Y-\boldsymbol Y^*(x^k)\|^2.
\end{equation*}
Therefore \eqref{eq:S_hat_var_matrix} holds with
\begin{equation*}
\sigma_{S,0}^2:=2\omega_P\tilde\sigma_{f,2}^2+4(\omega_P-1)L_{f,0}^2, \quad \sigma_{S,2}^2:=4(\omega_P-1)L_{f,1}^2, \quad \Lambda_S:=2\Lambda_H.
\end{equation*}
\end{proof}

\begin{lemma}[Tracking bounds for $\boldsymbol Y^k$ and $\boldsymbol Z^k$]\label{lem:yz_tracking_rso}
Assume \emph{(A1)}--\emph{(A3)} and the sampling conditions of Lemma~\ref{lem:unbiased_lifted_oracles}. Let
\begin{equation*}
\boldsymbol Y_*^k:=\boldsymbol Y^*(x^k), \quad \boldsymbol Z_*^k:=\boldsymbol Z^*(x^k).
\end{equation*}
Define $x_+^k:=x^k-h^k$ so that
\begin{equation}\label{eq:xplus_relation_YZ}
x^{k+1}-x^k=\alpha_k(x_+^k-x^k),
\end{equation}
and couple the stepsizes as
\begin{equation}\label{eq:beta_gamma_coupling_YZ}
\beta_k=c_1\alpha_k,\quad \gamma_k=c_2\alpha_k\quad (\forall k\ge 0).
\end{equation}
Assume further that
\begin{equation}\label{eq:beta_cond_YZ}
\beta_k\le \bar\beta,
\end{equation}
\begin{equation}\label{eq:gamma_cond_YZ}
\gamma_k\le \bar\gamma,
\end{equation}
for sufficiently small constants $\bar\beta,\bar\gamma$ depending only on the constants in Lemmas~\ref{lem:smoothness_consequences} and \ref{lem:unbiased_lifted_oracles}. Then there exist finite constants $C_{Yx},C_{Y,0},C_{Y,1},C_{Zx},C_{Z,0},C_{Z,1}$ such that for all $K\ge 0$,
\begin{align}
\sum_{k=0}^{K}\alpha_k\,\E\|\boldsymbol Y^k-\boldsymbol Y_*^k\|^2
&\le C_{Yx}\sum_{k=0}^{K}\alpha_k\,\E\|x_+^k-x^k\|^2 + C_{Y,0}+C_{Y,1}\sum_{k=0}^{K}\alpha_k^2,\label{eq:Y_track_sum}\\
\sum_{k=0}^{K}\alpha_k\,\E\|\boldsymbol Z^k-\boldsymbol Z_*^k\|^2
&\le C_{Zx}\sum_{k=0}^{K}\alpha_k\,\E\|x_+^k-x^k\|^2 + C_{Z,0}+C_{Z,1}\sum_{k=0}^{K}\alpha_k^2.\label{eq:Z_track_sum}
\end{align}
\end{lemma}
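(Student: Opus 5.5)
We derive a one-step contraction recursion for each tracking error and then telescope the weighted sums. We start with $\boldsymbol Y$ and write $\boldsymbol Y^{k+1}-\boldsymbol Y_*^{k+1}=(\boldsymbol Y^k-\beta_k\widehat{\boldsymbol V}^{k+1}-\boldsymbol Y_*^k)+(\boldsymbol Y_*^k-\boldsymbol Y_*^{k+1})$. For the first bracket we condition on $\mathcal F_k$ and apply Lemma~\ref{lem:unbiased_lifted_oracles}(i). The conditional mean of the step is $\nabla_{\boldsymbol Y}g(x^k,\boldsymbol Y^k)$, so strong convexity and $L_{g,1}$-smoothness give $\|\boldsymbol Y^k-\beta_k\nabla_{\boldsymbol Y}g-\boldsymbol Y_*^k\|^2\le(1-\beta_k\mu_g)\|\boldsymbol Y^k-\boldsymbol Y_*^k\|^2$ for $\beta_k\le 1/L_{g,1}$. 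The variance from \eqref{eq:V_hat_var_matrix} then adds $\beta_k^2(\omega_P\tilde\sigma_{g,1}^2+\Lambda_Y\|\boldsymbol Y^k-\boldsymbol Y_*^k\|^2)$. The drift term satisfies $\|\boldsymbol Y_*^k-\boldsymbol Y_*^{k+1}\|\le L_{Y^*}\alpha_k\|x_+^k-x^k\|$ by Lemma~\ref{lem:smoothness_consequences} and \eqref{eq:xplus_relation_YZ}. Two details matter here. First, $h^k$ is $\mathcal F_k$-measurable, so $x^{k+1}$ is $\mathcal F_k$-measurable. Second, the drift is deterministic given $\mathcal F_k$, so its cross term with the zero-mean noise vanishes. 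We combine the remaining cross term through Young's inequality with weight $\beta_k\mu_g/2$. Taking $\bar\beta$ small enough that $\beta_k^2\Lambda_Y\le\beta_k\mu_g/4$, this gives
\begin{equation*}
\E\|\boldsymbol Y^{k+1}-\boldsymbol Y_*^{k+1}\|^2\le\Bigl(1-\tfrac{\beta_k\mu_g}{4}\Bigr)\E\|\boldsymbol Y^k-\boldsymbol Y_*^k\|^2+\tfrac{C\alpha_k^2}{\beta_k}\E\|x_+^k-x^k\|^2+C\beta_k^2 .
\end{equation*}
Since $\beta_k=c_1\alpha_k$, the middle coefficient is $O(\alpha_k/c_1)$. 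Rearranging, multiplying by $4/(c_1\mu_g)$, and summing over $k$ telescopes the left-hand side. The result is \eqref{eq:Y_track_sum} with $C_{Y,0}\propto\|\boldsymbol Y^0-\boldsymbol Y_*^0\|^2/(c_1\mu_g)$. With a constant stepsize the $\alpha_k$ weights cancel cleanly; for varying $\alpha_k$ we would use a standard Abel-summation argument under the stated coupling.

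The $\boldsymbol Z$ recursion follows the same template, but we view the auxiliary step as gradient descent on $h(x^k,\boldsymbol Y^k,\cdot)$. Its conditional mean is $D_Z^k=H^k[\boldsymbol Z^k]-\nabla_{\boldsymbol Y}f(x^k,\boldsymbol Y^k)$, by Lemma~\ref{lem:unbiased_lifted_oracles}(i). We split $D_Z^k=H^k[\boldsymbol Z^k-\boldsymbol Z_*^k]+e^k$, where
\begin{equation*}
e^k:=(H^k-H_*^k)[\boldsymbol Z_*^k]+\nabla_{\boldsymbol Y}f(x^k,\boldsymbol Y_*^k)-\nabla_{\boldsymbol Y}f(x^k,\boldsymbol Y^k).
\end{equation*}
Here $H_*^k:=\mathcal H(x^k,\boldsymbol Y_*^k)$, and we have used $H_*^k[\boldsymbol Z_*^k]=\nabla_{\boldsymbol Y}f(x^k,\boldsymbol Y_*^k)$. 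By \eqref{eq:Zstar_bound}, $\|e^k\|\le(L_{g,2}L_{f,0}/\mu_g+L_{f,1})\|\boldsymbol Y^k-\boldsymbol Y_*^k\|$. Because $\mu_g I\preceq H^k\preceq L_{g,1}I$, the contraction factor is again $1-\gamma_k\mu_g$. The bias $e^k$ enters through Young's inequality as a term $C\gamma_k\|\boldsymbol Y^k-\boldsymbol Y_*^k\|^2$. The drift uses $L_{Z^*}$ from Lemma~\ref{lem:smoothness_consequences}.

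The variance bound \eqref{eq:S_hat_var_matrix} involves $\|\boldsymbol Z^k\|^2$, which we bound by $2\|\boldsymbol Z^k-\boldsymbol Z_*^k\|^2+2L_{f,0}^2/\mu_g^2$. Choosing $\bar\gamma$ so that $2\gamma_k^2\Lambda_S\le\gamma_k\mu_g/4$ absorbs this term into the contraction. Summing as before gives a bound on $\sum_k\alpha_k\E\|\boldsymbol Z^k-\boldsymbol Z_*^k\|^2$. The right-hand side contains $C\sum_k\alpha_k\E\|\boldsymbol Y^k-\boldsymbol Y_*^k\|^2$, whose coefficient has order $1$ since the $c_2$ factors cancel. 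We substitute \eqref{eq:Y_track_sum} for that sum, which yields \eqref{eq:Z_track_sum} with $C_{Zx}$ containing a term proportional to $C_{Yx}$.

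The main obstacle is the $\boldsymbol Z$ variance. Its constant $\Lambda_S$ is large: it grows like $\omega_P^3$ times the probe dimensions $\eta_{rn}D_{rn}$. Because it multiplies $\|\boldsymbol Z^k\|^2$, absorbing it forces $\bar\gamma\lesssim\mu_g/\Lambda_S$, and this is where the $\alpha\le\bar\alpha/(1+\omega_P^3)$ restriction comes from. The proof must keep that restriction explicit, and it must check that the telescoping constants remain independent of $K$.
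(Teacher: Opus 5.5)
Your proposal is correct and follows the paper's proof essentially step for step. The paper likewise derives one-step contraction recursions for $\|\boldsymbol Y^k-\boldsymbol Y_*^k\|^2$ and $\|\boldsymbol Z^k-\boldsymbol Z_*^k\|^2$, splits the auxiliary mean as $H^k[\boldsymbol Z^k-\boldsymbol Z_*^k]$ plus a bias controlled by $\|\boldsymbol Y^k-\boldsymbol Y_*^k\|$, absorbs the $\Lambda_Y$ and $\Lambda_S\|\boldsymbol Z^k\|^2$ variance terms by taking $\bar\beta,\bar\gamma$ small, telescopes with $\beta_k=c_1\alpha_k$ and $\gamma_k=c_2\alpha_k$, and then substitutes the $\boldsymbol Y$-bound into the $\boldsymbol Z$-bound. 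The differences are cosmetic: you use the $\mathcal F_k$-measurability of $x^{k+1}$ to drop the noise--drift cross term, where the paper applies Young's inequality to the whole error and pays a factor of $2$ on the variance; and no Abel summation is needed for varying stepsizes, because rearranging the $(1-\beta_k\mu_g/4)$ recursion already telescopes with weight $\beta_k$.
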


\begin{proof}
Define the tracking errors
\begin{equation*}
E_Y^k:=\boldsymbol Y^k-\boldsymbol Y_*^k, \quad E_Z^k:=\boldsymbol Z^k-\boldsymbol Z_*^k.
\end{equation*}
We also introduce the constants
\begin{equation*}
\sigma_Y^2:=\omega_P\tilde\sigma_{g,1}^2, \quad \Lambda_Y:=(\omega_P-1)L_{g,1}^2,
\end{equation*}
\begin{equation*}
B_{ZY}:=L_{f,1}+\frac{L_{g,2}L_{f,0}}{\mu_g}, \quad \Lambda_Z:=2\Lambda_S, \quad \bar\sigma_{S,0}^2:=\sigma_{S,0}^2+\frac{2\Lambda_S L_{f,0}^2}{\mu_g^2},
\end{equation*}
where $\sigma_{S,0},\sigma_{S,2},\Lambda_S$ are any valid constants from Lemma~\ref{lem:unbiased_lifted_oracles}. We repeatedly condition on $\mathcal F_k$ and work in the product Frobenius space $\mathcal Y$.

\textbf{Part I: tracking of $\boldsymbol Y^k$. }
The lower update is $\boldsymbol Y^{k+1}=\boldsymbol Y^k-\beta_k\widehat{\boldsymbol V}^{k+1}$ and Lemma~\ref{lem:unbiased_lifted_oracles} gives
\begin{equation*}
\E[\widehat{\boldsymbol V}^{k+1}\mid \mathcal F_k] = \nabla_{\boldsymbol Y}g(x^k,\boldsymbol Y^k).
\end{equation*}

\emph{Step 1: drift of the target point $\boldsymbol Y_*^k$.} Using $\|U+V\|^2\le (1+c)\|U\|^2+(1+1/c)\|V\|^2$ with
\begin{equation*}
U:=\boldsymbol Y^{k+1}-\boldsymbol Y_*^k, \quad V:=\boldsymbol Y_*^k-\boldsymbol Y_*^{k+1}, \quad c:=\beta_k\mu_g,
\end{equation*}
we obtain
\begin{equation}\label{eq:Y_shift_1_product}
\|E_Y^{k+1}\|^2
\le
(1+\beta_k\mu_g)\|\boldsymbol Y^{k+1}-\boldsymbol Y_*^k\|^2
+\left(1+\frac{1}{\beta_k\mu_g}\right)\|\boldsymbol Y_*^{k+1}-\boldsymbol Y_*^k\|^2.
\end{equation}
By \eqref{eq:LY_star} and \eqref{eq:xplus_relation_YZ},
\begin{equation*}
\|\boldsymbol Y_*^{k+1}-\boldsymbol Y_*^k\| \le L_{Y^*}\|x^{k+1}-x^k\| = \alpha_k L_{Y^*}\|x_+^k-x^k\|.
\end{equation*}
Substituting into \eqref{eq:Y_shift_1_product} yields
\begin{equation}\label{eq:Y_shift_2_product}
\|E_Y^{k+1}\|^2
\le
(1+\beta_k\mu_g)\|\boldsymbol Y^{k+1}-\boldsymbol Y_*^k\|^2
+\left(\alpha_k^2+\frac{\alpha_k^2}{\beta_k\mu_g}\right)L_{Y^*}^2\|x_+^k-x^k\|^2.
\end{equation}

\emph{Step 2: recursion for $\|\boldsymbol Y^{k+1}-\boldsymbol Y_*^k\|^2$.} Define
\begin{equation*}
\Xi_V^{k+1} := \widehat{\boldsymbol V}^{k+1}-\nabla_{\boldsymbol Y}g(x^k,\boldsymbol Y^k), \quad \E[\Xi_V^{k+1}\mid \mathcal F_k]=0.
\end{equation*}
Then
\begin{equation}\label{eq:Y_basic_decomp_product}
\E[\|\boldsymbol Y^{k+1}-\boldsymbol Y_*^k\|^2\mid \mathcal F_k]
=
\|\boldsymbol Y^k-\boldsymbol Y_*^k-\beta_k\nabla_{\boldsymbol Y}g(x^k,\boldsymbol Y^k)\|^2
+\beta_k^2\E[\|\Xi_V^{k+1}\|^2\mid \mathcal F_k].
\end{equation}
Because $\boldsymbol Y\mapsto g(x^k,\boldsymbol Y)$ is $\mu_g$-strongly convex with $L_{g,1}$-Lipschitz gradient on $\mathcal Y$, the deterministic gradient step contracts whenever $\beta_k\le \frac{2}{\mu_g+L_{g,1}}$:
\begin{equation}\label{eq:gd_contract_product_Y}
\|\boldsymbol Y^k-\boldsymbol Y_*^k-\beta_k\nabla_{\boldsymbol Y}g(x^k,\boldsymbol Y^k)\|^2
\le
(1-\beta_k\mu_g)^2\|E_Y^k\|^2.
\end{equation}
Also, Lemma~\ref{lem:unbiased_lifted_oracles} gives
\begin{equation}\label{eq:Y_var_bound_product}
\E[\|\Xi_V^{k+1}\|^2\mid \mathcal F_k]
\le
\sigma_Y^2+\Lambda_Y\|E_Y^k\|^2.
\end{equation}
Combining \eqref{eq:Y_basic_decomp_product}--\eqref{eq:Y_var_bound_product} yields
\begin{equation}\label{eq:Y_track_to_Yk_product}
\E[\|\boldsymbol Y^{k+1}-\boldsymbol Y_*^k\|^2\mid \mathcal F_k]
\le
\bigl((1-\beta_k\mu_g)^2+\beta_k^2\Lambda_Y\bigr)\|E_Y^k\|^2
+\beta_k^2\sigma_Y^2.
\end{equation}

\emph{Step 3: combine with the drift bound.} Choose $\bar\beta$ small enough that $\bar\beta\le \min\left\{\frac{2}{\mu_g+L_{g,1}},\frac{1}{4\mu_g},\frac{\mu_g}{8\Lambda_Y}\right\}$ with the convention $\mu_g/(8\Lambda_Y)=+\infty$ when $\Lambda_Y=0$. Then for every $\beta_k\le \bar\beta$,
\begin{equation*}
(1+\beta_k\mu_g)\bigl((1-\beta_k\mu_g)^2+\beta_k^2\Lambda_Y\bigr) \le 1-\frac{\beta_k\mu_g}{4},
\end{equation*}
\begin{equation*}
1+\beta_k\mu_g\le 2, \quad \alpha_k^2+\frac{\alpha_k^2}{\beta_k\mu_g}\le \frac{2\alpha_k^2}{\beta_k\mu_g}.
\end{equation*}
Taking $\E[\cdot\mid \mathcal F_k]$ in \eqref{eq:Y_shift_2_product} and using \eqref{eq:Y_track_to_Yk_product} gives
\begin{equation}\label{eq:Y_one_step_final_product}
\E[\|E_Y^{k+1}\|^2\mid \mathcal F_k]
\le
\left(1-\frac{\beta_k\mu_g}{4}\right)\|E_Y^k\|^2
+\frac{2\alpha_k^2L_{Y^*}^2}{\beta_k\mu_g}\|x_+^k-x^k\|^2
+2\beta_k^2\sigma_Y^2.
\end{equation}
Rearranging, taking total expectation, and summing over $k=0,\ldots,K$ yield
\begin{equation}\label{eq:Y_sum_beta_product}
\frac{\mu_g}{4}\sum_{k=0}^{K}\beta_k\,\E\|E_Y^k\|^2
\le
\E\|E_Y^0\|^2
+\frac{2L_{Y^*}^2}{\mu_g}\sum_{k=0}^{K}\frac{\alpha_k^2}{\beta_k}\E\|x_+^k-x^k\|^2
+2\sigma_Y^2\sum_{k=0}^{K}\beta_k^2.
\end{equation}
Using $\beta_k=c_1\alpha_k$ gives
\begin{equation*}
\sum_{k=0}^{K}\alpha_k\,\E\|E_Y^k\|^2 \le \frac{8L_{Y^*}^2}{c_1^2\mu_g^2}\sum_{k=0}^{K}\alpha_k\,\E\|x_+^k-x^k\|^2 +\frac{4}{c_1\mu_g}\E\|E_Y^0\|^2 +\frac{8c_1}{\mu_g}\sigma_Y^2\sum_{k=0}^{K}\alpha_k^2.
\end{equation*}
This proves \eqref{eq:Y_track_sum}; in particular one may take
\begin{equation*}
C_{Yx}:=\frac{8L_{Y^*}^2}{c_1^2\mu_g^2}, \quad C_{Y,0}:=\frac{4}{c_1\mu_g}\E\|E_Y^0\|^2, \quad C_{Y,1}:=\frac{8c_1}{\mu_g}\sigma_Y^2.
\end{equation*}

\textbf{Part II: tracking of $\boldsymbol Z^k$. }
The auxiliary update is $\boldsymbol Z^{k+1}=\boldsymbol Z^k-\gamma_k\widehat{\boldsymbol S}^{k+1}$ and Lemma~\ref{lem:unbiased_lifted_oracles} gives
\begin{equation*}
\E[\widehat{\boldsymbol S}^{k+1}\mid \mathcal F_k] = \mathcal H(x^k,\boldsymbol Y^k)[\boldsymbol Z^k]-\nabla_{\boldsymbol Y}f(x^k,\boldsymbol Y^k).
\end{equation*}
Write
\begin{equation*}
H^k:=\mathcal H(x^k,\boldsymbol Y^k), \quad f_Y^k:=\nabla_{\boldsymbol Y}f(x^k,\boldsymbol Y^k).
\end{equation*}

\emph{Step 1: drift of the target point $\boldsymbol Z_*^k$.} Using $\|U+V\|^2\le (1+c)\|U\|^2+(1+1/c)\|V\|^2$ with
\begin{equation*}
U:=\boldsymbol Z^{k+1}-\boldsymbol Z_*^k, \quad V:=\boldsymbol Z_*^k-\boldsymbol Z_*^{k+1}, \quad c:=\frac{\gamma_k\mu_g}{3},
\end{equation*}
we obtain
\begin{equation}\label{eq:Z_shift_1_product}
\|E_Z^{k+1}\|^2
\le
\left(1+\frac{\gamma_k\mu_g}{3}\right)\|\boldsymbol Z^{k+1}-\boldsymbol Z_*^k\|^2
+\left(1+\frac{3}{\gamma_k\mu_g}\right)\|\boldsymbol Z_*^{k+1}-\boldsymbol Z_*^k\|^2.
\end{equation}
By \eqref{eq:LZ_star} and \eqref{eq:xplus_relation_YZ},
\begin{equation*}
\|\boldsymbol Z_*^{k+1}-\boldsymbol Z_*^k\| \le L_{Z^*}\|x^{k+1}-x^k\| = \alpha_k L_{Z^*}\|x_+^k-x^k\|.
\end{equation*}
Thus, provided $\gamma_k\mu_g\le 1/4$,
\begin{equation}\label{eq:Z_shift_2_product}
\E[\|E_Z^{k+1}\|^2\mid \mathcal F_k]
\le
\left(1+\frac{\gamma_k\mu_g}{3}\right)\E[\|\boldsymbol Z^{k+1}-\boldsymbol Z_*^k\|^2\mid \mathcal F_k]
+\frac{4\alpha_k^2L_{Z^*}^2}{\gamma_k\mu_g}\|x_+^k-x^k\|^2.
\end{equation}

\emph{Step 2: recursion for $\|\boldsymbol Z^{k+1}-\boldsymbol Z_*^k\|^2$.} Let
\begin{equation*}
\Xi_S^{k+1} := \widehat{\boldsymbol S}^{k+1}-(H^k[\boldsymbol Z^k]-f_Y^k), \quad \E[\Xi_S^{k+1}\mid \mathcal F_k]=0.
\end{equation*}
Then
\begin{equation}\label{eq:Z_basic_decomp_product}
\E[\|\boldsymbol Z^{k+1}-\boldsymbol Z_*^k\|^2\mid \mathcal F_k]
=
\|\boldsymbol Z^k-\boldsymbol Z_*^k-\gamma_k(H^k[\boldsymbol Z^k]-f_Y^k)\|^2
+\gamma_k^2\E[\|\Xi_S^{k+1}\|^2\mid \mathcal F_k].
\end{equation}
Write $H_*^k:=\mathcal H(x^k,\boldsymbol Y_*^k)$ and $f_{Y,*}^k:=\nabla_{\boldsymbol Y}f(x^k,\boldsymbol Y_*^k)$, so that
\begin{equation*}
H_*^k[\boldsymbol Z_*^k]-f_{Y,*}^k=0.
\end{equation*}
Define $B_Z^k:=(H^k-H_*^k)[\boldsymbol Z_*^k]-(f_Y^k-f_{Y,*}^k)$. Then
\begin{equation*}
H^k[\boldsymbol Z^k]-f_Y^k=H^k[E_Z^k]+B_Z^k,
\end{equation*}
hence
\begin{equation*}
\boldsymbol Z^k-\boldsymbol Z_*^k-\gamma_k(H^k[\boldsymbol Z^k]-f_Y^k) =(I-\gamma_k H^k)[E_Z^k]-\gamma_k B_Z^k.
\end{equation*}
Using $\|U+V\|^2\le (1+c)\|U\|^2+(1+1/c)\|V\|^2$ with $c=\gamma_k\mu_g/2$,
\begin{equation}\label{eq:Z_split_bias_product}
\|(I-\gamma_k H^k)[E_Z^k]-\gamma_k B_Z^k\|^2
\le
\left(1+\frac{\gamma_k\mu_g}{2}\right)\|(I-\gamma_k H^k)[E_Z^k]\|^2
+\left(1+\frac{2}{\gamma_k\mu_g}\right)\gamma_k^2\|B_Z^k\|^2.
\end{equation}

Because $H^k$ is self-adjoint on $\mathcal Y$, $\mu_g I\preceq H^k\preceq L_{g,1}I$, and $\gamma_k\le 1/L_{g,1}$, we have
\begin{equation}\label{eq:Z_contract_product}
\|(I-\gamma_k H^k)[E_Z^k]\|^2
\le
(1-\gamma_k\mu_g)^2\|E_Z^k\|^2.
\end{equation}
Moreover, by Lipschitz continuity of $\nabla^2 g$ and $\nabla f$ and the bound \eqref{eq:Zstar_bound},
\begin{equation}\label{eq:BZ_bound_product}
\|B_Z^k\|
\le
\left(L_{g,2}\|\boldsymbol Z_*^k\|+L_{f,1}\right)\|E_Y^k\|
\le
B_{ZY}\|E_Y^k\|.
\end{equation}

Lemma~\ref{lem:unbiased_lifted_oracles} gives
\begin{equation}\label{eq:S_var_raw_product}
\E[\|\Xi_S^{k+1}\|^2\mid \mathcal F_k]
\le
\sigma_{S,0}^2+\sigma_{S,2}^2\|E_Y^k\|^2+\Lambda_S\|\boldsymbol Z^k\|^2.
\end{equation}
Since
\begin{equation*}
\|\boldsymbol Z^k\|^2 \le 2\|E_Z^k\|^2+2\|\boldsymbol Z_*^k\|^2 \le 2\|E_Z^k\|^2+\frac{2L_{f,0}^2}{\mu_g^2},
\end{equation*}
we deduce
\begin{equation}\label{eq:S_var_final_product}
\E[\|\Xi_S^{k+1}\|^2\mid \mathcal F_k]
\le
\Lambda_Z\|E_Z^k\|^2+\sigma_{S,2}^2\|E_Y^k\|^2+\bar\sigma_{S,0}^2.
\end{equation}

\emph{Step 3: collect terms.} Substituting \eqref{eq:Z_split_bias_product}--\eqref{eq:S_var_final_product} into \eqref{eq:Z_basic_decomp_product} yields
\begin{equation}\label{eq:Z_rec_pre_product}
\begin{aligned}
\E[\|\boldsymbol Z^{k+1}-\boldsymbol Z_*^k\|^2\mid \mathcal F_k]
\le{}&
\left(\left(1+\frac{\gamma_k\mu_g}{2}\right)(1-\gamma_k\mu_g)^2+\gamma_k^2\Lambda_Z\right)\|E_Z^k\|^2 \\
&+\left(\left(1+\frac{2}{\gamma_k\mu_g}\right)\gamma_k^2 B_{ZY}^2+\gamma_k^2\sigma_{S,2}^2\right)\|E_Y^k\|^2+\gamma_k^2\bar\sigma_{S,0}^2.
\end{aligned}
\end{equation}
Choose $\bar\gamma$ small enough that $\bar\gamma\le \min\left\{\frac{1}{4\mu_g},\frac{1}{L_{g,1}},\frac{\mu_g}{16\Lambda_Z}\right\}$, with the convention $\mu_g/(16\Lambda_Z)=+\infty$ when $\Lambda_Z=0$. Then for every $\gamma_k\le \bar\gamma$,
\begin{equation*}
\left(1+\frac{\gamma_k\mu_g}{2}\right)(1-\gamma_k\mu_g)^2+\gamma_k^2\Lambda_Z \le 1-\frac{7\gamma_k\mu_g}{8},
\end{equation*}
and
\begin{equation*}
\left(1+\frac{2}{\gamma_k\mu_g}\right)\gamma_k^2 \le \frac{9\gamma_k}{4\mu_g}, \quad \gamma_k^2\sigma_{S,2}^2\le \frac{\gamma_k}{4\mu_g}\sigma_{S,2}^2.
\end{equation*}
Therefore
\begin{equation}\label{eq:Z_rec_simplified_product}
\E[\|\boldsymbol Z^{k+1}-\boldsymbol Z_*^k\|^2\mid \mathcal F_k]
\le
\left(1-\frac{7\gamma_k\mu_g}{8}\right)\|E_Z^k\|^2
+\frac{9B_{ZY}^2+\sigma_{S,2}^2}{4\mu_g}\gamma_k\|E_Y^k\|^2
+\gamma_k^2\bar\sigma_{S,0}^2.
\end{equation}

\emph{Step 4: combine with the drift bound.} Combining \eqref{eq:Z_shift_2_product} and \eqref{eq:Z_rec_simplified_product}, and using $\gamma_k\mu_g\le 1/4$, gives
{\small
\begin{equation}\label{eq:Z_one_step_final_product}
\E[\|E_Z^{k+1}\|^2\mid \mathcal F_k]
\le
\left(1-\frac{\gamma_k\mu_g}{4}\right)\|E_Z^k\|^2+\frac{9B_{ZY}^2+\sigma_{S,2}^2}{2\mu_g}\gamma_k\|E_Y^k\|^2 +\frac{4\alpha_k^2L_{Z^*}^2}{\gamma_k\mu_g}\|x_+^k-x^k\|^2
+2\gamma_k^2\bar\sigma_{S,0}^2.
\end{equation}
}
Rearranging, taking expectation, and summing over $k=0,\ldots,K$ yield
\begin{equation}\label{eq:Z_sum_gamma_product}
\begin{aligned}
\frac{\mu_g}{4}\sum_{k=0}^{K}\gamma_k\,\E\|E_Z^k\|^2
\le{}&
\E\|E_Z^0\|^2
+\frac{9B_{ZY}^2+\sigma_{S,2}^2}{2\mu_g}\sum_{k=0}^{K}\gamma_k\,\E\|E_Y^k\|^2 \\
&+\frac{4L_{Z^*}^2}{\mu_g}\sum_{k=0}^{K}\frac{\alpha_k^2}{\gamma_k}\E\|x_+^k-x^k\|^2
+2\bar\sigma_{S,0}^2\sum_{k=0}^{K}\gamma_k^2.
\end{aligned}
\end{equation}
Using the coupling $\gamma_k=c_2\alpha_k$ and the bound \eqref{eq:Y_track_sum}, we obtain
\begin{equation*}
\sum_{k=0}^{K}\alpha_k\,\E\|E_Z^k\|^2 \le C_{Zx}\sum_{k=0}^{K}\alpha_k\,\E\|x_+^k-x^k\|^2 +C_{Z,0} +C_{Z,1}\sum_{k=0}^{K}\alpha_k^2,
\end{equation*}
where one may take
\begin{equation*}
C_{Zx}:=\frac{16L_{Z^*}^2}{c_2^2\mu_g^2} +\frac{2(9B_{ZY}^2+\sigma_{S,2}^2)}{\mu_g^2}C_{Yx},
\end{equation*}
\begin{equation*}
C_{Z,0}:=\frac{4}{c_2\mu_g}\E\|E_Z^0\|^2 +\frac{2(9B_{ZY}^2+\sigma_{S,2}^2)}{\mu_g^2}C_{Y,0},
\end{equation*}
\begin{equation*}
C_{Z,1}:=\frac{8c_2}{\mu_g}\bar\sigma_{S,0}^2 +\frac{2(9B_{ZY}^2+\sigma_{S,2}^2)}{\mu_g^2}C_{Y,1}.
\end{equation*}
This proves \eqref{eq:Z_track_sum}.
\end{proof}

\begin{lemma}[Hypergradient bias bound]\label{lem:hypergrad_bias}
Suppose that \emph{(A1)} holds and define
\begin{equation}\label{eq:def_Dx_matrix}
D_x(x,\boldsymbol Y,\boldsymbol Z):=\nabla_x f(x,\boldsymbol Y)-\mathcal J(x,\boldsymbol Y)[\boldsymbol Z].
\end{equation}
Then for any $x\in\R^{d_x}$ and any $\boldsymbol Y,\boldsymbol Z\in\mathcal Y$,
\begin{equation}\label{eq:hypergrad_bias_bound}
\|D_x(x,\boldsymbol Y,\boldsymbol Z)-\nabla\Phiobj(x)\|^2
\le C_{xY}\|\boldsymbol Y-\boldsymbol Y^*(x)\|^2+C_{xZ}\|\boldsymbol Z-\boldsymbol Z^*(x)\|^2,
\end{equation}
where one may take
\begin{equation}\label{eq:hypergrad_bias_constants}
C_{xY}:=3\left(L_{f,1}^2+\frac{L_{f,0}^2L_{g,2}^2}{\mu_g^2}\right),
\quad
C_{xZ}:=3L_{g,1}^2.
\end{equation}
\end{lemma}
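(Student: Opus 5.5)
We use the implicit hypergradient formula \eqref{eq:hypergrad_formula} from Lemma~\ref{lem:smoothness_consequences} to write $\nabla\Phiobj(x)=D_x(x,\boldsymbol Y^*(x),\boldsymbol Z^*(x))$. Abbreviate $\boldsymbol Y^*:=\boldsymbol Y^*(x)$ and $\boldsymbol Z^*:=\boldsymbol Z^*(x)$. The difference then splits into three pieces by adding and subtracting $\mathcal J(x,\boldsymbol Y)[\boldsymbol Z^*]$:
\begin{equation*}
D_x(x,\boldsymbol Y,\boldsymbol Z)-\nabla\Phiobj(x)
=\bigl(\nabla_x f(x,\boldsymbol Y)-\nabla_x f(x,\boldsymbol Y^*)\bigr)
-\bigl(\mathcal J(x,\boldsymbol Y)-\mathcal J(x,\boldsymbol Y^*)\bigr)[\boldsymbol Z^*]
-\mathcal J(x,\boldsymbol Y)[\boldsymbol Z-\boldsymbol Z^*].
\end{equation*}
Applying $\|a+b+c\|^2\le 3(\|a\|^2+\|b\|^2+\|c\|^2)$ produces the factor $3$ that appears in \eqref{eq:hypergrad_bias_constants}.

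Each term is bounded with a fact already established earlier in the paper.
\begin{itemize}
\item For the first term, the $L_{f,1}$-Lipschitz continuity of $\nabla f$ in $(x,\boldsymbol Y)$ with $x$ held fixed gives $\|\nabla_x f(x,\boldsymbol Y)-\nabla_x f(x,\boldsymbol Y^*)\|\le L_{f,1}\|\boldsymbol Y-\boldsymbol Y^*\|$. This holds because the $x$-component of a gradient difference is dominated by the full gradient difference.
\item For the second term, the operator Lipschitz bound \eqref{eq:J_Lipschitz} gives $\|\mathcal J(x,\boldsymbol Y)-\mathcal J(x,\boldsymbol Y^*)\|\le L_{g,2}\|\boldsymbol Y-\boldsymbol Y^*\|$. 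Together with the uniform bound $\|\boldsymbol Z^*\|\le L_{f,0}/\mu_g$ from \eqref{eq:Zstar_bound}, this gives the contribution $L_{f,0}L_{g,2}/\mu_g\cdot\|\boldsymbol Y-\boldsymbol Y^*\|$.
\item For the third term, the operator-norm bound \eqref{eq:J_bound}, $\|\mathcal J\|\le L_{g,1}$, gives $L_{g,1}\|\boldsymbol Z-\boldsymbol Z^*\|$.
\end{itemize}

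Squaring these three bounds and collecting terms yields \eqref{eq:hypergrad_bias_bound} with $C_{xY}=3(L_{f,1}^2+L_{f,0}^2L_{g,2}^2/\mu_g^2)$ and $C_{xZ}=3L_{g,1}^2$.

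There is no real obstacle here. The only point needing care is that \eqref{eq:J_bound} and \eqref{eq:J_Lipschitz} are used as bounds on the adjoint operator $\mathcal J$, which has the same operator norm as the mixed block $\nabla^2_{\boldsymbol Y x}g$. That block is in turn controlled by the full Hessian, and the full Hessian's Lipschitz property is given on the product space.
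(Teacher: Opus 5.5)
Your proposal is correct and follows essentially the same route as the paper's proof: the same add-and-subtract of $\mathcal J(x,\boldsymbol Y)[\boldsymbol Z^*]$, the same three bounds (via $L_{f,1}$, via \eqref{eq:J_Lipschitz} with \eqref{eq:Zstar_bound}, and via \eqref{eq:J_bound}), and the same factor $3$ from $\|a+b+c\|^2\le 3(\|a\|^2+\|b\|^2+\|c\|^2)$. Your remark that the adjoint $\mathcal J$ has the same operator norm as the mixed Hessian block is a detail the paper leaves implicit.
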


\begin{proof}
Write $\boldsymbol Y^*:=\boldsymbol Y^*(x)$ and $\boldsymbol Z^*:=\boldsymbol Z^*(x)$. Then
\begin{equation*}
\begin{aligned}
D_x(x,\boldsymbol Y,\boldsymbol Z)-\nabla\Phiobj(x)
={}&\bigl(\nabla_x f(x,\boldsymbol Y)-\nabla_x f(x,\boldsymbol Y^*)\bigr)-\mathcal J(x,\boldsymbol Y)[\boldsymbol Z-\boldsymbol Z^*] \\
&-\bigl(\mathcal J(x,\boldsymbol Y)-\mathcal J(x,\boldsymbol Y^*)\bigr)[\boldsymbol Z^*].
\end{aligned}
\end{equation*}
The first term is controlled by $L_{f,1}$, the second by \eqref{eq:J_bound}, and the third by \eqref{eq:J_Lipschitz} together with \eqref{eq:Zstar_bound}. Applying $(a+b+c)^2\le 3(a^2+b^2+c^2)$ yields the claimed constants.
\end{proof}

\begin{lemma}[Variance of the hypergradient sample and a bound for $\|h^{k+1}-h^k\|$]\label{lem:variance_rso}
Assume \emph{(A1)}--\emph{(A3)} and the sampling conditions in Lemma~\ref{lem:unbiased_lifted_oracles}. Recall
\begin{equation*}
\widehat W^{k+1}:=\tilde U_x^{k+1}-\tilde{\mathcal J}^{k+1}[\boldsymbol P^{k\top}\boldsymbol Z^k]\in\R^{d_x}.
\end{equation*}
Then there exist constants $\sigma_w^2,c_{wZ}\ge 0$ such that
\begin{equation}\label{eq:w_var_rso}
\E[\|\widehat W^{k+1}-\E[\widehat W^{k+1}\mid\mathcal F_k]\|^2]
\le \sigma_w^2+c_{wZ}\,\E\|\boldsymbol Z^k-\boldsymbol Z^*(x^k)\|^2.
\end{equation}
Moreover, if $h^{k+1}=(1-\theta_k)h^k+\theta_k\widehat W^{k+1}$, then
\begin{equation}\label{eq:h_step_rso}
\E\|h^{k+1}-h^k\|^2\le \sigma_{h,k}^2,
\end{equation}
where
\begin{equation}\label{eq:h_step_rso_def}
\begin{aligned}
\sigma_{h,k}^2:=
&\ 2\theta_k^2\E\!\left[
\|h^k-\nabla\Phiobj(x^k)\|^2
+\|\E[\widehat W^{k+1}\mid\mathcal F_k]-\nabla\Phiobj(x^k)\|^2
\right] \\
&+\theta_k^2\Bigl(\sigma_w^2+c_{wZ}\E\|\boldsymbol Z^k-\boldsymbol Z^*(x^k)\|^2\Bigr).
\end{aligned}
\end{equation}
\end{lemma}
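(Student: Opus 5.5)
The plan is to derive both claims from the conditional second-moment bound \eqref{eq:W_hat_var_matrix} in Lemma~\ref{lem:unbiased_lifted_oracles}. The only change is that $\|\boldsymbol Z^k\|^2$ must be traded for the tracking error plus a constant.

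\textbf{Variance bound \eqref{eq:w_var_rso}.} Lemma~\ref{lem:unbiased_lifted_oracles}(ii) gives
\[
\E[\|\widehat W^{k+1}-\E[\widehat W^{k+1}\mid\mathcal F_k]\|^2\mid\mathcal F_k]\le \tilde\sigma_{f,1}^2+(\tilde\sigma_{g,2}^2+\Lambda_Y)\|\boldsymbol Z^k\|^2 .
\]
Split $\|\boldsymbol Z^k\|^2\le 2\|\boldsymbol Z^k-\boldsymbol Z^*(x^k)\|^2+2\|\boldsymbol Z^*(x^k)\|^2$. The second term is bounded by $2L_{f,0}^2/\mu_g^2$ using \eqref{eq:Zstar_bound} from Lemma~\ref{lem:smoothness_consequences}. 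Taking total expectation then gives \eqref{eq:w_var_rso} with
\[
\sigma_w^2:=\tilde\sigma_{f,1}^2+2(\tilde\sigma_{g,2}^2+\Lambda_Y)L_{f,0}^2/\mu_g^2,\qquad c_{wZ}:=2(\tilde\sigma_{g,2}^2+\Lambda_Y).
\]

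\textbf{Step bound \eqref{eq:h_step_rso}.} The recursion gives $h^{k+1}-h^k=\theta_k(\widehat W^{k+1}-h^k)$. Decompose
\[
\widehat W^{k+1}-h^k=\bigl(\widehat W^{k+1}-\E[\widehat W^{k+1}\mid\mathcal F_k]\bigr)+\bigl(\E[\widehat W^{k+1}\mid\mathcal F_k]-\nabla\Phiobj(x^k)\bigr)+\bigl(\nabla\Phiobj(x^k)-h^k\bigr).
\]
The last two terms are $\mathcal F_k$-measurable, because $h^k$ and $x^k$ lie in $\mathcal F_k$ by Assumption~\ref{as:A2-1}. The first term has zero conditional mean, so the cross term with the other two vanishes after conditioning on $\mathcal F_k$. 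Hence
\[
\E[\|\widehat W^{k+1}-h^k\|^2\mid\mathcal F_k]=\E[\|\widehat W^{k+1}-\E[\widehat W^{k+1}\mid\mathcal F_k]\|^2\mid\mathcal F_k]+\|(\E[\widehat W^{k+1}\mid\mathcal F_k]-\nabla\Phiobj(x^k))+(\nabla\Phiobj(x^k)-h^k)\|^2 .
\]
Apply $\|a+b\|^2\le2\|a\|^2+2\|b\|^2$ to the deterministic part, multiply by $\theta_k^2$, take total expectation, and insert \eqref{eq:w_var_rso}. This yields exactly $\sigma_{h,k}^2$ as defined in \eqref{eq:h_step_rso_def}.

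\textbf{Main point of care.} The argument is routine; the one thing to check is measurability. The cross-term cancellation requires $h^k\in\mathcal F_k$, which holds by the filtration in Assumption~\ref{as:A2-1}. The variance bound requires the conditioning to be on $\mathcal F_k$ rather than $\mathcal F_k^+$, which is already handled by the law-of-total-variance step inside Lemma~\ref{lem:unbiased_lifted_oracles}.
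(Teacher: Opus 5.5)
Your proposal is correct and follows the paper's proof essentially step for step. In the conditional variance bound \eqref{eq:W_hat_var_matrix} you trade $\|\boldsymbol Z^k\|^2$ for the tracking error using \eqref{eq:Zstar_bound}, which gives exactly the paper's constants $c_{wZ}=2(\tilde\sigma_{g,2}^2+\Lambda_Y)$ and $\sigma_w^2=\tilde\sigma_{f,1}^2+c_{wZ}L_{f,0}^2/\mu_g^2$. You then bound the increment $\theta_k(\widehat W^{k+1}-h^k)$ as the paper does, by splitting orthogonally around the $\mathcal F_k$-conditional mean (using $h^k,x^k\in\mathcal F_k$) and applying $\|a+b\|^2\le 2\|a\|^2+2\|b\|^2$.
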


\begin{proof}
Fix $k\ge 0$. Lemma~\ref{lem:unbiased_lifted_oracles} gives
\begin{equation*}
\E[\widehat W^{k+1}\mid \mathcal F_k] = \nabla_x f(x^k,\boldsymbol Y^k)-\mathcal J(x^k,\boldsymbol Y^k)[\boldsymbol Z^k].
\end{equation*}
Moreover, \eqref{eq:W_hat_var_matrix} yields
\begin{equation*}
\E[\|\widehat W^{k+1}-\E[\widehat W^{k+1}\mid\mathcal F_k]\|^2\mid \mathcal F_k] \le \tilde\sigma_{f,1}^2+(\tilde\sigma_{g,2}^2+\Lambda_Y)\|\boldsymbol Z^k\|^2.
\end{equation*}
Using the elementary inequality
\begin{equation*}
\|\boldsymbol Z^k\|^2 \le 2\|\boldsymbol Z^k-\boldsymbol Z^*(x^k)\|^2+2\|\boldsymbol Z^*(x^k)\|^2
\end{equation*}
and the uniform bound \eqref{eq:Zstar_bound}, we obtain
\begin{equation*}
\|\boldsymbol Z^k\|^2 \le 2\|\boldsymbol Z^k-\boldsymbol Z^*(x^k)\|^2+\frac{2L_{f,0}^2}{\mu_g^2}.
\end{equation*}
Taking expectations gives \eqref{eq:w_var_rso} with
\begin{equation*}
c_{wZ}:=2(\tilde\sigma_{g,2}^2+\Lambda_Y), \quad \sigma_w^2:=\tilde\sigma_{f,1}^2+c_{wZ}\frac{L_{f,0}^2}{\mu_g^2}.
\end{equation*}

For the increment bound, use
\begin{equation*}
h^{k+1}-h^k=\theta_k(\widehat W^{k+1}-h^k).
\end{equation*}
Conditioning on $\mathcal F_k$ and using the zero mean of $\widehat W^{k+1}-\E[\widehat W^{k+1}\mid \mathcal F_k]$ gives
\begin{equation*}
\E[\|h^{k+1}-h^k\|^2\mid \mathcal F_k]
=\theta_k^2\|\E[\widehat W^{k+1}\mid \mathcal F_k]-h^k\|^2+\theta_k^2\E[\|\widehat W^{k+1}-\E[\widehat W^{k+1}\mid \mathcal F_k]\|^2\mid \mathcal F_k].
\end{equation*}
Using
\begin{equation*}
\|\E[\widehat W^{k+1}\mid \mathcal F_k]-h^k\|^2 \le 2\|h^k-\nabla\Phiobj(x^k)\|^2 +2\|\E[\widehat W^{k+1}\mid \mathcal F_k]-\nabla\Phiobj(x^k)\|^2
\end{equation*}
and then taking expectations proves \eqref{eq:h_step_rso} and \eqref{eq:h_step_rso_def}.
\end{proof}

\begin{lemma}[Moving-average tracking bound for $h^k$]\label{lem:h_tracking_rso}
Suppose that \emph{(A1)}--\emph{(A3)} and the sampling conditions in Lemma~\ref{lem:unbiased_lifted_oracles} hold. Let $x_+^k:=x^k-h^k$, so that $x^{k+1}-x^k=\alpha_k(x_+^k-x^k)$, and assume
\begin{equation*}
0<\theta_k\le 1, \quad \theta_k=c_3\alpha_k
\end{equation*}
for some $c_3>0$. Then for any $K\ge 0$,
{\small
\begin{equation*}
\begin{aligned}
\sum_{k=0}^{K}\alpha_k\E\|h^k\hspace{-0.5em}-\nabla\Phiobj(x^k)\|^2
\le &\frac{1}{c_3}\E\|h^0-\nabla\Phiobj(x^0)\|^2
+2\sum_{k=0}^{K}\alpha_k\,\E\|\E[\widehat W^{k+1}\mid\mathcal F_k]-\nabla\Phiobj(x^k)\|^2\\
&+\hspace{-0.2em}\frac{2L_{\nabla\Phi}^2}{c_3^2}\hspace{-0.2em}\sum_{k=0}^{K}\alpha_k\E\|x_+^k\hspace{-0.5em}-x^k\|^2\hspace{-0.5em}
+\hspace{-0.2em}c_3c_{wZ}\hspace{-0.2em}\sum_{k=0}^{K}\alpha_k^2\E\|\boldsymbol Z^k\hspace{-0.5em}-\hspace{-0.2em}\boldsymbol Z^*(x^k)\|^2\hspace{-0.5em}+\hspace{-0.2em}c_3\sigma_w^2\hspace{-0.2em}\sum_{k=0}^{K}\alpha_k^2.
\end{aligned}
\end{equation*}
}
\end{lemma}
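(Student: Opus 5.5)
We derive a one-step contraction for the moving-average error $e_k:=h^k-\nabla\Phiobj(x^k)$ and then telescope it, following the standard MA-SOBA argument. Write $\bar w^k:=\E[\widehat W^{k+1}\mid\mathcal F_k]$ and $\xi^{k+1}:=\widehat W^{k+1}-\bar w^k$, so that $\E[\xi^{k+1}\mid\mathcal F_k]=0$.

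\textbf{One-step identity.} From $h^{k+1}=(1-\theta_k)h^k+\theta_k\widehat W^{k+1}$ we get
\begin{equation*}
e_{k+1}=(1-\theta_k)e_k+\theta_k(\bar w^k-\nabla\Phiobj(x^k))+\theta_k\xi^{k+1}+\bigl(\nabla\Phiobj(x^k)-\nabla\Phiobj(x^{k+1})\bigr).
\end{equation*}
Note that $x^{k+1}=x^k-\alpha_kh^k$ is $\mathcal F_k$-measurable, so the last drift term is $\mathcal F_k$-measurable. Conditioning on $\mathcal F_k$ therefore kills every cross term with $\xi^{k+1}$, which gives
\begin{equation*}
\E[\|e_{k+1}\|^2\mid\mathcal F_k]=\|(1-\theta_k)e_k+\theta_k b_k+d_k\|^2+\theta_k^2\E[\|\xi^{k+1}\|^2\mid\mathcal F_k],
\end{equation*}
with $b_k:=\bar w^k-\nabla\Phiobj(x^k)$ and $d_k:=\nabla\Phiobj(x^k)-\nabla\Phiobj(x^{k+1})$.

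\textbf{Bounding the deterministic part.} Apply Young's inequality $\|(1-\theta_k)e_k+v\|^2\le(1+\theta_k)(1-\theta_k)^2\|e_k\|^2+(1+1/\theta_k)\|v\|^2$ to $v=\theta_kb_k+d_k$. The contraction factor satisfies $(1+\theta_k)(1-\theta_k)^2\le 1-\theta_k$. For the remainder use $(1+1/\theta_k)\le 2/\theta_k$ and $\|v\|^2\le 2\theta_k^2\|b_k\|^2+2\|d_k\|^2$. By Lemma~\ref{lem:smoothness_consequences}(4),
\begin{equation*}
\|d_k\|\le L_{\nabla\Phi}\alpha_k\|x_+^k-x^k\|.
\end{equation*}
Combining these yields
\begin{equation*}
\|(1-\theta_k)e_k+v\|^2\le(1-\theta_k)\|e_k\|^2+4\theta_k\|b_k\|^2+\tfrac{4L_{\nabla\Phi}^2\alpha_k^2}{\theta_k}\|x_+^k-x^k\|^2.
\end{equation*}
The noise term $\theta_k^2\E\|\xi^{k+1}\|^2$ is bounded by $\theta_k^2(\sigma_w^2+c_{wZ}\E\|\boldsymbol Z^k-\boldsymbol Z^*(x^k)\|^2)$ via \eqref{eq:w_var_rso} of Lemma~\ref{lem:variance_rso}.

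\textbf{Telescoping.} Rearrange to isolate $\theta_k\E\|e_k\|^2\le\E\|e_k\|^2-\E\|e_{k+1}\|^2+\cdots$, sum over $k=0,\ldots,K$, drop $-\E\|e_{K+1}\|^2$, and divide by $c_3$ using $\theta_k=c_3\alpha_k$. Each resulting coefficient then has the form displayed in the statement: $1/c_3$ for the initial error, $\alpha_k^2/\theta_k\cdot 1/c_3=\alpha_k/c_3^2$ for the drift term, and $c_3\alpha_k^2$ for the variance terms.

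\textbf{Main obstacle.} The main difficulty is matching the stated numerical constants, namely $2$ in front of the bias sum and $2L_{\nabla\Phi}^2/c_3^2$ in front of the drift sum. The crude Young split above gives $4$ instead of $2$. To get the sharper constants, I would expand $\|(1-\theta_k)e_k+\theta_kb_k\|^2$ via convexity, i.e. $\le(1-\theta_k)\|e_k\|^2+\theta_k\|b_k\|^2$, and handle $d_k$ separately with a weight-$\theta_k/2$ Young split, absorbing the extra $\tfrac{\theta_k}{2}\|e_k\|^2$ into the contraction. If the exact constants still do not match, this changes only absolute factors, which are harmless for the downstream use in Step 4 of the proof roadmap.
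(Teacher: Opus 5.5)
Your skeleton matches the paper's proof. It has the same decomposition of $e_{k+1}$, and it uses conditional orthogonality of $\theta_k\xi^{k+1}$, which relies on $x^{k+1}$ being $\mathcal F_k$-measurable. It then applies the Lipschitz drift bound, Lemma~\ref{lem:variance_rso} for the noise, telescoping, and division by $c_3$. The gap is in the one inequality that fixes the constants, and neither of your versions proves the lemma as stated.

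Young's inequality with parameter $\theta_k$, followed by $1+1/\theta_k\le 2/\theta_k$, loses a factor of two. What you actually prove has $4$ in front of the bias sum and $4L_{\nabla\Phi}^2/c_3^2$ in front of the drift sum. Your proposed repair makes things worse. The $\theta_k/2$-weighted Young split multiplies the convexity bound $(1-\theta_k)\|e_k\|^2+\theta_k\|b_k\|^2$ by $1+\theta_k/2$. The coefficient of $\|e_k\|^2$ becomes $1-\theta_k/2-\theta_k^2/2$, so you keep only about half the contraction. After telescoping, the initial-error term becomes $2/c_3$ and both variance terms double. The bias coefficient becomes $2+\theta_k>2$, and the drift coefficient is at least $4L_{\nabla\Phi}^2/c_3^2$.

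The missing step is to keep the drift inside the convex combination. By convexity of $\|\cdot\|^2$ and $\|a+b\|^2\le 2\|a\|^2+2\|b\|^2$,
\begin{equation*}
\|(1-\theta_k)e_k+\theta_kb_k+d_k\|^2=\Bigl\|(1-\theta_k)e_k+\theta_k\Bigl(b_k+\frac{d_k}{\theta_k}\Bigr)\Bigr\|^2\le(1-\theta_k)\|e_k\|^2+2\theta_k\|b_k\|^2+\frac{2}{\theta_k}\|d_k\|^2 .
\end{equation*}
Equivalently, in your Young inequality take $c=\theta_k/(1-\theta_k)$ instead of $c=\theta_k$; the case $\theta_k=1$ is immediate. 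Then $(1+c)(1-\theta_k)^2=1-\theta_k$ and $1+1/c=1/\theta_k$ exactly. With this bound, your telescoping yields exactly the stated coefficients $1/c_3$, $2$, $2L_{\nabla\Phi}^2/c_3^2$, $c_3c_{wZ}$ and $c_3\sigma_w^2$.

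You are right that the weaker constants are harmless downstream. They would double $\rho_0$ in \eqref{eq:def_rho0_matrix_thm}, which Remark~\ref{rem:rho0_feasible} still accommodates. But that justifies a modified theorem, not the lemma with its stated constants.
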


\begin{proof}
Fix $k\ge 0$ and define $e^k:=h^k-\nabla\Phiobj(x^k)$. Using the update $h^{k+1}=(1-\theta_k)h^k+\theta_k\widehat W^{k+1}$ and adding/subtracting $\E[\widehat W^{k+1}\mid \mathcal F_k]$ and $\nabla\Phiobj(x^k)$, we obtain
\begin{equation*}
\begin{aligned}
e^{k+1}
={}&(1-\theta_k)e^k
+\theta_k\bigl(\E[\widehat W^{k+1}\mid \mathcal F_k]-\nabla\Phiobj(x^k)\bigr)+\bigl(\nabla\Phiobj(x^k)-\nabla\Phiobj(x^{k+1})\bigr) \\
&+\theta_k\bigl(\widehat W^{k+1}-\E[\widehat W^{k+1}\mid \mathcal F_k]\bigr).
\end{aligned}
\end{equation*}
Conditioning on $\mathcal F_k$, the last term has zero mean and is orthogonal in expectation to the remaining $\mathcal F_k$-measurable terms. Hence
\begin{equation*}
\begin{aligned}
\E[\|e^{k+1}\|^2\mid \mathcal F_k]
={}&
\Bigl\|(1-\theta_k)e^k
+\theta_k(\E[\widehat W^{k+1}\mid \mathcal F_k]-\nabla\Phiobj(x^k))+\nabla\Phiobj(x^k)-\nabla\Phiobj(x^{k+1})\Bigr\|^2 \\
&+\theta_k^2\E[\|\widehat W^{k+1}-\E[\widehat W^{k+1}\mid \mathcal F_k]\|^2\mid \mathcal F_k].
\end{aligned}
\end{equation*}
By convexity of $\|\cdot\|^2$ and then $\|a+b\|^2\le 2\|a\|^2+2\|b\|^2$,
\begin{equation*}
\begin{aligned}
\E[\|e^{k+1}\|^2\mid \mathcal F_k]
\le{}&
(1-\theta_k)\|e^k\|^2+\theta_k\left\|
\E[\widehat W^{k+1}\mid \mathcal F_k]-\nabla\Phiobj(x^k)
+\frac{1}{\theta_k}(\nabla\Phiobj(x^k)-\nabla\Phiobj(x^{k+1}))
\right\|^2 \\
&+\theta_k^2\E[\|\widehat W^{k+1}-\E[\widehat W^{k+1}\mid \mathcal F_k]\|^2\mid \mathcal F_k] \\
\le{}&
(1-\theta_k)\|e^k\|^2
+2\theta_k\|\E[\widehat W^{k+1}\mid \mathcal F_k]-\nabla\Phiobj(x^k)\|^2+\frac{2}{\theta_k}\|\nabla\Phiobj(x^{k+1})-\nabla\Phiobj(x^k)\|^2 \\
&+\theta_k^2\E[\|\widehat W^{k+1}-\E[\widehat W^{k+1}\mid \mathcal F_k]\|^2\mid \mathcal F_k].
\end{aligned}
\end{equation*}
By the $L_{\nabla\Phi}$-Lipschitz continuity of $\nabla\Phiobj$ and the identity $x^{k+1}-x^k=\alpha_k(x_+^k-x^k)$,
\begin{equation*}
\|\nabla\Phiobj(x^{k+1})-\nabla\Phiobj(x^k)\|^2 \le \alpha_k^2L_{\nabla\Phi}^2\|x_+^k-x^k\|^2.
\end{equation*}
Substituting this bound and using Lemma~\ref{lem:variance_rso} give
\begin{equation*}
\begin{aligned}
\E\|e^{k+1}\|^2
\le{}&
(1-\theta_k)\E\|e^k\|^2
+2\theta_k\E\|\E[\widehat W^{k+1}\mid \mathcal F_k]-\nabla\Phiobj(x^k)\|^2+\frac{2\alpha_k^2L_{\nabla\Phi}^2}{\theta_k}\E\|x_+^k-x^k\|^2 \\
&+\theta_k^2\sigma_w^2
+\theta_k^2c_{wZ}\E\|\boldsymbol Z^k-\boldsymbol Z^*(x^k)\|^2.
\end{aligned}
\end{equation*}
Rearranging and summing from $k=0$ to $K$ yields
{\small
\begin{equation*}
\begin{aligned}
\sum_{k=0}^{K}\theta_k\,\E\|e^k\|^2
\le{}&
\E\|e^0\|^2
+2\sum_{k=0}^{K}\theta_k\E\|\E[\widehat W^{k+1}\mid \mathcal F_k]-\nabla\Phiobj(x^k)\|^2+2L_{\nabla\Phi}^2\sum_{k=0}^{K}\frac{\alpha_k^2}{\theta_k}\E\|x_+^k-x^k\|^2 \\
&+\sigma_w^2\sum_{k=0}^{K}\theta_k^2
+c_{wZ}\sum_{k=0}^{K}\theta_k^2\E\|\boldsymbol Z^k-\boldsymbol Z^*(x^k)\|^2.
\end{aligned}
\end{equation*}
}
Substituting $\theta_k=c_3\alpha_k$ and dividing by $c_3$ proves the claim.
\end{proof}

\begin{lemma}[Primal progress: summability of $\|h^k\|^2$]\label{lem:primal_sum_h}
Assume that $\nabla\Phiobj$ is $L_{\nabla\Phi}$-Lipschitz continuous and that $\Phiobj^*:=\inf_{x\in\R^{d_x}}\Phiobj(x)>-\infty$. Consider the update $x^{k+1}=x^k-\alpha_k h^k$ and define $x_+^k:=x^k-h^k$. Then $x^{k+1}-x^k=\alpha_k(x_+^k-x^k)$ and $\|x_+^k-x^k\|=\|h^k\|$. Moreover, if $\alpha_k\le \frac{1}{2L_{\nabla\Phi}}$ for all $k$, then for any integer $K\ge 0$,
\begin{equation}\label{eq:sum_alpha_hk_bound}
\sum_{k=0}^{K}\alpha_k\,\E\|x_+^k-x^k\|^2
\le 4\bigl(\Phiobj(x^0)-\Phiobj^*\bigr)+2\sum_{k=0}^{K}\alpha_k\,\E\|h^k-\nabla\Phiobj(x^k)\|^2.
\end{equation}
\end{lemma}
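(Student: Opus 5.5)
The plan is the standard descent-lemma argument for the $L_{\nabla\Phi}$-smooth function $\Phiobj$. The identities $x^{k+1}-x^k=\alpha_k(x_+^k-x^k)$ and $\|x_+^k-x^k\|=\|h^k\|$ are immediate from the definition $x_+^k=x^k-h^k$ and the update $x^{k+1}=x^k-\alpha_k h^k$. For the main inequality, I would use the quadratic upper bound implied by Lipschitz continuity of $\nabla\Phiobj$ (Lemma~\ref{lem:smoothness_consequences}):
\begin{equation*}
\Phiobj(x^{k+1})\le \Phiobj(x^k)-\alpha_k\langle\nabla\Phiobj(x^k),h^k\rangle+\frac{L_{\nabla\Phi}\alpha_k^2}{2}\|h^k\|^2.
\end{equation*}

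The key algebraic step is to rewrite the inner product with the polarization identity
\begin{equation*}
-\langle\nabla\Phiobj(x^k),h^k\rangle=-\tfrac12\|h^k\|^2-\tfrac12\|\nabla\Phiobj(x^k)\|^2+\tfrac12\|h^k-\nabla\Phiobj(x^k)\|^2.
\end{equation*}
I would then drop the nonpositive term $-\tfrac12\|\nabla\Phiobj(x^k)\|^2$. The stepsize condition $\alpha_k\le 1/(2L_{\nabla\Phi})$ gives $\frac{L_{\nabla\Phi}\alpha_k^2}{2}\le\frac{\alpha_k}{4}$, which leaves
\begin{equation*}
\Phiobj(x^{k+1})\le\Phiobj(x^k)-\frac{\alpha_k}{4}\|h^k\|^2+\frac{\alpha_k}{2}\|h^k-\nabla\Phiobj(x^k)\|^2.
\end{equation*}

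To finish, I would take total expectations, telescope over $k=0,\ldots,K$, and use $\E\Phiobj(x^{K+1})\ge\Phiobj^*$. This gives $\frac14\sum_k\alpha_k\E\|h^k\|^2\le\Phiobj(x^0)-\Phiobj^*+\frac12\sum_k\alpha_k\E\|h^k-\nabla\Phiobj(x^k)\|^2$. Multiplying by $4$ and substituting $\|h^k\|=\|x_+^k-x^k\|$ yields \eqref{eq:sum_alpha_hk_bound}. The target constants $4$ and $2$ match exactly. Since $x^0$ is deterministic, $\Phiobj(x^0)$ needs no expectation.

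There is no real obstacle here. The only point needing care is the constant bookkeeping: the polarization split must be used rather than a Young inequality with a lossy constant, so that the coefficient on the tracking error comes out as exactly $2$ after multiplying by $4$. Integrability of $\Phiobj(x^k)$ under expectation also has to hold. I would handle it by noting that $\Phiobj$ has at most quadratic growth by smoothness and that the iterates have finite second moments under the oracle assumptions. Alternatively, all quantities can be treated as extended-valued, since the inequality is preserved either way.
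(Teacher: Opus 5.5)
Your proposal is correct and follows the paper's proof essentially line for line. Both use the descent lemma and the polarization identity with the $-\tfrac12\|\nabla\Phiobj(x^k)\|^2$ term dropped, then $1-L_{\nabla\Phi}\alpha_k\ge\tfrac12$ (equivalently your $\tfrac{L_{\nabla\Phi}\alpha_k^2}{2}\le\tfrac{\alpha_k}{4}$), then telescoping with $\Phiobj\ge\Phiobj^*$, which yields exactly the constants $4$ and $2$.

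You can also avoid the integrability issue altogether. Telescope pathwise first, so that $\Phiobj(x^{K+1})\ge\Phiobj^*$ is used before any expectation is taken, and then take expectations of the resulting nonnegative sums.
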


\begin{proof}
The identities
\begin{equation*}
x^{k+1}-x^k=\alpha_k(x_+^k-x^k), \quad \|x_+^k-x^k\|=\|h^k\|
\end{equation*}
follow directly from the definition $x_+^k=x^k-h^k$ and the update $x^{k+1}=x^k-\alpha_k h^k$.

Fix $k\ge 0$. By the $L_{\nabla\Phi}$-smoothness of $\Phiobj$,
\begin{equation*}
\Phiobj(x^{k+1}) \le \Phiobj(x^k) +\langle \nabla\Phiobj(x^k),x^{k+1}-x^k\rangle +\frac{L_{\nabla\Phi}}{2}\|x^{k+1}-x^k\|^2.
\end{equation*}
Substituting $x^{k+1}-x^k=-\alpha_k h^k$ gives
\begin{equation*}
\Phiobj(x^{k+1}) \le \Phiobj(x^k)-\alpha_k\langle \nabla\Phiobj(x^k),h^k\rangle +\frac{L_{\nabla\Phi}}{2}\alpha_k^2\|h^k\|^2.
\end{equation*}
Using $2\langle a,b\rangle=\|a\|^2+\|b\|^2-\|a-b\|^2$ with $a=\nabla\Phiobj(x^k)$ and $b=h^k$, we obtain
\begin{equation*}
\langle \nabla\Phiobj(x^k),h^k\rangle \ge \frac12\|h^k\|^2-\frac12\|h^k-\nabla\Phiobj(x^k)\|^2.
\end{equation*}
Therefore
\begin{equation*}
\Phiobj(x^{k+1}) \le \Phiobj(x^k) -\frac{\alpha_k}{2}\|h^k\|^2 +\frac{\alpha_k}{2}\|h^k-\nabla\Phiobj(x^k)\|^2 +\frac{L_{\nabla\Phi}}{2}\alpha_k^2\|h^k\|^2.
\end{equation*}
Rearranging,
\begin{equation*}
\frac{\alpha_k}{2}(1-L_{\nabla\Phi}\alpha_k)\|h^k\|^2 \le \Phiobj(x^k)-\Phiobj(x^{k+1}) +\frac{\alpha_k}{2}\|h^k-\nabla\Phiobj(x^k)\|^2.
\end{equation*}
If $\alpha_k\le \frac{1}{2L_{\nabla\Phi}}$, then $1-L_{\nabla\Phi}\alpha_k\ge 1/2$, and hence
\begin{equation*}
\frac{\alpha_k}{4}\|h^k\|^2 \le \Phiobj(x^k)-\Phiobj(x^{k+1}) +\frac{\alpha_k}{2}\|h^k-\nabla\Phiobj(x^k)\|^2.
\end{equation*}
Taking expectations and summing over $k=0,\ldots,K$ yield
\begin{equation*}
\frac14\sum_{k=0}^{K}\alpha_k\,\E\|h^k\|^2 \le \Phiobj(x^0)-\Phiobj^* +\frac12\sum_{k=0}^{K}\alpha_k\,\E\|h^k-\nabla\Phiobj(x^k)\|^2.
\end{equation*}
Multiplying by $4$ and using $\|x_+^k-x^k\|=\|h^k\|$ proves \eqref{eq:sum_alpha_hk_bound}.
\end{proof}

\subsection{Main convergence theorem}

\begin{theorem}[Convergence rate of \ourslast]\label{thm:rso_masoba_convergence}
Suppose that \emph{(A1)}--\emph{(A3)} hold and that the sampling conditions in Lemma~\ref{lem:unbiased_lifted_oracles} hold. Assume $\Phiobj^*:=\inf_{x\in\R^{d_x}}\Phiobj(x)>-\infty$. Consider the unconstrained \ours updates
\begin{equation*}
\begin{aligned}
x^{k+1}&=x^k-\alpha_k h^k,\\
\boldsymbol Y^{k+1}&=\boldsymbol Y^k-\beta_k\,\widehat{\boldsymbol V}^{k+1},\\
\boldsymbol Z^{k+1}&=\boldsymbol Z^k-\gamma_k\,\widehat{\boldsymbol S}^{k+1},\\
h^{k+1}&=(1-\theta_k)h^k+\theta_k\,\widehat W^{k+1}.
\end{aligned}
\end{equation*}
Define
\begin{equation*}
x_+^k:=x^k-h^k, \quad V_k:=\|x_+^k-x^k\|^2+\|h^k-\nabla\Phiobj(x^k)\|^2.
\end{equation*}
If the stepsizes are constant and coupled as
\begin{equation*}
\alpha_k\equiv \alpha,\quad \beta_k\equiv c_1\alpha,\quad \gamma_k\equiv c_2\alpha,\quad \theta_k\equiv c_3\alpha,
\end{equation*}
with $\alpha>0$ sufficiently small so that Lemmas~\ref{lem:yz_tracking_rso}, \ref{lem:h_tracking_rso}, and \ref{lem:primal_sum_h} apply, define
\begin{equation}\label{eq:def_rho0_matrix_thm}
\rho_0:=\frac{2L_{\nabla\Phi}^2}{c_3^2}+2(C_{xY}C_{Yx}+C_{xZ}C_{Zx}).
\end{equation}
Assume
\begin{equation}\label{eq:rho0_condition_matrix_thm}
\rho_0\le 1/8
\end{equation}
and
\begin{equation}\label{eq:thm_extra_stepsize_cond}
\alpha\le \min\left\{1,\frac{1}{c_3},\frac{1}{8c_3c_{wZ}C_{Zx}}\right\}.
\end{equation}
Then there exist finite constants $C_0,C_1>0$, independent of $K$, such that
\begin{equation}\label{eq:thm_rso_masoba_V_rate_generic}
\frac{1}{K}\sum_{k=0}^{K-1}\E V_k
\le \frac{C_0}{K\alpha}+C_1\alpha.
\end{equation}
In particular, if $\alpha=\Theta(K^{-1/2})$, then
\begin{equation*}
\frac{1}{K}\sum_{k=0}^{K-1}\E\|\nabla\Phiobj(x^k)\|^2=\mathcal O(K^{-1/2}).
\end{equation*}
Moreover, if $R\sim\mathrm{Unif}\{0,\ldots,K-1\}$ is independent of the algorithmic randomness, then
\begin{equation}\label{eq:thm_rso_masoba_random_iterate}
\E\|\nabla\Phiobj(x^R)\|^2\le \frac{2}{K}\sum_{k=0}^{K-1}\E V_k=\mathcal O(K^{-1/2}),
\end{equation}
so to ensure $\E\|\nabla\Phiobj(x^R)\|^2\le \varepsilon$ it suffices to take $K=\mathcal O(\varepsilon^{-2})$.
\end{theorem}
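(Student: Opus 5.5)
The plan is to assemble the four bounds already proved into one closed inequality for $S_h:=\sum_{k=0}^{K-1}\alpha\E\|h^k-\nabla\Phiobj(x^k)\|^2$ and $S_x:=\sum_{k=0}^{K-1}\alpha\E\|x_+^k-x^k\|^2$, and then absorb the feedback terms using \eqref{eq:rho0_condition_matrix_thm} and \eqref{eq:thm_extra_stepsize_cond}.

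We begin with Lemma~\ref{lem:h_tracking_rso} at constant stepsizes. Its bias term $\E\|\E[\widehat W^{k+1}\mid\mathcal F_k]-\nabla\Phiobj(x^k)\|^2$ is handled as follows. By Lemma~\ref{lem:unbiased_lifted_oracles}(i), $\E[\widehat W^{k+1}\mid\mathcal F_k]=D_x(x^k,\boldsymbol Y^k,\boldsymbol Z^k)$, and Lemma~\ref{lem:hypergrad_bias} bounds the bias by $C_{xY}\|\boldsymbol Y^k-\boldsymbol Y_*^k\|^2+C_{xZ}\|\boldsymbol Z^k-\boldsymbol Z_*^k\|^2$. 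We then insert the tracking sums \eqref{eq:Y_track_sum}--\eqref{eq:Z_track_sum} from Lemma~\ref{lem:yz_tracking_rso}. The term $c_3c_{wZ}\sum\alpha^2\E\|\boldsymbol Z^k-\boldsymbol Z_*^k\|^2$ equals $c_3c_{wZ}\alpha$ times the $Z$-tracking sum, so it is bounded by $c_3c_{wZ}\alpha\,(C_{Zx}S_x+C_{Z,0}+C_{Z,1}K\alpha^2)$. Collecting everything gives
\begin{equation*}
S_h\le A_0+\bigl(\rho_0+c_3c_{wZ}C_{Zx}\alpha\bigr)S_x+A_1K\alpha^2,
\end{equation*}
where $A_0$ collects the terms $\E\|e^0\|^2/c_3$, $2(C_{xY}C_{Y,0}+C_{xZ}C_{Z,0})$ and $c_3c_{wZ}C_{Z,0}$ (using $\alpha\le1$), and $A_1$ collects $c_3\sigma_w^2$, $2(C_{xY}C_{Y,1}+C_{xZ}C_{Z,1})$ and $c_3c_{wZ}C_{Z,1}$. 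By \eqref{eq:rho0_condition_matrix_thm} and \eqref{eq:thm_extra_stepsize_cond}, the coefficient of $S_x$ is at most $1/8+1/8=1/4$.

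Next we substitute Lemma~\ref{lem:primal_sum_h}, which gives $S_x\le4(\Phiobj(x^0)-\Phiobj^*)+2S_h$. This yields $S_h\le A_0+(\Phiobj(x^0)-\Phiobj^*)+\tfrac12S_h+A_1K\alpha^2$, hence $S_h\le2A_0+2(\Phiobj(x^0)-\Phiobj^*)+2A_1K\alpha^2$. Plugging this back gives a bound on $S_x$ of the same form, and therefore $S_x+S_h\le C_0+C_1K\alpha^2$. Dividing by $K\alpha$ proves \eqref{eq:thm_rso_masoba_V_rate_generic}, since $\frac1K\sum_k\E V_k=(S_x+S_h)/(K\alpha)$.

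The remaining claims are direct. The inequality $\|\nabla\Phiobj(x^k)\|^2\le2\|h^k\|^2+2\|h^k-\nabla\Phiobj(x^k)\|^2=2V_k$ gives the stationarity bound. For the random iterate, $\E\|\nabla\Phiobj(x^R)\|^2=\frac1K\sum_k\E\|\nabla\Phiobj(x^k)\|^2$ because $R$ is independent of the algorithm. Choosing $\alpha=\Theta(K^{-1/2})$, while still respecting the smallness conditions (which only impose finitely many $K$-independent upper bounds), gives $\mathcal O(K^{-1/2})$ and hence $K=\mathcal O(\varepsilon^{-2})$.

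The main obstacle is consistency of constants: $\rho_0\le1/8$ must be attainable. The constants $C_{Yx}$ and $C_{Zx}$ scale like $1/c_1^2$ and $1/c_2^2$ (plus $C_{Yx}$ feeding into $C_{Zx}$), so this requires choosing $c_3$ large first, then $c_1$ and then $c_2$ large. Those choices in turn tighten the upper bounds $\beta\le\bar\beta$, $\gamma\le\bar\gamma$, $\theta\le1$ and $\alpha\le1/(2L_{\nabla\Phi})$, all of which are absorbed into the admissible $\bar\alpha$. The theorem takes \eqref{eq:rho0_condition_matrix_thm} as a hypothesis, so this ordering only needs to be remarked on, not rederived.
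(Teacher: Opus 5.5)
Your proposal is correct and follows the paper's proof essentially step for step. You start from Lemma~\ref{lem:h_tracking_rso}, bound the bias term via Lemmas~\ref{lem:hypergrad_bias} and \ref{lem:yz_tracking_rso}, and treat the $\alpha^2\|\boldsymbol Z^k-\boldsymbol Z^*(x^k)\|^2$ term as $\alpha$ times the tracking sum, so the coefficient of $S_x$ is $\rho_0+c_3c_{wZ}C_{Zx}\alpha\le 1/4$; you then close with Lemma~\ref{lem:primal_sum_h} and divide by $K\alpha$. The only cosmetic difference is that you fold the $c_3c_{wZ}C_{Z,0}\alpha$ term into the constant up front using $\alpha\le 1$, whereas the paper carries it as $A_{h,2}\alpha$ and absorbs the resulting $1/K$ term into $1/(K\alpha)$ at the end.
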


\begin{remark}[Feasibility of the coupling constants]\label{rem:rho0_feasible}
The condition \eqref{eq:rho0_condition_matrix_thm} is non-vacuous. Indeed, Lemma~\ref{lem:yz_tracking_rso} gives
\begin{equation*}
C_{Yx}=\frac{8L_{Y^*}^2}{c_1^2\mu_g^2}, \quad C_{Zx}=\frac{16L_{Z^*}^2}{c_2^2\mu_g^2} +\frac{2(9B_{ZY}^2+\sigma_{S,2}^2)}{\mu_g^2}C_{Yx},
\end{equation*}
so
\begin{equation*}
\rho_0=\frac{2L_{\nabla\Phi}^2}{c_3^2}+2(C_{xY}C_{Yx}+C_{xZ}C_{Zx})
\end{equation*}
can be made arbitrarily small by choosing $c_1,c_2,c_3$ sufficiently large. Hence there always exist finite coupling constants for which \eqref{eq:rho0_condition_matrix_thm} holds.
\end{remark}

\begin{proof}
Let
\begin{equation*}
\Delta_0:=\Phiobj(x^0)-\Phiobj^*, \quad S_x:=\sum_{k=0}^{K-1}\alpha\,\E\|x_+^k-x^k\|^2, \quad S_h:=\sum_{k=0}^{K-1}\alpha\,\E\|h^k-\nabla\Phiobj(x^k)\|^2.
\end{equation*}

\textbf{Step 1: start from the moving-average tracking inequality. }
Lemma~\ref{lem:h_tracking_rso} yields
\begin{equation}\label{eq:thm_step1_h_track_product}
\begin{aligned}
S_h
\le{}&
\frac{1}{c_3}\E\|h^0-\nabla\Phiobj(x^0)\|^2+2\sum_{k=0}^{K-1}\alpha\,\E\|\E[\widehat W^{k+1}\mid\mathcal F_k]-\nabla\Phiobj(x^k)\|^2 \\
&+\frac{2L_{\nabla\Phi}^2}{c_3^2}S_x+c_3c_{wZ}\sum_{k=0}^{K-1}\alpha^2\E\|\boldsymbol Z^k-\boldsymbol Z^*(x^k)\|^2
+c_3\sigma_w^2K\alpha^2.
\end{aligned}
\end{equation}

\textbf{Step 2: bound the hypergradient bias term via $\boldsymbol Y/\boldsymbol Z$ tracking. }
By Lemma~\ref{lem:unbiased_lifted_oracles} and the definition \eqref{eq:def_Dx_matrix},
\begin{equation*}
\E[\widehat W^{k+1}\mid \mathcal F_k]=D_x(x^k,\boldsymbol Y^k,\boldsymbol Z^k).
\end{equation*}
Lemma~\ref{lem:hypergrad_bias} therefore implies
\begin{equation*}
\E\|\E[\widehat W^{k+1}\mid \mathcal F_k]-\nabla\Phiobj(x^k)\|^2 \le C_{xY}\E\|\boldsymbol Y^k-\boldsymbol Y^*(x^k)\|^2 +C_{xZ}\E\|\boldsymbol Z^k-\boldsymbol Z^*(x^k)\|^2.
\end{equation*}
Summing with weight $\alpha$ and using Lemma~\ref{lem:yz_tracking_rso} give
\begin{equation}\label{eq:bias_sum_product}
\sum_{k=0}^{K-1}\alpha\,\E\|\E[\widehat W^{k+1}\mid \mathcal F_k]-\nabla\Phiobj(x^k)\|^2
\le
(C_{xY}C_{Yx}+C_{xZ}C_{Zx})S_x+B_0+B_1K\alpha^2,
\end{equation}
where
\begin{equation*}
B_0:=C_{xY}C_{Y,0}+C_{xZ}C_{Z,0}, \quad B_1:=C_{xY}C_{Y,1}+C_{xZ}C_{Z,1}.
\end{equation*}

\textbf{Step 3: control the $\alpha^2\|\boldsymbol Z^k-\boldsymbol Z^*(x^k)\|^2$ term. }
Since $\alpha_k\equiv \alpha$,
\begin{equation*}
\sum_{k=0}^{K-1}\alpha^2\,\E\|\boldsymbol Z^k-\boldsymbol Z^*(x^k)\|^2 = \alpha\sum_{k=0}^{K-1}\alpha\,\E\|\boldsymbol Z^k-\boldsymbol Z^*(x^k)\|^2.
\end{equation*}
Applying the second inequality in Lemma~\ref{lem:yz_tracking_rso} yields
\begin{equation}\label{eq:alpha2_Z_bound_product_thm}
\sum_{k=0}^{K-1}\alpha^2\,\E\|\boldsymbol Z^k-\boldsymbol Z^*(x^k)\|^2
\le
\alpha\bigl(C_{Zx}S_x+C_{Z,0}+C_{Z,1}K\alpha^2\bigr).
\end{equation}
Since \eqref{eq:thm_extra_stepsize_cond} includes $\alpha\le 1$, the last term on the right satisfies $K\alpha^3\le K\alpha^2$.

\textbf{Step 4: close the recursion using primal descent. }
Substituting \eqref{eq:bias_sum_product} and \eqref{eq:alpha2_Z_bound_product_thm} into \eqref{eq:thm_step1_h_track_product} gives
\begin{equation}\label{eq:Sh_vs_Sx_product_thm}
S_h
\le
A_{h,0}
+\bigl(\rho_0+c_3c_{wZ}C_{Zx}\alpha\bigr)S_x+A_{h,1}K\alpha^2+A_{h,2}\alpha,
\end{equation}
where
\begin{equation*}
A_{h,0}:=\frac{1}{c_3}\E\|h^0-\nabla\Phiobj(x^0)\|^2+2B_0,
\end{equation*}
\begin{equation*}
A_{h,1}:=2B_1+c_3c_{wZ}C_{Z,1}+c_3\sigma_w^2, \quad A_{h,2}:=c_3c_{wZ}C_{Z,0}.
\end{equation*}
On the other hand, Lemma~\ref{lem:primal_sum_h} gives
\begin{equation}\label{eq:primal_Sx_product_thm}
S_x\le 4\Delta_0+2S_h.
\end{equation}
Combining \eqref{eq:Sh_vs_Sx_product_thm} and \eqref{eq:primal_Sx_product_thm} yields
\begin{equation*}
S_h \le A_{h,0} +\bigl(\rho_0+c_3c_{wZ}C_{Zx}\alpha\bigr)(4\Delta_0+2S_h) +A_{h,1}K\alpha^2+A_{h,2}\alpha.
\end{equation*}
By \eqref{eq:rho0_condition_matrix_thm} and \eqref{eq:thm_extra_stepsize_cond},
\begin{equation*}
\rho_0+c_3c_{wZ}C_{Zx}\alpha\le \frac18+\frac18=\frac14,
\end{equation*}
and therefore
\begin{equation*}
S_h \le A_{h,0}+\Delta_0+\frac12 S_h+A_{h,1}K\alpha^2+A_{h,2}\alpha.
\end{equation*}
Rearranging gives
\begin{equation}\label{eq:Sh_final_product_thm}
S_h
\le
2A_{h,0}+2\Delta_0+2A_{h,1}K\alpha^2+2A_{h,2}\alpha.
\end{equation}
Substituting \eqref{eq:Sh_final_product_thm} into \eqref{eq:primal_Sx_product_thm} yields
\begin{equation}\label{eq:Sx_final_product_thm}
S_x
\le
4\Delta_0+4A_{h,0}+4\Delta_0+4A_{h,1}K\alpha^2+4A_{h,2}\alpha.
\end{equation}

\textbf{Step 5: divide by $K\alpha$. }
Dividing \eqref{eq:Sh_final_product_thm} and \eqref{eq:Sx_final_product_thm} by $K\alpha$ shows that
\begin{equation*}
\frac{S_h}{K\alpha} \le \frac{\widetilde C_{h,0}}{K\alpha} +\widetilde C_{h,1}\alpha +\frac{\widetilde C_{h,2}}{K} \quad \text{and} \quad \frac{S_x}{K\alpha} \le \frac{\widetilde C_{x,0}}{K\alpha} +\widetilde C_{x,1}\alpha +\frac{\widetilde C_{x,2}}{K},
\end{equation*}
for suitable finite constants $\widetilde C_{\cdot,\cdot}$ independent of $K$. Because $\alpha\le 1$, we have $1/K\le 1/(K\alpha)$, so the $1/K$ terms can be absorbed into the $1/(K\alpha)$ term. Summing the last two inequalities proves \eqref{eq:thm_rso_masoba_V_rate_generic} for some finite constants $C_0,C_1>0$.

If $\alpha=\Theta(K^{-1/2})$, then both right-hand sides are $\mathcal O(K^{-1/2})$, which proves the displayed rate for
\begin{equation*}
\frac{1}{K}\sum_{k=0}^{K-1}\E\|h^k-\nabla\Phiobj(x^k)\|^2 \quad\text{and}\quad \frac{1}{K}\sum_{k=0}^{K-1}\E\|x_+^k-x^k\|^2.
\end{equation*}
Finally,
\begin{equation*}
\|\nabla\Phiobj(x^k)\|^2 \le 2\|x_+^k-x^k\|^2+2\|h^k-\nabla\Phiobj(x^k)\|^2 = 2V_k,
\end{equation*}
which yields \eqref{eq:thm_rso_masoba_random_iterate} and the $\mathcal O(\varepsilon^{-2})$ complexity claim.
\end{proof}

\begin{corollary}[An explicit constant-stepsize choice]\label{cor:rso_masoba_rate}
Under the assumptions of Theorem~\ref{thm:rso_masoba_convergence}, fix any constants $c_1,c_2,c_3>0$ such that the step-size ratio conditions of Lemma~\ref{lem:yz_tracking_rso} hold and \eqref{eq:rho0_condition_matrix_thm} is satisfied; such constants exist by Remark~\ref{rem:rho0_feasible}. Let $\bar\alpha>0$ be any sufficiently small constant so that all hypotheses of Theorem~\ref{thm:rso_masoba_convergence} hold whenever $\alpha\le \bar\alpha$. If we choose
\begin{equation*}
\alpha_k\equiv\alpha:=\min\{\bar\alpha,1/\sqrt K\}, \quad \beta_k\equiv c_1\alpha,\quad \gamma_k\equiv c_2\alpha,\quad \theta_k\equiv c_3\alpha,
\end{equation*}
then
\begin{equation*}
\E\|\nabla\Phiobj(x^R)\|^2\le 2\left(\frac{C_0}{K\alpha}+C_1\alpha\right).
\end{equation*}
In particular, if $K\ge 1/\bar\alpha^2$, then $\alpha=1/\sqrt K$ and
\begin{equation*}
\E\|\nabla\Phiobj(x^R)\|^2\le \frac{2(C_0+C_1)}{\sqrt K}.
\end{equation*}
Consequently, to guarantee $\E\|\nabla\Phiobj(x^R)\|^2\le \varepsilon$ it suffices to take $K=\mathcal O(\varepsilon^{-2})$.
\end{corollary}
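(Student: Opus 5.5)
This corollary follows from Theorem~\ref{thm:rso_masoba_convergence} by substituting a specific stepsize, so the proof should take only a few lines. The plan is as follows.

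First, fix $c_1,c_2,c_3$ as in Remark~\ref{rem:rho0_feasible}, so that \eqref{eq:rho0_condition_matrix_thm} holds. Then take $\bar\alpha$ to be the minimum of the thresholds required elsewhere:
\begin{itemize}
\item the thresholds $\bar\beta/c_1$ and $\bar\gamma/c_2$ from Lemma~\ref{lem:yz_tracking_rso};
\item $1/(2L_{\nabla\Phi})$ from Lemma~\ref{lem:primal_sum_h};
\item $1/c_3$, which gives $\theta\le1$ for Lemma~\ref{lem:h_tracking_rso};
\item the three quantities in \eqref{eq:thm_extra_stepsize_cond}.
\end{itemize}
These are finitely many positive constants independent of $K$, so $\bar\alpha>0$. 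Since $\alpha=\min\{\bar\alpha,1/\sqrt K\}\le\bar\alpha$, every hypothesis of Theorem~\ref{thm:rso_masoba_convergence} holds. That theorem gives $\frac1K\sum_{k}\E V_k\le C_0/(K\alpha)+C_1\alpha$, where $C_0,C_1$ depend only on problem constants, initialization and $c_1,c_2,c_3$, and not on $\alpha$ or $K$.

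Next, I would pass to the random iterate. Because $R$ is uniform on $\{0,\ldots,K-1\}$ and independent of the algorithm's randomness, $\E\|\nabla\Phi(x^R)\|^2=\frac1K\sum_k\E\|\nabla\Phi(x^k)\|^2$. Combining this with $\|\nabla\Phi(x^k)\|^2\le 2V_k$ (from $\nabla\Phi=(h-\nabla\Phi)-h$ and $\|x_+^k-x^k\|=\|h^k\|$) gives the first display. If $K\ge1/\bar\alpha^2$, then $1/\sqrt K\le\bar\alpha$, so $\alpha=1/\sqrt K$. Substituting gives $C_0/(K\alpha)=C_0/\sqrt K$ and $C_1\alpha=C_1/\sqrt K$, which is the second display.

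The complexity claim comes from choosing $K\ge\max\{1/\bar\alpha^2,\,4(C_0+C_1)^2/\varepsilon^2\}$, which is $\mathcal O(\varepsilon^{-2})$ for fixed constants. The only delicate point is making sure $C_0$ and $C_1$ really do not depend on $\alpha$. I will check this against the proof of Theorem~\ref{thm:rso_masoba_convergence}: there, terms of order $\alpha/(K\alpha)=1/K$ and $\alpha^2$ were absorbed using $\alpha\le1$, and that step remains valid under the present choice.
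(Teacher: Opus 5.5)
Your proposal is correct and matches the argument the paper intends. The corollary is a direct substitution of $\alpha=\min\{\bar\alpha,1/\sqrt K\}$ into the bound $\frac1K\sum_k\E V_k\le C_0/(K\alpha)+C_1\alpha$ of Theorem~\ref{thm:rso_masoba_convergence}, combined with the random-iterate inequality \eqref{eq:thm_rso_masoba_random_iterate} that the theorem already states, so you could cite it instead of rederiving it. One harmless slip: the decomposition should read $\nabla\Phiobj(x^k)=h^k-(h^k-\nabla\Phiobj(x^k))$, but the sign does not affect the norm bound.
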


\subsection{A quadratic counterexample for naive projected auxiliary HVPs}
\label{app:counterexample}

The correction in \ours is needed because a naive projected auxiliary HVP can change the fixed point of the auxiliary linear system. The following deterministic quadratic example isolates this issue. There is no stochastic objective noise and no lower-level tracking error; the only error comes from replacing the full HVP $Hz$ by the uncorrected projected HVP $QHQz$.

\begin{example}[Naive projected auxiliary HVP can give a nonstationary limit]
\label{ex:naive_projected_hvp_counterexample}
Consider a deterministic bilevel problem with scalar upper variable $x\in\R$ and one lower-level matrix block $Y\in\R^{3\times 1}$, which we identify with a vector $y\in\R^3$:
\begin{equation*}
g(x,y)=\frac{1}{2}y^\top H y-xc^\top y, \quad f(x,y)=\frac{1}{2}x^2+d^\top y,
\end{equation*}
where $H=\operatorname{diag}(1,2,3)$ and $c=d=e_1=(1,0,0)^\top$. The lower problem is strongly convex, and its solution is
\begin{equation*}
y^*(x)=H^{-1}cx=xe_1.
\end{equation*}
The auxiliary linear system for the true hypergradient is
\begin{equation*}
Hz^*=d, \quad z^*=H^{-1}d=e_1.
\end{equation*}
Moreover, since $\nabla_{xy}^2g[z]=-c^\top z$, the true hypergradient is
\begin{equation*}
\nabla\Phi(x)=\nabla_x f(x,y^*(x))-\nabla_{xy}^2g(x,y^*(x))[z^*]=x+c^\top z^*=x+1.
\end{equation*}
Thus the unique stationary point of the true bilevel objective is $x^*=-1$.
Now sample a scaled subspace lifting matrix
\begin{equation*}
P=\sqrt{\frac{3}{2}}\,U_2\in\R^{3\times 2}, \quad Q=PP^\top,
\end{equation*}
where $U_2$ contains the first two columns of a Haar-uniform orthogonal matrix in $O(3)$. Then $\E[Q]=I$, so the lifted lower gradient remains unbiased. However, the naive projected auxiliary HVP has mean $\bar H z:=\E[QHQ]z$. By Lemma~\ref{lem:projection_moments_matrix}(ii), with $m=3$ and $r=2$, we have $a=\frac{21}{20}$ and $b=\frac{3}{20}$, and therefore
\begin{equation*}
\bar H=aH+b\operatorname{tr}(H)I=\operatorname{diag}\left(\frac{39}{20},3,\frac{81}{20}\right).
\end{equation*}
Hence the mean field of the naive projected auxiliary recursion is
\begin{equation*}
z^{k+1}=z^k-\gamma(\bar H z^k-d).
\end{equation*}
For any $0<\gamma<2/\lambda_{\max}(\bar H)$, this recursion converges to $\bar z=\bar H^{-1}d=\frac{20}{39}e_1$, which differs from the true auxiliary solution $z^*=e_1$.

Even if the lower variable is kept exactly at $y^*(x)$, an upper update using this limiting biased auxiliary variable follows the direction
\begin{equation*}
\bar h(x)=x+c^\top \bar z=x+\frac{20}{39}.
\end{equation*}
Thus the recursion
\begin{equation*}
x^{k+1}=x^k-\eta \bar h(x^k)
\end{equation*}
converges, for any $0<\eta<2$, to $\bar x=-\frac{20}{39}$. However, this point is not stationary for the true bilevel objective, since
\begin{equation*}
\nabla\Phi(\bar x)=\bar x+1=\frac{19}{39}\neq 0.
\end{equation*}
Consequently, for any $\varepsilon<\left(\frac{19}{39}\right)^2$, the naive projected auxiliary update cannot guarantee $\E\|\nabla\Phi(x)\|^2\le \varepsilon$ even in this noiseless quadratic problem.
\end{example}

This example shows that correcting the projected auxiliary HVP is necessary for targeting the original bilevel stationary point. The issue is not caused by stochastic variance or imperfect lower-level optimization; it is caused by the biased mean of the uncorrected projected auxiliary HVP.

\section{Memory analysis for Non-GQA Transformer blocks}
\label{sec:memory_compute_analysis}

\subsection{Transformer layer and memory components}
\label{subsec:memory_complexity_analysis}

We follow the LLaMA-style Transformer layer used in the CR-Net memory discussion~\citep{kong2025cr}. Let $b$ be the micro-batch size, $s$ the sequence length, $n$ the hidden size, and $h$ the number of attention heads. In the non-GQA case, the query, key, and value projections all have output width $n$. Folding the batch and sequence dimensions into the row dimension, a layer input can be viewed as $X_l\in\mathbb R^{bs\times n}$. Omitting layer normalization and RoPE, the attention part is
\begin{equation*}
\begin{aligned}
Y_l^Q&:=X_lW_l^Q,\quad Y_l^K:=X_lW_l^K,\quad Y_l^V:=X_lW_l^V,\\
\mathrm{Att}_l&:=\operatorname{softmax}\!\left(\frac{Y_l^Q(Y_l^K)^\top}{\sqrt{n/h}}\right)Y_l^VW_l^O+X_l.
\end{aligned}
\end{equation*}
With SwiGLU and intermediate width $m=\frac{8}{3}n$, the feed-forward part uses $W_l^g,W_l^u\in\mathbb R^{n\times m}$ and $W_l^d\in\mathbb R^{m\times n}$:
\begin{equation*}
X_{l+1}:=\left(\operatorname{SwiGLU}(\mathrm{Att}_lW_l^g)\odot(\mathrm{Att}_lW_l^u)\right)W_l^d+\mathrm{Att}_l.
\end{equation*}

We count scalar memory slots for one trainable decoder block and omit datatype constants. For a full-space method, the non-GQA trainable tuple is
\begin{equation*}
\boldsymbol Y=(Y_{qkv},Y_o,Y_{ff},Y_d),
\end{equation*}
where $Y_{qkv}:=[Y_q,Y_k,Y_v]\in\mathbb R^{n\times3n}$, $Y_o\in\mathbb R^{n\times n}$, $Y_{ff}:=[Y_g,Y_u]\in\mathbb R^{n\times\frac{16}{3}n}$, and $Y_d\in\mathbb R^{\frac{8}{3}n\times n}$. Hence
\begin{equation*}
S_{\rm full}:=|\boldsymbol Y|=3n^2\;(\text{QKV})+n^2\;(\text{O})+\frac{16}{3}n^2\;(\text{FFN up/gate})+\frac{8}{3}n^2\;(\text{FFN down})=12n^2.
\end{equation*}
The trainable-side activations stored by a full-space backward pass have size
\begin{equation*}
A_{\rm full}:=15bsn\;(\text{saved hidden})+2bhs^2\;(\text{attention}),
\end{equation*}
where the $2bhs^2$ term comes from attention-score and attention-probability tensors.

\subsection{BROS memory in the non-GQA case}
\label{subsec:bros_memory_non_gqa}

For \ourslast, we use randomized projectors $P_{qkv},P_o,P_{ff}\in\mathbb R^{n\times r}$ and $P_d\in\mathbb R^{\frac{8}{3}n\times r}$. The projected lower and auxiliary variables have the same block shapes
\begin{equation*}
B_{qkv},C_{qkv}\in\mathbb R^{r\times3n},\quad B_o,C_o\in\mathbb R^{r\times n},\quad B_{ff},C_{ff}\in\mathbb R^{r\times\frac{16}{3}n},\quad B_d,C_d\in\mathbb R^{r\times n}.
\end{equation*}
Thus one projected tuple has size
\begin{equation*}
S_{\rm proj}:=3rn\;(\text{QKV})+rn\;(\text{O})+\frac{16}{3}rn\;(\text{FFN up/gate})+rn\;(\text{FFN down})=\frac{31}{3}rn.
\end{equation*}
\ours stores only random seeds for the projectors and reconstructs them when needed, so there is no persistent projector-storage term. The projected trainable-side activation footprint is
\begin{equation*}
A_{\rm BROS}:=\frac{28}{3}bsn\;(\text{hidden})+2bhs^2\;(\text{attention})+4bsr\;(\text{projected inputs}).
\end{equation*}

The peak-memory proxy below focuses on trainable-side lower-network memory and omits upper-variable slots. We decompose the counted peak-memory proxy as
\begin{equation*}
M_{\rm peak}=M_{\rm state}\;(\text{states})+M_{\rm act}\;(\text{activations})+M_{\rm dir}\;(\text{gradients/directions}).
\end{equation*}
Here $M_{\rm state}$ counts persistent lower and auxiliary variables, $M_{\rm act}$ counts saved trainable-side activations, and $M_{\rm dir}$ counts online lower-gradient and auxiliary-gradient directions. The HVP/JVP autodiff tapes are implementation-dependent, so we report their query scale separately rather than hiding them inside a fixed constant. For \ourslast, the theorem-aligned recursion stores the full lower/auxiliary iterates $(\boldsymbol Y^k,\boldsymbol Z^k)$, so $M_{\rm state}^{\text{\ourslast}}=2S_{\rm full}=24n^2$, while the projected lower and auxiliary gradient directions have size $M_{\rm dir}^{\text{\ourslast}}=2S_{\rm proj}=\frac{62}{3}rn$. Therefore
\begin{equation}
\label{eq:peak_mabros_sgd_final}
M_{\rm peak}^{\text{\ourslast}}=24n^2\;(\text{state})+\frac{28}{3}bsn\;(\text{hidden})+2bhs^2\;(\text{attn})+4bsr\;(\text{proj-act})+\frac{62}{3}rn\;(\text{grad/dir}).
\end{equation}
For the full-space exact single-loop baseline, the persistent lower/auxiliary states are the same, the trainable-side activations remain $A_{\rm full}$, and the lower/auxiliary gradient directions contribute $2S_{\rm full}=24n^2$. Table~\ref{tab:peak_memory_no_gqa} compares the resulting peak-memory proxy.

\begin{table}[t!]
\centering
\caption{\small Non-GQA lower-network peak-memory proxy for full-space exact recursion and \ours on one trainable LLaMA-style decoder block. The HVP/JVP column reports derivative-query scale rather than a backend-specific peak constant.}
\label{tab:peak_memory_no_gqa}
{\small
\setlength{\tabcolsep}{4.5pt}
\setlength{\extrarowheight}{1pt}
\renewcommand{\arraystretch}{1.15}
\resizebox{\textwidth}{!}{%
\begin{tabular}{lccccc}
\toprule
\textbf{Method} & \textbf{Lower/aux. states} & \textbf{Activations} & \textbf{Grad./directions} & \textbf{HVP/JVP scale} & \textbf{Peak proxy} \\
\midrule
Full-space exact & $24n^2$ & $15bsn+2bhs^2$ & $24n^2$ & full-space HVP/JVP & $48n^2+15bsn+2bhs^2$ \\
\rowcolor{boxbg}
\ourslast & $24n^2$ & $\frac{28}{3}bsn+2bhs^2+4bsr$ & $\frac{62}{3}rn$ & projected HVP/JVP & $24n^2+\frac{28}{3}bsn+2bhs^2+4bsr+\frac{62}{3}rn$ \\
\bottomrule
\end{tabular}}}
\end{table}

\subsection{Memory of other bilevel baselines}
\label{subsec:baseline_memory_non_gqa}

We next place representative baselines from the references into the same non-GQA block model under the same peak-memory proxy. MA-SOBA keeps the lower iterate and an auxiliary linear-system iterate, and its lower/auxiliary gradient directions are full-space~\citep{chen2024optimal}. FdeHBO keeps the lower iterate $y_t$, the linear-system auxiliary iterate $v_t$, one-step-lag iterates, and full-space recursive momentum estimators for the lower and auxiliary recursions~\citep{yang2023achieving}. Penalty methods avoid the auxiliary linear system, so the favorable finite-penalty row only counts the lower state and full-space lower-gradient direction~\citep{mehra2021penalty}. Bilevel-ZOFO is omitted because its memory is governed by a PEFT lower-level parameterization and a zeroth-order upper update, rather than the full trainable lower-network model considered in this table~\citep{shirkavand2025bilevel}.

\begin{table}[t!]
\centering
\caption{\small Baseline lower-network peak-memory proxy under the same non-GQA decoder-block model. The HVP/JVP column reports derivative-query scale rather than a backend-specific peak constant.}
\label{tab:additional_baseline_memory}
{\small
\setlength{\tabcolsep}{4.5pt}
\setlength{\extrarowheight}{1pt}
\renewcommand{\arraystretch}{1.15}
\resizebox{\textwidth}{!}{%
\begin{tabular}{lccccc}
\toprule
\textbf{Method} & \textbf{Lower/aux. states} & \textbf{Activations} & \textbf{Grad./directions} & \textbf{HVP/JVP scale} & \textbf{Peak proxy} \\
\midrule
MA-SOBA~\citep{chen2024optimal} & $24n^2$ & $15bsn+2bhs^2$ & $24n^2$ & full-space HVP/JVP & $48n^2+15bsn+2bhs^2$ \\
FdeHBO~\citep{yang2023achieving} & $48n^2$ & $15bsn+2bhs^2$ & $24n^2$ & full-space finite-diff. directions & $72n^2+15bsn+2bhs^2$ \\
Penalty~\citep{mehra2021penalty} & $12n^2$ & $15bsn+2bhs^2$ & $12n^2$ & no auxiliary HVP/JVP & $24n^2+15bsn+2bhs^2$ \\
\rowcolor{boxbg}
\ourslast & $24n^2$ & $\frac{28}{3}bsn+2bhs^2+4bsr$ & $\frac{62}{3}rn$ & projected HVP/JVP & $24n^2+\frac{28}{3}bsn+2bhs^2+4bsr+\frac{62}{3}rn$ \\
\bottomrule
\end{tabular}}}
\end{table}

\section{Additional experiments}
\label{app:additional_experiments}

Throughout this appendix, the experimental shorthand $r/n$ denotes the chosen subspace rank $r$ divided by the relevant layer width or hidden size $n$. In the theoretical notation, the corresponding projector ratio is $r_\ell/m_\ell$ for each block $Y_\ell$.

\subsection{MNIST hyper-data cleaning}
We consider the convex MNIST hyper-data-cleaning problem~\citep{shaban2019truncated}, where the upper-level variable assigns sample weights and the lower-level problem trains a classifier on corrupted labels. The MNIST training split is divided into $20$k noisy training examples and $5$k clean validation examples, and the standard $10$k clean test set is used for final evaluation. The training labels are corrupted by $50\%$ random label noise, while validation and test labels remain clean. The lower model is a convex logistic-regression classifier with per-sample hyper-weights. All methods use the same data split, batch size $128$, seed $42$, $6000$ outer steps, and evaluation interval $100$ steps.
All MNIST runs are conducted on a single NVIDIA A100 GPU ($80$ GB).

The common regularization and optimization settings are $C_r=0.05$, gradient clipping $1.0$, exact HVP/JVP evaluation, power learning-rate decay with inner decay $0.4$ and outer decay $0.6$, and $\delta_{\epsilon}=0.01$ unless specified otherwise. Learning rates are selected under the same validation-based tuning protocol for all methods. We search inner learning rates in $\{0.03,0.06,0.1,0.2\}$ and outer learning rates in $\{0.03,0.05,0.1,0.3,0.6,1.0,2.0,3.0\}$, and report the best-performing values for each method. MA-SOBA uses inner learning rate $0.1$, outer learning rate $3.0$, $\theta=0.08$, $z$ learning rate $0.02$, and $z$ clipping $1000$. \ours uses inner learning rate $0.1$, outer learning rate $2.0$, $\delta_{\epsilon}=0.005$, $\theta=0.08$, $z$ learning rate $0.02$, $z$ clipping $1000$, and experimental rank ratio $r/n=0.25$ in the main run. The rank-ratio ablation uses $r/n\in\{0.01,0.05,0.10,0.25,0.33,0.50\}$. Penalty uses inner learning rate $0.06$, outer learning rate $0.3$, penalty coefficient $0.03$, and inner clean coefficient $0.0$; FdeHBO uses inner learning rate $0.06$, outer learning rate $0.6$, $\delta_{\epsilon}=0.03$, and momentum $0.4$; ZOFO uses inner learning rate $0.06$, outer learning rate $0.05$, perturbation scale $0.02$, and gradient clipping $0.25$.

\begin{table}[t!]
\centering
\caption{Model-configuration hyperparameters for MNIST hyper-data cleaning.}
\label{tab:mnist_hyperparams}
\small
\setlength{\tabcolsep}{4pt}
\renewcommand{\arraystretch}{1.12}
\begin{threeparttable}
\begin{tabular}{cccccc}
\toprule
Lower model & Input dim. & Classes & Lower weights & Upper weights & Layers \\
\midrule
Logistic regression & $784$ & $10$ & $10\times 784$ & $20000$ & $1$ \\
\bottomrule
\end{tabular}
\end{threeparttable}
\end{table}

\subsection{Data-mixture learning}
We evaluate data-mixture learning on $17$ copyright-free Pile domains. The experiment follows a proxy-to-main pipeline: a $280$M GPT-style proxy model first learns domain weights, and a separate $280$M main model is then trained from scratch using the learned mixture. The tokenizer is EleutherAI/gpt-neox-20b~\cite{black2022gpt}, the sequence length is $256$, and the proxy, reference, and main models all use $16$ layers, hidden size $1024$, $16$ attention heads, vocabulary size $50304$, and dropout $0.0$. The reference model is trained for $100000$ steps with the initial data mixture and is used for DoReMi excess-loss evaluation and proxy-stage reference losses.
All data-mixture runs are conducted on $8$ NVIDIA RTX 4090 GPUs ($24$ GB each) with DDP.

Proxy training uses global batch size $128$, micro-batch size $16$, gradient accumulation $1$, bfloat16 precision, gradient checkpointing, seed $42$, warmup $2000$ steps, and $100000$ total steps. All bilevel data-mixture methods start from the uniform weight vector over the $17$ domains and use inner learning rate $0.15$, outer learning rate $0.01$, $C_r=5\times10^{-4}$, gradient clipping $5.0$, smoothing constant $10^{-3}$, minimum weight floor $5\times10^{-5}$, cosine schedule, and Neumann depth $2$. MA-SOBA uses $\delta_{\epsilon}=0.01$, $z$ learning rate $0.02$, $z$ clipping $1000$, $\theta=0.7$, and $\theta_{\min}=0.05$; \ours uses the same moving-average settings with experimental rank ratio $r/n=0.25$. Penalty uses penalty coefficient $0.1$ and $\delta_{\epsilon}=0.001$, FdeHBO uses $\delta_{\epsilon}=0.01$ and momentum $0.9$, and ZOFO uses perturbation scale $0.02$, one perturbation, and upper update interval $10$. Main-model training is from scratch with the learned final domain weights, total steps $200000$, global batch size $128$, micro-batch size $16$, learning rate $3\times10^{-4}$, minimum learning rate $3\times10^{-5}$, warmup ratio $0.06$, weight decay $0.1$, gradient clipping $1.0$, and bfloat16 precision.

\begin{table}[t!]
\centering
\caption{Model-configuration hyperparameters for data-mixture learning.}
\label{tab:data_mixture_hyperparams}
\small
\setlength{\tabcolsep}{4pt}
\renewcommand{\arraystretch}{1.12}
\begin{threeparttable}
\begin{tabular}{cccccc}
\toprule
Parameters & Hidden & Intermediate & KV heads & Heads & Layers \\
\midrule
$280$M & $1024$ & $4096$ & $16$ & $16$ & $16$ \\
\bottomrule
\end{tabular}
\end{threeparttable}
\end{table}

In addition to the main-model loss curves in Figure~\ref{fig:data_mixture_main}, we report proxy-stage system profiles for the data-mixture baselines. Table~\ref{tab:data_mixture_memory} summarizes the peak memory and average wall-clock time per proxy-training step. \ours has the smallest memory footprint among all compared methods, reducing peak memory to $55.1\%$ of MA-SOBA while keeping the step time close to MA-SOBA, Penalty, and FdeHBO. Compared with ZOFO, which attains a similar final loss but relies on slower zeroth-order updates, \ours uses less memory and runs substantially faster per step. These results show that the randomized-subspace recursion improves memory efficiency without materially compromising proxy-stage training efficiency.

\begin{table}[t!]
\centering
\caption{Memory and per-step time for different algorithms in data-mixture learning.}
\label{tab:data_mixture_memory}
\small
\setlength{\tabcolsep}{7pt}
\renewcommand{\arraystretch}{1.12}
\begin{tabular}{lccc}
\toprule
Method & Peak memory & Memory vs. MA-SOBA & Per-step time (s) \\
\midrule
DoReMi~\citep{xie2023doremi} & $9447.0$ MB & $70.0\%$ & $0.149$ \\
MA-SOBA~\citep{chen2024optimal} & $13496.9$ MB & $100.0\%$ & $0.278$ \\
Penalty~\citep{mehra2021penalty} & $10190.5$ MB & $75.5\%$ & $0.272$ \\
FdeHBO~\citep{yang2023achieving} & $12896.3$ MB & $95.6\%$ & $0.260$ \\
ZOFO~\citep{shirkavand2025bilevel} & $10047.1$ MB & $74.4\%$ & $0.556$ \\
\rowcolor{boxbg}
\ourslast & $7436.0$ MB & $55.1\%$ & $0.301$ \\
\bottomrule
\end{tabular}
\end{table}

\subsection{Hyper-representation learning}
We evaluate CIFAR-10 hyper-representation learning~\citep{krizhevsky2009learning}, where the upper-level objective evaluates the representation and the lower-level problem trains a classifier under the current representation. The data split is $45$k training examples, $5$k validation examples, and $10$k test examples. The lower model is a ResNet-18 trained from scratch. All methods are trained for $30000$ steps with batch size $128$, seed $42$, CUDA with TF32 enabled, cosine decay, gradient clipping $20.0$, and exact HVP/JVP evaluation; we record wall-clock time and test accuracy at each evaluation point.
All CIFAR-10 runs are conducted on a single NVIDIA A100 GPU ($80$ GB).

The common regularization is $C_r=5\times10^{-4}$, with inner decay $0.4$ and outer decay $0.6$. Learning rates are selected under the same validation-based tuning protocol for all methods. We search inner learning rates in $\{0.05,0.08,0.1,0.2,0.4\}$ and outer learning rates in $\{0.005,0.008,0.01,0.02,0.05,0.1\}$, and report the best-performing values for each method. MA-SOBA uses inner learning rate $0.4$, outer learning rate $0.1$, $\delta_{\epsilon}=0.005$, $\theta=0.1$, $\theta_{\min}=0.05$, $z$ learning rate $0.1$, and $z$ clipping $4000$. \ours uses the same inner and outer learning rates, $\delta_{\epsilon}=0.005$, $\theta=0.1$, $z$ learning rate $0.1$, $z$ clipping $4000$, and experimental rank ratio $r/n=0.25$ in the main run. The ablation compares $r/n\in\{0.50,0.25,0.10\}$. Penalty uses inner learning rate $0.08$, outer learning rate $0.008$, penalty coefficient $0.1$, and $\delta_{\epsilon}=0.01$; FdeHBO uses inner learning rate $0.1$, outer learning rate $0.01$, $\delta_{\epsilon}=0.01$, and momentum $0.95$; ZOFO uses inner learning rate $0.1$, outer learning rate $0.02$, and perturbation scale $0.02$.

\begin{table}[t!]
\centering
\caption{Model-configuration hyperparameters for CIFAR-10 hyper-representation learning.}
\label{tab:hyper_representation_hyperparams}
\small
\setlength{\tabcolsep}{4pt}
\renewcommand{\arraystretch}{1.12}
\begin{threeparttable}
\begin{tabular}{cccccc}
\toprule
Parameters & Input size & Stem width & Stage widths & Classes & Layers \\
\midrule
$\approx 11.2$M & $32\times32$ & $64$ & $64/128/256/512$ & $10$ & $18$ \\
\bottomrule
\end{tabular}
\end{threeparttable}
\end{table}

Figure~\ref{fig:hyper_representation_appendix} and Table~\ref{tab:hyper_representation_appendix} show that \ours with experimental rank ratio $r/n=0.25$ obtains $78.03\%$ final test accuracy and $78.28\%$ best test accuracy, closely matching MA-SOBA ($78.57\%$ final, $78.87\%$ best) and substantially outperforming FdeHBO, Penalty, and ZOFO. The rank-ratio ablation shows that $r/n=0.50$ reaches $79.22\%$, while the more aggressive $r/n=0.10$ still maintains $77.68\%$ final accuracy.

\begin{figure}[t!]
\centering
\begin{subfigure}[t]{0.45\linewidth}
\centering
\includegraphics[width=\linewidth]{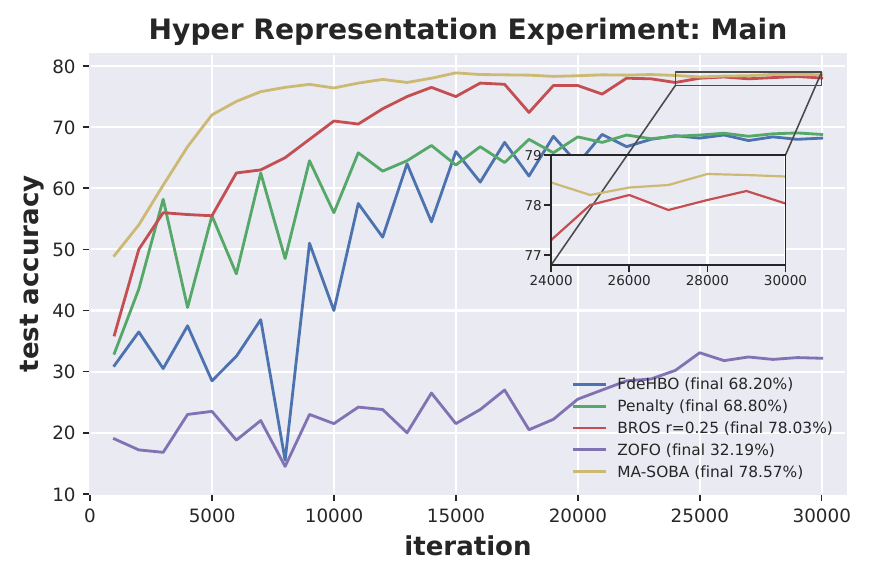}
\caption{Main comparison.}
\end{subfigure}
\hfill
\begin{subfigure}[t]{0.45\linewidth}
\centering
\includegraphics[width=\linewidth]{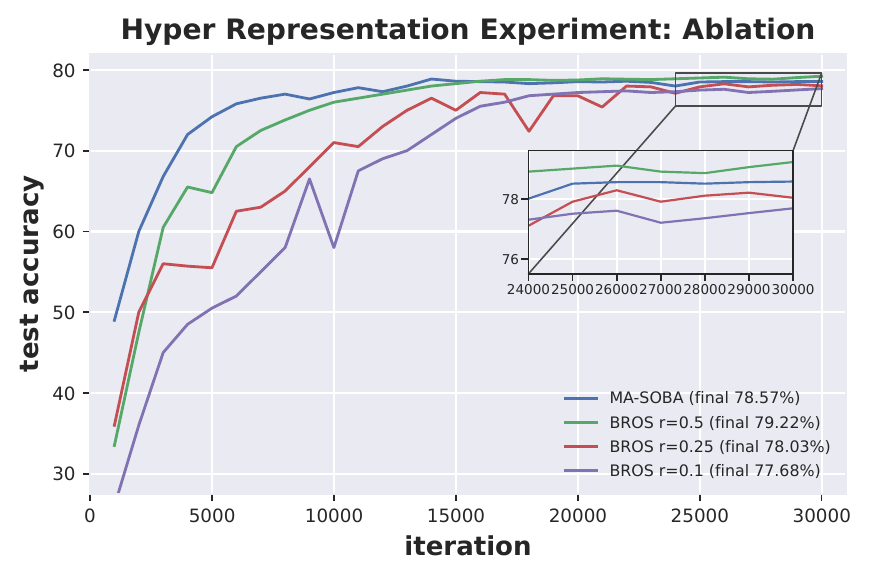}
\caption{Rank-ratio ablation.}
\end{subfigure}
\caption{\textbf{Hyper-representation learning.} CIFAR-10 test accuracy for main baselines and \ours rank-ratio ablations.}
\label{fig:hyper_representation_appendix}
\end{figure}

\begin{table}[t!]
\centering
\caption{CIFAR-10 hyper-representation results.}
\label{tab:hyper_representation_appendix}
\small
\setlength{\tabcolsep}{6pt}
\renewcommand{\arraystretch}{1.18}
\begin{tabular}{lcccc}
\toprule
Method & Final step & Final test acc. & Best test acc. & Best step \\
\midrule
FdeHBO~\citep{yang2023achieving} & $30000$ & $68.20\%$ & $68.71\%$ & $26000$ \\
Penalty~\citep{mehra2021penalty} & $30000$ & $68.80\%$ & $69.05\%$ & $29500$ \\
MA-SOBA~\citep{chen2024optimal} & $30000$ & $78.57\%$ & $78.87\%$ & $15000$ \\
ZOFO~\citep{shirkavand2025bilevel} & $30000$ & $32.19\%$ & $33.09\%$ & $25000$ \\
\rowcolor{boxbg}
\ourslast & $30000$ & $78.03\%$ & $78.28\%$ & $29000$ \\
\bottomrule
\end{tabular}
\end{table}

\subsection{CIFAR-100 ViT sample reweighting}
We also evaluate a larger bilevel sample-reweighting task on CIFAR-100 with $20\%$ label noise. The original $50$k training images are split into $45$k noisy training examples and $5$k clean validation examples; the test set remains clean. The lower model is a small ViT trained from scratch with patch size $4$, convolutional stem, embedding dimension $256$, depth $8$, and $4$ heads. The upper-level objective selects sample weights using the clean validation split.
All ViT runs are conducted on a single NVIDIA RTX PRO 6000 Blackwell GPU ($96$ GB).

All methods use the same AdamW training recipe with batch size $512$, model learning rate $10^{-3}$, upper learning rate $10^{-4}$, cosine decay, $1500$ warmup steps, weight decay $0.08$, and standard image augmentations including random crop, horizontal flip, RandAugment, random erasing, mixup/cutmix, and label smoothing. Low-rank methods use experimental rank ratio $r/n=0.25$. Evaluation is performed every $500$ steps. Model selection and early stopping use validation accuracy, with maximum $50000$ steps, minimum $12000$ steps, patience $6000$, and minimum improvement $0.001$.

\begin{table}[t!]
\centering
\caption{Model-configuration hyperparameters for CIFAR-100 ViT sample reweighting.}
\label{tab:vit_hyperparams}
\small
\setlength{\tabcolsep}{4pt}
\renewcommand{\arraystretch}{1.12}
\begin{threeparttable}
\begin{tabular}{cccccc}
\toprule
Parameters & Patch size & Embed dim. & MLP dim. & Heads & Depth \\
\midrule
$\approx 6.4$M & $4$ & $256$ & $1024$ & $4$ & $8$ \\
\bottomrule
\end{tabular}
\end{threeparttable}
\end{table}

Table~\ref{tab:vit_appendix} shows that \ours attains the best validation and test accuracy while using much less peak memory than full-space hypergradient baselines. In particular, \ours reaches $49.88\%$ best validation accuracy and $49.05\%$ test accuracy at the best-validation checkpoint, while using $6.42$ GB peak memory compared with more than $23$ GB for MA-SOBA, FdeHBO, and Penalty. ZOFO uses less memory than full-space baselines, but its test accuracy is substantially lower than \ourslast.

\begin{table}[t!]
\centering
\caption{CIFAR-100 ViT sample-reweighting results.}
\label{tab:vit_appendix}
\small
\setlength{\tabcolsep}{4pt}
\renewcommand{\arraystretch}{1.12}
\begin{tabular}{lccccc}
\toprule
Method & Best step & Best val. acc. & Test acc. at best val. & Peak memory & Memory vs. MA-SOBA \\
\midrule
Penalty~\citep{mehra2021penalty} & $26000$ & $37.56\%$ & $37.04\%$ & $23.59$ GB & $99.9\%$ \\
MA-SOBA~\citep{chen2024optimal} & $4500$ & $47.36\%$ & $48.07\%$ & $23.62$ GB & $100.0\%$ \\
FdeHBO~\citep{yang2023achieving} & $4500$ & $48.04\%$ & $47.76\%$ & $23.16$ GB & $98.1\%$ \\
ZOFO~\citep{shirkavand2025bilevel} & $17000$ & $42.68\%$ & $41.99\%$ & $11.20$ GB & $47.4\%$ \\
\rowcolor{boxbg}
\ourslast & $6000$ & $49.88\%$ & $49.05\%$ & $6.42$ GB & $27.2\%$ \\
\bottomrule
\end{tabular}
\end{table}

\subsection{Licenses of used datasets and models}
\label{subsec:asset_licenses}

We use publicly available datasets, tokenizer assets, and model architectures for research experiments. We summarize their sources and associated licenses or terms below:
\begin{itemize}
    \item \textbf{MNIST}~\citep{lecun2010mnist}: The MNIST dataset is publicly available from the original MNIST database at \url{http://yann.lecun.com/exdb/mnist/}. The original public page does not explicitly state a separate dataset license; we use MNIST only as a standard research benchmark for hyper-data cleaning and do not redistribute it.
    \item \textbf{CIFAR-10 and CIFAR-100}~\citep{krizhevsky2009learning}: The CIFAR datasets are publicly available from the University of Toronto CIFAR page at \url{https://www.cs.toronto.edu/~kriz/cifar.html}. The UCI Machine Learning Repository entry for CIFAR-10 lists the Creative Commons Attribution 4.0 International (CC BY 4.0) license; the original CIFAR-100 source page does not state a separate explicit license. We use both datasets only as standard research benchmarks and do not redistribute them.
    \item \textbf{Copyright-free Pile domains}~\citep{gao2020pile}: The data-mixture experiment follows the copyright-free Pile-domain setting used in DoReMi~\citep{xie2023doremi}. We use the text domains only for research training and evaluation, do not redistribute the data, and follow the terms of the corresponding source components.
    \item \textbf{GPT-NeoX tokenizer}~\citep{black2022gpt}: We use the tokenizer associated with EleutherAI/gpt-neox-20b, available from the \href{https://huggingface.co/EleutherAI/gpt-neox-20b}{Hugging Face model card}. The model card lists the tokenizer and model assets under the Apache License 2.0.
    \item \textbf{ResNet and ViT architectures}~\citep{he2016deep,dosovitskiy2020image}: The CIFAR-10 and CIFAR-100 experiments train ResNet-18 and small ViT models from scratch. We do not use or release pretrained ResNet or ViT checkpoints.
\end{itemize}

\section{Broader impacts}
\label{app:broader_impacts}

This work focuses on improving the memory efficiency of stochastic bilevel optimization for neural and language-model training. By reducing the lower-level and auxiliary memory footprint of hypergradient computation, \ours may help run bilevel tuning, data-mixture learning, and sample-reweighting experiments on devices with lower memory capacity, thereby reducing resource consumption and carbon emissions associated with large-scale training. These efficiency gains may also make bilevel optimization techniques more accessible to smaller institutions and research teams operating under constrained compute budgets. Aiming to advance the field of bilevel optimization, we do not identify additional societal consequences that require further discussion here.

\end{document}